\definecolor{darkgreen}{rgb}{0.0, 0.5, 0.0}%
\definecolor{NF}{RGB}{102.0, 0.0, 102.0}
\definecolor{AvgPol}{RGB}{102.0, 178.0, 255}
\definecolor{DRL}{RGB}{255, 175, 1}
\definecolor{TABLE}{RGB}{215, 238, 255}
\newcommand{\cmark}{\textcolor{darkgreen}{\ding{51}}} %
\newcommand{\xmark}{\textcolor{red}{\ding{55}}}    %
\definecolor{linkcolor}{RGB}{80, 100, 180}
\newcommand{\bsG}{{\boldsymbol{G}}}
\title{Solving Continuous Mean Field Games: Deep Reinforcement Learning for Non-Stationary Dynamics}
\author{%
  Lorenzo Magnino\thanks{Work done during period at NYU Shanghai Center for Data Science and the NYU-ECNU Institute
of Mathematical Sciences at NYU Shanghai. Contacts: \texttt{lm2183@cam.ac.uk, 
kshao@kth.se,
zdwu@ucla.edu, shen.patrick.jiacheng@nyu.edu, mathieu.lauriere@nyu.edu}.}\\
  University of Cambridge \\
  \And
  Kai Shao\thanks{Work done during period at NYU Shanghai} \\
  KTH Royal Institute of Technology  \\
  \AND
   Zida Wu \\
  University of California, Los Angeles \\
  \And
  Jiacheng Shen \\
  NYU Center for Data Science \\
  \And
  Mathieu Lauri{\`e}re \\
  NYU Shanghai \\
}
\begin{document}

\maketitle

\begin{abstract}
    Mean field games (MFGs) have emerged as a powerful framework for modeling interactions in large-scale multi-agent systems. Despite recent advancements in reinforcement learning (RL) for MFGs, existing methods are typically limited to finite spaces or stationary models, hindering their applicability to real-world problems. This paper introduces a novel deep reinforcement learning (DRL) algorithm specifically designed for non-stationary continuous MFGs. The proposed approach builds upon a Fictitious Play (FP) methodology, leveraging DRL for best-response computation and supervised learning for average policy representation. Furthermore, it learns a representation of the time-dependent population distribution using a Conditional Normalizing Flow. To validate the effectiveness of our method, we evaluate it on three different examples of increasing complexity. By addressing critical limitations in scalability and density approximation, this work represents a significant advancement in applying DRL techniques to complex MFG problems, bringing the field closer to real-world multi-agent systems.
\end{abstract}

\section{Introduction}

Learning in multiplayer games poses significant challenges due to the interplay between strategic decision-making and the dynamics, often non-stationary, interactions among agents. Deep reinforcement learning (DRL) has recently achieved remarkable success in two-player or small-team games such as Go~\citep{silver2016mastering,silver2018general}, chess~\citep{silver2017masteringchess}, poker~\citep{heinrich2016deep}, StarCraft~\citep{samvelyan2019starcraft}, and Stratego~\citep{perolat2022mastering}. However, scaling these methods to large populations of agents remains difficult. As the number of agents grows, the joint strategy space becomes prohibitively large, and traditional multi-agent reinforcement learning (MARL) techniques often become computationally intractable; see~\citep{busoniu2008comprehensive,yang2020overview,zhang2021multi,gronauer2022multi,wong2023deep} for recent reviews.

Mean Field Games (MFGs)~\citep{MR2295621,huang2006largeMKV} offer a principled framework for approximating such large-scale systems by modeling the interaction between a single representative agent and an evolving population distribution. Drawing on tools from statistical physics and optimal control, MFGs reduce the dimensionality of the problem by considering the limiting behavior as the number of agents tends to infinity. At equilibrium, each agent solves a Markov decision process (MDP) given the population distribution, and the distribution itself must evolve consistently with the agents' policies. Since individual agents are infinitesimal in the limit, they do not affect the population, allowing each agent to ignore second-order feedback effects. 

This approximation is particularly relevant for applications involving large populations and continuous state-action spaces, such as economics~\citep{lachapelle2016efficiency,achdou2022income}, finance~\citep{carmona2017mean,cardaliaguet2018mean,carmona2021applications}, engineering~\citep{djehiche2017mean}, crowd motion~\citep{lachapelle2011mean,djehiche2017mean,achdou2018mean}, flocking and swarming~\citep{fornasier2014mean,nourian2010synthesis}, cloud computing~\citep{hanif2015mean,mao2022mean}, and telecommunication networks~\citep{yang2016distributed,ge2019energy}. In many of these domains, both the state dynamics and control policies are naturally continuous, and the population distribution often evolves over time rather than remaining stationary.

While classical numerical solvers for MFGs can handle low-dimensional problems in simple domains~\citep{achdou2020mean}, they are limited by the curse of dimensionality and do not scale to complex or high-dimensional settings. Deep learning methods have been proposed to solve high-dimensional problems (see e.g.~\citep{hu2024recent} for an overview) but these methods generally struggle to solve MFGs in very complex environments. 
To address these limitations, recent work has turned to model-free reinforcement learning (RL) as promising approaches to solving MFGs~\citep{guo2019learningMFG,subramanian2019reinforcementMFG,elie2020convergence,fu2019actorcriticMFG,cui2021approximately,angiuli2020unifiedRLMFGMFC,yardim2023policy,ocello2024finite}; see~\citep{lauriere2022learning} for a recent survey. However, most existing methods are restricted to {\bf finite state and action spaces} and {\bf stationary population distributions}. 
Extending RL frameworks to continuous-space, non-stationary MFGs presents significant challenges, especially for learning time-dependent population dynamics and solving the resulting fixed-point problems. In contrast to MDPs, where the goal is to optimize a single agent’s trajectory, solving MFGs requires learning both an optimal response and a consistent population evolution. \textbf{To the best of our knowledge, no existing RL algorithms are capable of learning the solution of non-stationary MFGs with continuous state and action spaces}.

\begin{mdframed}[hidealllines=true,backgroundcolor=blue!5, innertopmargin=5pt, innerbottommargin=7pt]
\paragraph{Contributions.}
This paper introduces Density-Enhanced Deep-Average Fictitious Play (DEDA-FP) (see Figure \ref{fig:our_model}), a novel deep reinforcement learning (DRL) algorithm for \textbf{non-stationary} Mean Field Games (MFGs) with \textbf{continuous} state and action spaces.  Our approach extends \textbf{Fictitious Play (FP)}, a classical game-theoretic learning scheme that iteratively updates each agent’s policy to optimally respond to the evolving population behavior.

To address the challenge of averaging neural policies, we use {\bf DRL} (Soft Actor-Critic and Proximal Policy Optimization) to compute approximate best responses and \textbf{supervised learning} to represent the averaged policy across FP iterations. This hybrid strategy ensures scalability and accurate policy approximation.

We also train a time-dependent Conditional Normalizing Flow (CNF) to model the non-stationary evolution of the \textbf{population distribution}, enabling \textbf{sampling} from the equilibrium mean field and \textbf{density estimation}.  This model accurately captures MFGs with local dependence on population density, unlike empirical distributions, and \textbf{improves sampling time efficiency tenfold} compared to our benchmarks.

We validate our method with three experiments of increasing complexity and provide an error propagation analysis (Theorem \ref{theorem: convergence_of_dedafp}). Our contributions address key challenges in applying RL to MFGs, including \textbf{time-dependence}, \textbf{continuous spaces}, and \textbf{local density effects}, representing a significant step toward scalable, model-free solutions for real-world multi-agent systems.

\end{mdframed}
\setlength{\belowcaptionskip}{-0.2cm}
\begin{figure}
    \centering
    \includegraphics[width=1\linewidth]{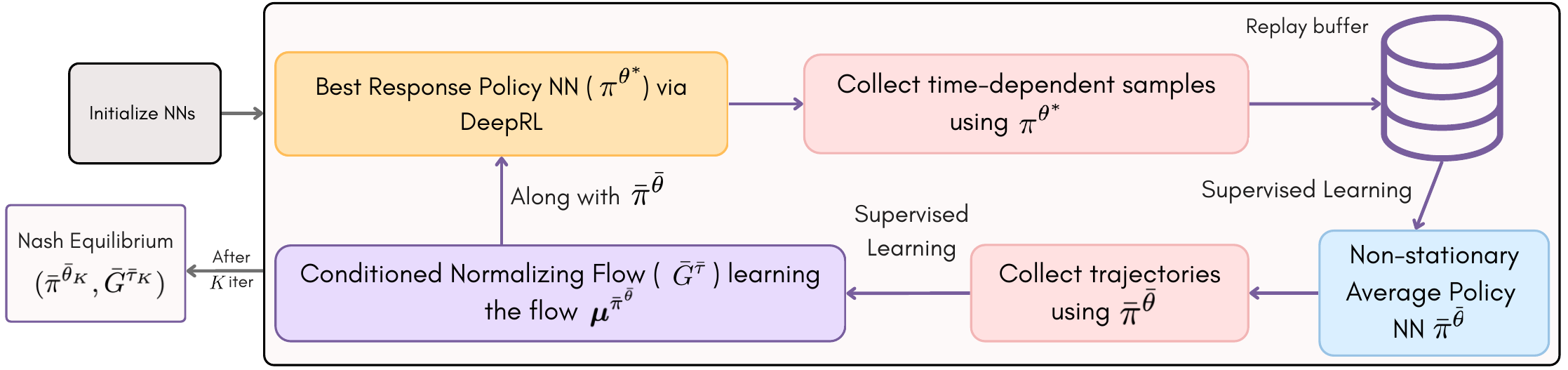}
    \caption{\small Overview of our \textbf{DEDA-FP} model. Our framework uses three main steps, built upon the Fictitious Play algorithm, to fully solve the MFG problem (details in Section \ref{sec: approach}): (1) computation of the {best response} using \textbf{\textcolor{DRL}{Deep RL algorithms}}; (2) learning a \textbf{\textcolor{AvgPol}{policy neural network}} to approximate the  {average policy} over past policies; and (3) learning a \textbf{\textcolor{NF}{Time-Conditioned Normalizing Flow}} to approximate the average distribution over past mean-field flows. }
    \label{fig:our_model}
\end{figure}

\subsection{Related Work}
The following works are particularly relevant to our context and help clarify our methodological contributions (see Table~\ref{tab:related_work} for a summary).
\citep{perrin2021meanflock} develop a DRL algorithm based on \textbf{Fictitious Play (FP)} for continuous state and action spaces. However, their focus is restricted to \textbf{stationary MFGs} (i.e., time-invariant mean fields), and their method does \emph{not} learn the Nash equilibrium policy. Instead, it learns a collection of best responses from which a player may sample to imitate the average behavior.
\citep{lauriere2022scalable} propose two DRL algorithms and in particular a variant of FP which does learn the \textbf{Nash equilibrium policy}. While we draw inspiration from their approach to represent the average policy, their method is limited to \textbf{discrete state and action spaces}.
\citep{zaman2020rlnonstat} tackle \textbf{non-stationary MFGs} using actor-critic methods in a discrete-time \textbf{linear-quadratic (LQ)} setting. Although their model operates in continuous state and action spaces, it is confined to the LQ regime, where optimal policies are deterministic and linear.
\citep{angiuli2023deep} study DRL algorithms for MFGs with continuous states and actions, and include a generative model for the population distribution. However, they only address \textbf{stationary LQ problems}, and their generative model can produce samples from the mean field but \emph{cannot} estimate the \textbf{density at a given location}, making it unsuitable for models with local mean field dependence.

\begin{table}[h!]
\centering
\small
\renewcommand{\arraystretch}{1.5} %
\begin{tabular}{c|c|c|c|c|c}
\textbf{Method} & \textbf{Cont. space} & \textbf{General $r,P$} & \textbf{NE policy} & \textbf{Local. dep.} & \textbf{Non-stat.} \\
\rowcolor{blue!20} %
\textbf{DEDA-FP} & \cmark & \cmark & \cmark & \cmark &  \cmark \\
\cite{zaman2020rlnonstat} & \cmark & \xmark & \cmark & \xmark & \cmark \\
\cite{perrin2021meanflock} & \cmark & \cmark & \xmark & \cmark  & \xmark\\
\cite{lauriere2022scalable} & \xmark  & \cmark & \cmark & \cmark & \cmark  \\
\cite{angiuli2023deep} &  \cmark &  \xmark &  \cmark &  \xmark & \xmark\\ 
\end{tabular}
\vspace{5pt}
\caption{\small Comparison between our approach and related works. Our approach is the first to learn the Nash equilibrium policy and distribution for continuous space non-stationary MFGs with general dynamics and rewards, including possibly local dependence on the mean field.}
\label{tab:related_work}
\end{table}

\vspace{-0.35cm}
\section{Non-stationary continuous MFGs}
\label{sec:setting}

{\bf Notations. } Let $\cX$ and $\cA$ be respectively the state and action spaces, which can be continuous. To fix the ideas, we will take $\cX = \RR^d$ and $\cA = \RR^k$, where $d$ and $k$ are the respective dimensions. Let $\cP(\cX)$ and $\cP(\cA)$ denote the sets of probability distributions on $\cX$ and $\cA$, respectively. Let $N_T$ be the number of time steps. We will use bold symbols for sequences, i.e., functions of time. 
The state distribution of the population at time $t$ is called the \textbf{mean field} and will be denoted by $\mu_t \in \cP(\cX)$. We denote by $\mu_0$ the initial distribution, which is assumed to be fixed and known from the players.

{\bf Dynamics. } To define the mean field game, we first need to define the dynamics of the state $x_t$ of a representative player when the mean field sequence $\bsmu = (\mu_t)_{t=0,\dots,N_T}$ is given. We consider a general dynamics: if the agent is in state $x_t$, takes action $a_t$ and the mean field is currently $\mu_t$, then the next state is sampled according to:
\begin{equation}\label{dyn}
    x_{t+1} \sim P_t(\cdot|x_t, a_t, \mu_t),
\end{equation}
where $P: \{0,\dots,N_T\} \times \cX \times \cA \times \cP(\cX) \to \cP(\cX)$ is the \emph{transition kernel}. A typical setting is when the transitions are given by a transition function, namely,
$
    x_{t+1} = F_t(x_t, a_t, \mu_t, \epsilon_{t}),
$
where $F:\{0,\dots,N_T\} \times \cX\times\cA\times\cP(\cX)\rightarrow\cX$ is the \emph{transition function} and $(\epsilon_{t})_{t\ge0}$ is a sequence of i.i.d. noises. A typical example is the time-discretization of a continuous time stochastic differential equation (SDE): $\cX = \RR^d, \cA = \RR^k$ and 
$
    F_t(x_t, a_t, \mu_t, \epsilon_{t}) = x_t + b_t(x_t, a_t, \mu_t) + \sigma\epsilon_{t},
$
where $b:\{0,\dots,N_T\} \times \cX\times\cA\times\cP(\cX)\rightarrow\cX$ is the \emph{drift}, and $\sigma$ is the \emph{volatility}. This setting is particularly relevant because most of the MFG literature focuses on SDE-type dynamics, see e.g.~\citep{MR3752669}. But we stress that here, only time is discretized; space is continuous, in contrast with most of the literature on RL for MFGs, see e.g.~\citep{lauriere2022learning}.
From the dynamics of the individual player, we can deduce a dynamics for the whole population, i.e., a transition from $\mu_{t}$ to $\mu_{t+1}$, as explained below after introducing the notion of policy.

{\bf Policies. } A policy $\bspi$ for an individual player is a function from $\{0,\dots,N_T\} \times \cX$ to $\cP(\cA)$ and $\pi_t(\cdot|x) = \pi(\cdot|t,x)$ is the distribution used to pick the next action when time is $t$ and the player's state is $x$. As is common in the MFG literature~\citep{guo2019learningMFG,elie2020convergence,cui2021approximately,guo2023general}, we consider here decentralized policies, which are functions of the individual state and not of the population distribution. This is because, at equilibrium, the mean field is completely determined by the policy and the initial population distribution, which is assumed to be known.
If the whole population uses the same policy $\bspi$, this induces a mean field flow ${\bsmu}^{\bspi} = ({\mu}^{\bspi}_t)_{t\ge0}$, which is determined by the evolution: ${\mu}^{\bspi}_{0} = \mu_0$, and for $t = 0,\dots,N_T-1$, 
\begin{equation}
    \label{evol:mu-pi}
    {\mu}^{\bspi}_{t+1}(x') = \int_{\cX \times \cA} {\mu}^{\bspi}_t(x)\pi_t(a|x)P(x'|x, a,\mu^{\bspi}_t) dx da, \quad x' \in \cX.
\end{equation}

{\bf Rewards. } 
The reward is a function $r:\cX\times\cA\times\cP(\cX)\rightarrow\mathbb{R}$. When the mean field flow is given by $\bsmu = (\mu_t)_{t\ge0}$, the representative agent aims to find a policy $\bspi=(\pi_t)_{t\ge0}$ that maximizes the total expected reward: with $({\mu}^{\bspi}_t)_{t\ge0}$ is the distribution flow induced by $\bspi$, see~\eqref{evol:mu-pi},
\begin{equation}
    J_{\bsmu}(\bspi)
    =\mathbb{E}_{x_t,a_t}\Big[\sum_{t=0}^{N_T}r(x_t,a_t,\mu_t)\Big]
    =\sum_{t=0}^{N_T}\int_{\cX\times\cA} {\mu}_t^\bspi(x)\pi_t(a|x)r(x, a,\mu_t) dx da.
\end{equation}
We will use the notations $J_{\bsmu}(\bspi)$ and $J(\bspi, \bsmu)$ interchangeably.

{\bf Nash equilibrium. } 
We focus on the notion of Nash equilibrium, in which no player has any incentives to deviate unilaterally. It is defined as follows in the mean field setting.
\begin{definition}\label{def:nash_equilibrium}
A {\bf mean-field Nash equilibrium (MFNE)} is a pair $(\bsmu^*,\bspi^*)=(\mu_t,\pi_t)_{t\ge0}$ of a sequence of population distributions and policies that satisfies: 
\begin{itemize}[leftmargin=1cm,topsep=0.5pt,itemsep=0.5pt,partopsep=0.5pt, parsep=0.5pt]
    \item [1.] $\bspi^*$ maximizes $\bspi\mapsto J_{\bsmu^*}(\bspi)$;
    \item [2.] For every $t\ge0$, $\mu^*_t$ is the distribution of $x_t$ given by dynamics \eqref{dyn} with $(\bspi^*, \bsmu^*)$.
\end{itemize}
\end{definition}
It can be expressed more concisely using the notion of exploitability, which quantifies how much a player can be better of by deviating from the policy used by the rest of the players. 
\begin{definition}\label{def:exploitability}
The {\bf exploitability} of a policy $\bspi$ is defined as:
$
\displaystyle   
\cE(\bspi)=\max_{\bspi'}J_{\bsmu^{\bspi}}(\bspi')-J_{\bsmu^{\bspi}}(\bspi).$
\end{definition}
Then,  $(\bspi, \bsmu^{\bspi})$ is Nash equilibrium if and only if $\cE(\bspi)=0$. The notion of exploitability has been used in several works to assess the convergence of RL algorithms, see e.g.~\citep{perrin2020fictitious}.

We discuss our assumption in App. \ref{app: model_assumtpion}.
Continuous space implies that the mean field, in $\cP(\cX)$, is a priori infinite-dimensional. Furthermore, the non-stationarity of the model ensures that the policy should be time-dependent, departing from most of the RL literature. Another difficulty is that we want to compute the equilibrium policy, which is in general, a mixed policy, not always learned correctly by algorithms such as FP. Although ad hoc methods have been proposed for linear dynamics and quadratic rewards, an RL algorithm for general models is still lacking.  

\section{Solving Continuous Mean Field Games}\label{sec: approach}

Solving a mean field game necessitates finding the flow $(\bsmu^*,\bspi^*) = (\mu_t, \pi_t)_{t \ge 0}$ that satisfies Def.~\ref{def:nash_equilibrium}. To achieve this, we employ a version of {\bf FP} algorithm. FP was originally introduced in~\citep{brown1951iterative} and adapted to MFGs in~\citep{MR3608094} and~\citep{perrin2020fictitious} respectively in continuous time and discrete time MFGs. Our approach relies on computing the best response against the weighted average of previous distributions and updated the distribution consequently (see details in Appendix \ref{appx: algo}). 
\textbf{However, in continuous action spaces, calculating the best response, learning the average policy, and determining the average distribution present significant challenges due to the problem's infinite dimensionality. }To illustrate the effectiveness of our approach, we proceed with a \textit{step-by-step construction}: first, we present a basic algorithm (\ref{algo:1}), followed by a demonstration of the benefits of learning the equilibrium policy (\ref{algo:2}). Finally, we detail our approach, named \textbf{DEDA-FP} (\ref{alg:main-algo}), which overcomes the limitations highlighted by these preceding methods.

\paragraph{Algo. 1: Simple Approach.} As a first method, we implement a simple version of FP, in the spirit of~\cite{perrin2021meanflock}. At each iteration of FP, the agent learns the best response against the approximated average population distribution using DRL algorithms such as Soft Actor-Critic (SAC) or Proximal Policy Optimization (PPO). Subsequently, this policy is added to a behavioral buffer. The population state at iteration $k$ is approximated by simulating $N-1$ trajectories, each generated by one of $N-1$ policies sampled uniformly from the buffer. Note that this constitutes a two-level approximation: first, we approximate the average policy by sampling from the buffer, and then we mimic the population distribution by sampling trajectories. Details are provided in Appendix~\ref{appx: algo}.
\textbf{However, at the conclusion of this algorithm, we do not obtain either the Nash equilibrium policy or the mean field distribution.}

\textbf{Algo. 2: Learning the Nash equilibrium policy. }  To address the main limitation of the simple approach, we train a policy network to learn the average policy. To this end, we draw inspiration from~\citep{heinrich2016deep,lauriere2022scalable} and use supervised learning with a suitably defined replay buffer. We present here the main ideas and the details are provided in Appendix~\ref{appx: algo}.
Concretely, at every iteration we collect samples from the best response and we train a neural network (NN) to minimize the Negative Log-Likelihood: 
$$ \mathcal{L}_{\text{NLL}}(\theta)= \mathbb{E}_{(t, s, a) \sim \mathcal{M}_{SL}} \left[ -\log{{\pi}}^{\theta}(a | t, s) \right] = -\frac{1}{M} \sum_{i=1}^M \log \mathcal{N}(a_i; \mu_\theta(s_i, t_i), \sigma_\theta(s_i, t_i)) $$
where $\mathcal{N}(\cdot)$ is the Gaussian probability density function, $M$ is the size of the replay buffer $\mathcal{M}_{SL}$ containing all the triples $(t,s,a)$ sampled from the previous policies and $\mu_\theta$ and $\sigma_\theta$ are the mean and standard deviation predicted by the policy network for the state $s_i$ at time $t_i$. 
Furthermore, this approach allows us, at iteration $k$, to compute the best response against $N-1$ agents using the learned average policy, thereby avoiding the need to sample uniformly from the behavioral buffer. Further details are provided in Appendix \ref{appx: algo}.
\begin{mdframed}[hidealllines=true,backgroundcolor=red!5, innertopmargin=0pt, innerbottommargin=3pt]
\paragraph{Remaining Challenges. } 
At this point, the algorithm can learn the {\bf equilibrium policy}, which is one part of the solution to the MFGs (see Def.~\ref{def:nash_equilibrium}). However, the other part, namely the {\bf equilibrium mean field}, is still lacking. Most existing works then approximate the optimal mean field distribution by sampling a large number of trajectories to adequately cover the state space. However, as we will elaborate in Sec.~\ref{sec:experiments}, there are several key limitations to this approach:

{\bf 1.}
Many mean field games derive their complexity and richness from {\bf local interactions}, where the dynamics or rewards depend on the population density (e.g., congestion, entropy maximization). Without a direct model for the mean field distribution, the density must be estimated indirectly (e.g., via Gaussian convolution), which can alter the nature of the problem.

{\bf 2.}
In the {\bf evaluation} or rollout phase, estimating the mean field and its density requires sampling many trajectories at each step. This becomes computationally expensive, especially in state spaces of dimension $d \ge 2$, and can significantly slow down execution.
\end{mdframed}
\vspace{-1.5mm}
For the reasons mentioned above, we propose a novel algorithm, \textbf{Density-Enhanced Deep Average Fictitious Play} solver (DEDA-FP), that fully solves MFGs by learning both the Nash equilibrium policy and the mean field distribution, enabling us to both sample from it and the compute its density.

\subsection{DEDA-FP}
Our method incorporates best response computation using deep reinforcement learning and learns the average policy with supervised learning, as depicted in the previous scheme. Furthermore, it completely solves the mean field game problem by learning the Nash equilibrium mean field distribution.
Inspired by and extending the approach of \cite{perrin2021meanflock}, we utilize a time-conditioned generative model to learn the mean field flow ${\bsmu}^{\bspi} = ({\mu}^{\bspi}_t)_{t\ge0}$. Specifically, we employ a Conditional Normalizing Flow \citep{winkler2019learning, pmlr-v37-rezende15, kobyzev2020normalizing} which is a particular generative model that learns a complex probability distribution $p(\mathbf{x}|t)$ conditioned on a time variable $t \in [0, T]$. It achieves this by transforming a simple base distribution $p_0(\mathbf{z})$ (e.g., a standard Gaussian) into the target distribution $p(\mathbf{x}|t)$ through a sequence of invertible transformations $f_1, f_2, \dots, f_K$, where each transformation is conditioned on the time $t$.
The probability density of a sample $\mathbf{x}$ from the target distribution $p(\mathbf{x}|t)$ is computed as:
$$
    p(\mathbf{x}|t) = p_0(f^{-1}(\mathbf{x}, t)) \left| \det \left( \frac{\partial f^{-1}(\mathbf{x}, t)}{\partial \mathbf{x}} \right) \right|,
$$
where $f^{-1}(\mathbf{x}, t) = f_K^{-1}(f_{K-1}^{-1}(\dots f_1^{-1}(\mathbf{x}, t) \dots))$ is the inverse of the entire flow, and $\det \left( \frac{\partial f^{-1}(\mathbf{x}, t)}{\partial \mathbf{x}} \right)$ is the  determinant of the Jacobian of the inverse transformation.

The conditioning on time $t$ is typically incorporated into the parameters of the transformation functions $f_k$. For example, if $f_k$ is an affine transformation, its parameters (e.g., the scaling and translation) would be functions of $t$. Similarly, for more complex flow architectures like neural spline flows, the parameters of the splines would be conditioned on $t$ (see Sec.~\ref{sec:experiments} for more details about the architecture we used).

To learn the parameters of the Conditional Normalizing Flow, we employ Maximum Likelihood Estimation (MLE). This is equivalent to minimizing the Negative Log-Likelihood (NLL) of the observed data. Given a dataset of $N$ samples $\{\mathbf{x}_i, t_i\}_{i=1}^N$ drawn from the time-dependent distribution, the NLL loss function is given by:
$$
    \mathcal{L}_{\text{NLL}} = - \frac{1}{N} \sum_{i=1}^{N} \left[ \log p_0(f^{-1}(\mathbf{x}_i, t_i)) + \log \left| \det \left( \frac{\partial f^{-1}(\mathbf{x}_i, t_i)}{\partial \mathbf{x}_i} \right) \right| \right].
$$
By learning the parameters of these time-conditioned transformations using maximum likelihood estimation on a dataset of time-dependent distributions, the model learns to generate samples from and estimate the probability density of $p(\mathbf{x}|t)$ for any $t \in [0, T]$. \ref{alg:main-algo} summarizes our method. For notations and further discussion on the choices of the individual components refer to App. \ref{appx: algo}.

\makeatletter
\renewcommand{\ALG@name}{}
\renewcommand{\thealgorithm}{Algo. 3}
\makeatother
\begin{algorithm}[H]
\caption{Density-Enhanced Deep Average Fictitious Play (\textbf{DEDA-FP})}
\label{alg:main-algo}
\begin{algorithmic}[1]
    \STATE \textbf{Input:} $\mu_0$: initial distribution;  $N_{sa}$: number of state-action pairs to collect at every iteration, $N$: population size in population simulation; $K$: number of iterations.  
    \STATE \textbf{Initialize:} $(\theta^*_0 = \bar\theta_0,\bar\tau_0)$ at random; empty replay buffer $\mathcal{M}_{SL}$ for supervised learning of average policy; using $\bspi^*_0:=\bspi^{\theta^*_0}$, sample  $N_{sa}$ triples $(0, s, a)$ and store them in $\mathcal M_{SL}$.
    
    \FOR{iteration $k = 1$ to $K$}
        \STATE Find the best response $\bspi_k^{*}:=\bspi^{\theta^*_k}$ vs the $N-1$ agents using $\bar \bspi_{k-1} := \bar \bspi^{\bar\theta_{k-1}}$ using \textbf{Deep RL}:
        \vspace{-1ex}
        \[
            \bspi_k^* = \arg\max_{\bspi} J^N_{\mu_0} ( \bspi, \bar{\bsG}_{k-1})
        \]
        \vspace{-2ex}
        \STATE Collect $N_{sa}$ time-state-action samples of the form $(t, s, a)$  
            using $\bspi_k^*$ and store in $\mathcal{M}_{SL}$.
        \STATE Train the \textbf{NN policy} $\bar{\bspi}_k := \bar \bspi^{\bar\theta_{k}}$ using supervised learning to minimize the categorical loss:
        \vspace{-2ex}
        \[
            \mathcal{L}_{\text{NLL}}(\bar\theta) = \mathbb{E}_{(t, s, a) \sim \mathcal{M}_{SL}} \left[ -\log{\bar{\bspi}}^{\bar\theta}(a | t, s) \right] %
        \]
        \vspace{-2ex}
        \STATE Train a \textbf{Conditional Normalizing Flow} $\bar \bsG_k := \bar \bsG^{\bar \tau_k}$ for the time-dependent mean field $\bsmu^{\bar\bspi_k}$ using trajectories generated by $\bar \bspi_k$ and initialization $\bar \bsG_{k-1}$.       
    \ENDFOR
    \STATE \Return $\bar\bspi_K$, $\bar \bsG_K$
\end{algorithmic}
\end{algorithm}
\vspace{-0.6cm}
\section{On the convergence of DEDA-FP}\label{sec: convergence}
We analyze the convergence of the DEDA-FP algorithm by extending the theoretical framework developed by \citep{elie2020convergence}. Our goal is to show that the exploitability of the learned policy converges towards a value determined by the sum of three specific accumulated errors, thus establishing convergence to an $\epsilon$-Nash Equilibrium. 

We first formalize the \textbf{error sources} based on distance $d_{N_T}$ between sequences of mean fields and distance $d_{\Pi}$ between policies (see App.~\ref{app: proof_convergence} for details).
\begin{enumerate}
    \item \textbf{Best Response Error.} The sub-optimality of the DRL policy $\bspi_k^*$ wrt the mean-field flow $\bar \bsG_K$ from the previous iteration: $\epsilon_{br}^k := J(\text{BR}(\bar \bsG_{k-1}), \bar \bsG_{k-1}) - J(\bspi_k^*, \bar \bsG_{k-1}) \ge 0$.
    \item \textbf{Average Policy Error.} The error in the supervised learning step: $\epsilon_{sl}^k := d_{\Pi}(\bar{\bspi}_k, \bsPi_k^{\text{true}})$ where $\bsPi_k^{\text{true}} := \frac{1}{k}\sum_{i=1}^k \bspi_i^*$ is the true average policy.
    \item \textbf{Distribution Error.} The error of the CNF model $\bar \bsG_k$ in approximating the true mean-field flow $\bsmu^{\bar\bspi_k}$: $\epsilon_{cnf}^k := d_{N_T}(\bar \bsG_k, \bsmu^{\bar\bspi_k})$.
\end{enumerate}

We will use the following two assumptions, which are satisfied under mild conditions on the reward and transition functions:

\begin{assumption}[Lipschitz continuity of $J$]\label{ass: Lip on J} 
    For any policy $\bspi$ and any two flows $\bsmu_1, \bsmu_2$: $|J(\bspi, \bsmu_1) - J(\bspi, \bsmu_2)| \le L \cdot d_{N_T}(\bsmu_1, \bsmu_2)$.
\end{assumption}
\begin{assumption}[Lipchitz continuity of MF]\label{ass: Lip on MF}
    For any policies $\bspi_1, \bspi_2$, the generated mean fields satisfy: $d_{N_T}(\bsmu^{\bspi_1}, \bsmu^{\bspi_2}) \le L_{MF} d_{\Pi}(\bspi_1, \bspi_2)$.
\end{assumption}
The following result provides a bound on the exploitability of the policy computed by our algorithm.
\begin{theorem}[Convergence to approximate Nash equilibrium]\label{theorem: convergence_of_dedafp}
Let $e_k^{\text{true}} := J(\text{BR}(\bsmu^{\bar\bspi_k}), \bsmu^{\bar\bspi_k}) - J(\bar{\bspi}_k, \bsmu^{\bar\bspi_k}) \ge 0$ be the \textbf{true exploitability} at iteration $k$, which measures the incentive to deviate from the policy $\bar{\bspi}_k$ in the true distribution it generates, $\bsmu^{\bar\bspi_k}$. Under Assumptions~\ref{ass: Lip on J} and \ref{ass: Lip on MF}, we have:
$$
    e_k^{\text{true}} < C_0 e_0^{\text{cnf}} + \frac{1}{k} \sum_{i=1}^{k-1} \left[ (i+1)\epsilon_{br}^{i+1} + C_1(\epsilon_{sl}^{i+1} + \epsilon_{cnf}^{i+1}) + \frac{C_2}{i} \right]
$$
for some constants $C_0, C_1, C_2 > 0$.
\end{theorem}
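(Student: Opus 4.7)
The plan is to extend the Fictitious Play convergence arguments of Elie et al.\ and Perrin et al.\ to the approximated setting of DEDA-FP by tracking our three error sources separately, using Assumptions~\ref{ass: Lip on J} and~\ref{ass: Lip on MF} to convert mean-field errors into value errors at every step.

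First I would insert the CNF-approximated flow $\bar\bsG_{k-1}$ as a pivot and decompose the true exploitability:
$$e_k^{\text{true}} \le \underbrace{\bigl|J(\text{BR}(\bsmu^{\bar\bspi_k}), \bsmu^{\bar\bspi_k}) - J(\text{BR}(\bar\bsG_{k-1}), \bar\bsG_{k-1})\bigr|}_{(A)} + \underbrace{\bigl[J(\text{BR}(\bar\bsG_{k-1}), \bar\bsG_{k-1}) - J(\bar{\bspi}_k, \bar\bsG_{k-1})\bigr]}_{(B)} + \underbrace{\bigl|J(\bar{\bspi}_k, \bar\bsG_{k-1}) - J(\bar{\bspi}_k, \bsmu^{\bar\bspi_k})\bigr|}_{(C)}.$$
Terms $(A)$ and $(C)$ reduce via Assumption~\ref{ass: Lip on J} to bounding $d_{N_T}(\bsmu^{\bar\bspi_k}, \bar\bsG_{k-1})$, and by the triangle inequality this splits into $d_{N_T}(\bsmu^{\bar\bspi_k}, \bsmu^{\bar\bspi_{k-1}}) + \epsilon_{cnf}^{k-1}$. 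The first piece is controlled by Assumption~\ref{ass: Lip on MF} combined with the FP averaging property $d_\Pi(\bar\bspi_k, \bar\bspi_{k-1}) = O(1/k) + O(\epsilon_{sl}^k)$, and this is the origin of the $C_2/k$ remainder.

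Second, for the central FP term $(B)$, I would adapt the classical averaging argument. Writing $\bsPi_k^{\text{true}} = \frac{1}{k}\sum_{i=1}^k \bspi_i^*$, a standard performance-difference estimate yields $|J(\bar\bspi_k, \bar\bsG_{k-1}) - J(\bsPi_k^{\text{true}}, \bar\bsG_{k-1})| = O(\epsilon_{sl}^k)$, the constant being absorbed into $C_1$. The FP recursion $\bsPi_k^{\text{true}} = \tfrac{k-1}{k}\bsPi_{k-1}^{\text{true}} + \tfrac{1}{k}\bspi_k^*$ combined with the approximate best-response inequality $J(\bspi_k^*, \bar\bsG_{k-1}) \ge J(\text{BR}(\bar\bsG_{k-1}), \bar\bsG_{k-1}) - \epsilon_{br}^k$ then sets up a recursion on the approximate value gap $w_k := J(\text{BR}(\bar\bsG_{k-1}), \bar\bsG_{k-1}) - J(\bsPi_k^{\text{true}}, \bar\bsG_{k-1})$. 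Unrolling this recursion, and using Assumption~\ref{ass: Lip on J} at each step to transport values between successive flows $\bar\bsG_{i-1}$ and $\bar\bsG_{k-1}$, produces the weighted sum $\frac{1}{k}\sum_i (i+1)\epsilon_{br}^{i+1}$: the factor $(i+1)$ reflects how a BR error incurred at iteration $i+1$ is carried forward through the subsequent Lipschitz transports. Along the way, each transport between consecutive flows adds an $\epsilon_{cnf}^{i+1}$ term, while the $C_0 e_0^{\text{cnf}}$ contribution absorbs the seeding error of the CNF at $k=0$. A final induction on $k$ closes the recursion.

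The principal obstacle is the tight coupling between the training-time flow $\bar\bsG_k$, against which each best response is computed, and the true flow $\bsmu^{\bar\bspi_k}$, against which exploitability is measured. Because successive best responses are trained against different imperfect flows, the error propagation is not naturally telescopic, and a naive bound compounds linearly in $k$ for every source; only the $1/k$ FP averaging contraction, applied exactly once per error source, keeps the aggregate tame. A secondary technical hurdle is translating the SL distance $\epsilon_{sl}^k$ (measured in $d_\Pi$) into a value error on $J$: a performance-difference argument suffices, but it is invoked several times, and the resulting Lipschitz constants must be verified to be uniform in $k$ before being folded into $C_1$. The explicit determination of $C_0, C_1, C_2$ is routine but tedious, and I would relegate it to an appendix.
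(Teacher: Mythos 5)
Your proposal follows essentially the same route as the paper: the paper likewise pivots on the learned flow (its Lemma~\ref{lem:e-true-bound} bounds $|e_k^{\text{true}} - e_k|$ by $4L\,\epsilon_{cnf}^k$, which plays the role of your terms $(A)$ and $(C)$), and bounds the tractable exploitability by augmenting the recursion of \citep{elie2020convergence} with the $\epsilon_{br}$, $\epsilon_{sl}$, $\epsilon_{cnf}$ and $O(1/k)$ FP-stability terms, exactly as in your term $(B)$. The only step you gloss over is that $(A)$ compares best responses taken against two \emph{different} flows, so it is not a direct application of Assumption~\ref{ass: Lip on J} but requires the standard two-sided optimality argument the paper spells out (costing a factor $3L$ rather than $L$); this is routine and does not change the structure of the argument.
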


The proof is provided in App. \ref{app: proof_convergence}. It relies on analyzing the propagation of errors.

\section{Experiments}\label{sec:experiments}

To validate our method, we present three distinct scenarios, each designed to showcase specific strengths and potential applications. The initial two examples illustrate the benefit of learning the average policy without performance degradation, while also revealing limitations in capturing local dependencies. We then introduce a more complex case study to demonstrate the effectiveness of DEDA-FP against established benchmarks, highlighting its ability to directly learn the environment and achieve a richer problem representation. Further results and an additional example on the \textit{price impact model} can be found in Appendix \ref{appx: exp}.

\textbf{Evaluation: } 
To evaluate the algorithms' performance, we approximate the exploitability (defined in Def.~\ref{def:exploitability}). Since the model-free setting prevents direct computation of the optimal value we approximate the first term . 
In all cases, we conduct 4 independent runs and report the mean and standard deviation of the exploitability across these runs to assess the consistency of our results.

\textbf{Architecture details}: For the DRL algorithms, we used the Stable Baselines library by \cite{stable-baselines} for the implementation of the DeepRL algorithm (SAC for the first two examples, PPO for the case study). We used an RTX4090 GPU with 24 GB RAM for each experiment. For the policy network we use a multi-layer perceptron (MLP) consisting of two hidden layers, each with 256 units and two parallel output layers for mean and standard deviation. To model distributions, we adapt a version of Neural Spline Flows (NSF) with autoregressive layers \citep{durkan2019neural} for handling time dependencies. More details about NSF work and implementation see Appendix~\ref{appx: arch}. 

\vspace{-0.1cm}
\subsection{Beach Bar Problem}  
\noindent
\textbf{Environment: } 
\textit{"I would like to go to that nice bar on the beach, unless it is too crowded!".}
We consider a continuous space version of the beach bar problem introduced in \citep{perrin2020fictitious} in discrete space. 
We take $\cX = [0,1]$, $\cA = [-0.3, 0.3]$ as a continuous state space with dynamics: $x_{t+1} = x_t + a_t + \epsilon_{t}$.
If the agent reaches the boundary of $\cX$ and tries to exit, it is pushed back inside. $\epsilon_t$ represents the noise and is uniformly distributed over $[-0.1, 0.1]$. We take the initial distribution $\mu_0 = Unif[0,1]$. The time horizon is set as $N_T=10$. 
For the one-step reward function, we take $r(x,a,\mu) = - C_1|x-x_{\text{bar}}|^2 - C_2\mu(x) - C_3|a|^2$, where $C_1, C_2, C_3 \ge 0$, $x_{bar} = 0.5$ and $\mu(x)$ denotes the value of the density at $x$ and represent the \textit{congestion avoidance}. %
\vspace{0.2cm}
\begin{figure}[h!]
  \centering
  \begin{minipage}{0.27\textwidth} %
    \includegraphics[width=\linewidth]{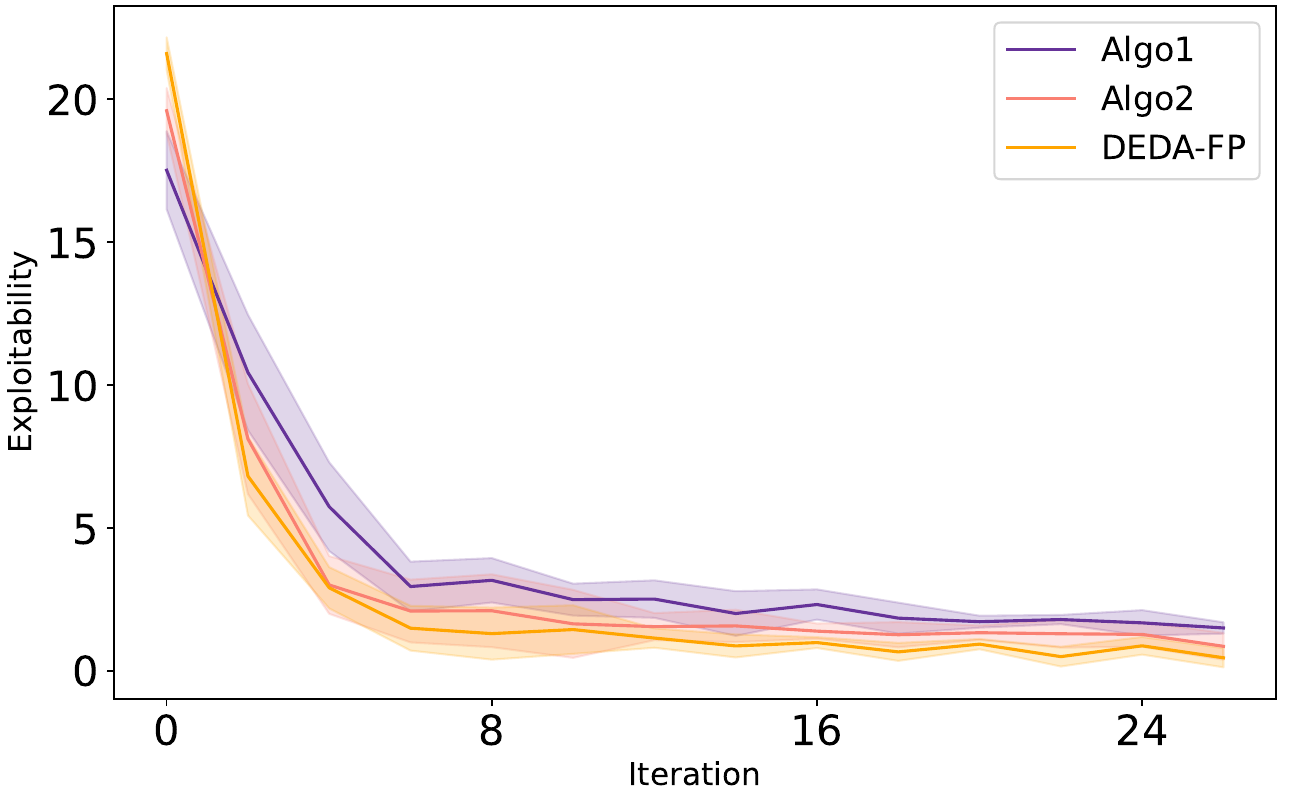}
  \end{minipage}
  \hfill
  \begin{minipage}{0.22\textwidth} %
    \includegraphics[width=\linewidth]{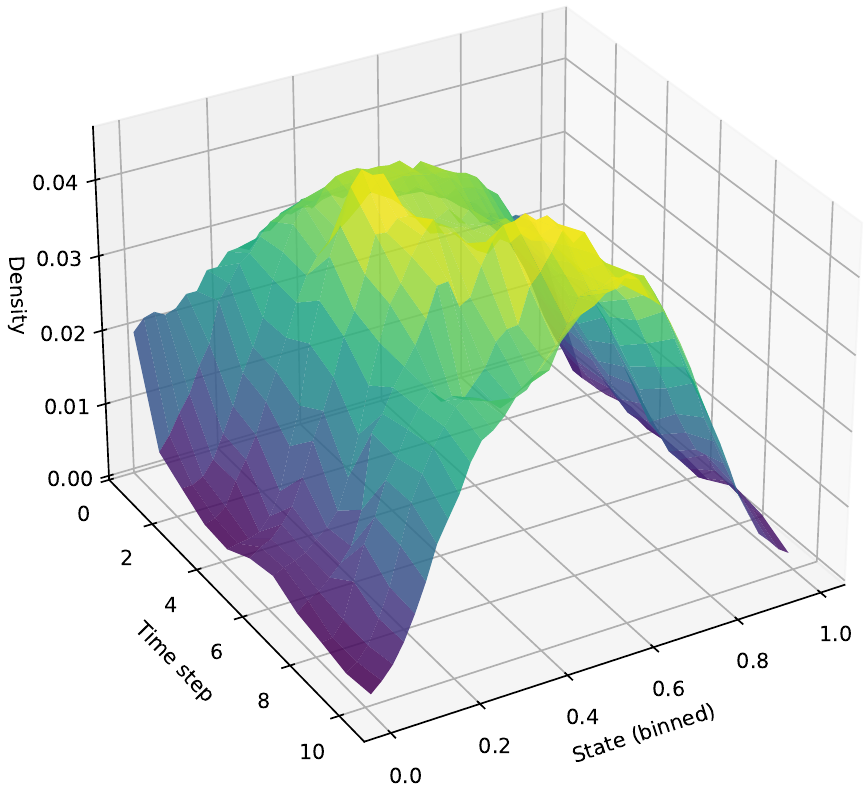}
  \end{minipage}
  \begin{minipage}{0.24\textwidth} %
    \includegraphics[width=\linewidth]{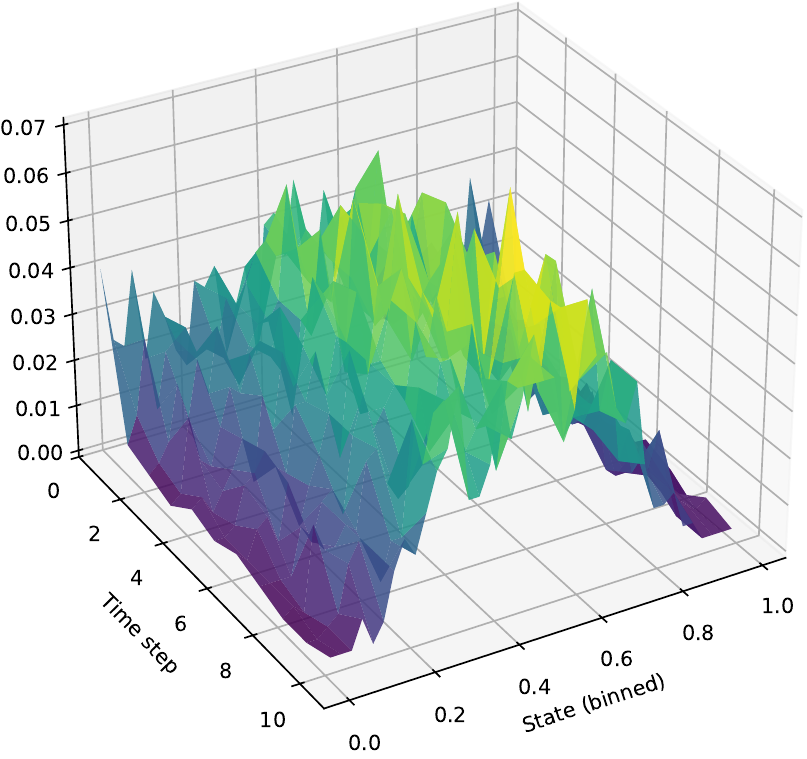}
  \end{minipage}
  \begin{minipage}{0.24\textwidth} %
    \includegraphics[width=\linewidth]{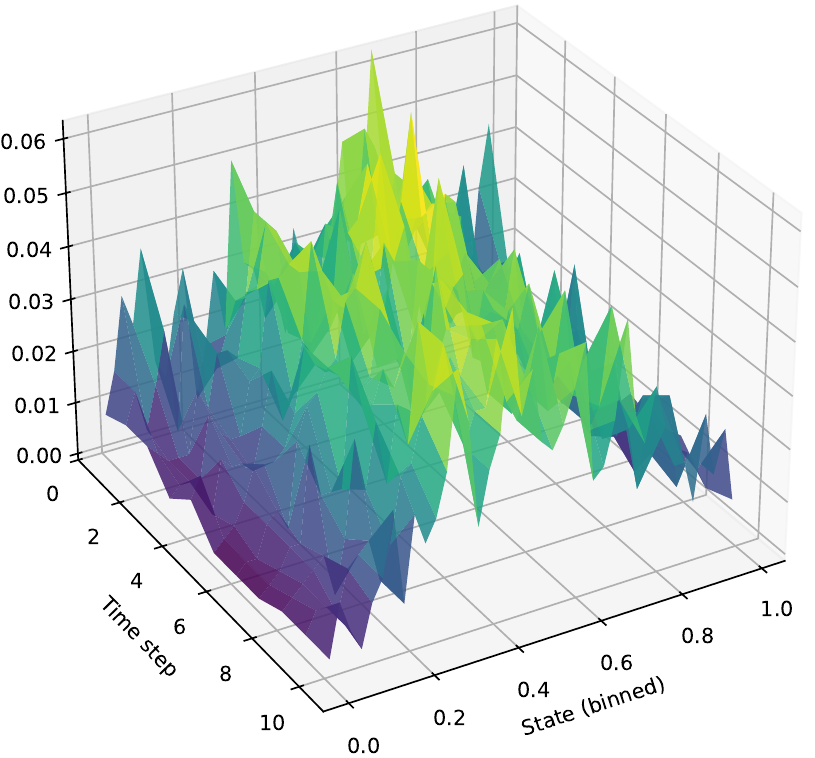}
  \end{minipage}

  \caption{\small\textbf{Beach Bar Problem.} a): exploitability of \ref{algo:1},  \ref{algo:2} and DEDA-FP; b) NE distribution DEDA-FP; c) NE distribution \ref{algo:2}; d) NE distribution \ref{algo:1}}
  \label{fig:2tc}
\end{figure}
\noindent

\textbf{Numerical results: }
Fig.~\ref{fig:2tc} presents a comparison between the algorithms, illustrating the final flow  $\bsmu^{\bar\bspi_K}$ after the last Fictitious Play iteration ($K$). As can be seen in the figure, \textbf{DEDA-FP} demonstrates a superior interpretation of the problem formulation, achieving a smoother distribution concentrated around the center. 
Furthermore, the exploitability decay analysis indicates that this improved distributional representation in \ref{alg:main-algo} does not come at the cost of performance. An important challenge highlighted by this problem, revealing a limitation of both benchmarks \ref{algo:1} and \ref{algo:2}, lies in the approach to compute the local dependence $\mu(x)$ within the reward function. Due to the absence of an accessible approximation model for querying, we redefined the mean field cost as follows:
$\mu(x) = (\mu^N * \rho)(x)$, and $(\mu * \rho)(x):=\int_{\mathcal X}\mu(y)\rho(x-y)dy$,  $\rho$ is the Gaussian density $\rho(x):= \frac{1}{\sqrt{2\pi\sigma^2}}e^{\frac{-x^2}{2\sigma^2}}$ and  $\mu_t^N:=\frac{1}{N}\sum_{i=1}^N\delta_{X^i_t}$ where $X^i_t$ is the position at time $t$ of the agent $i$ and $\delta$ is the Dirac delta measure.
\vspace{-0.1cm}
\subsection{Linear-Quadratic (LQ) model} 

\begin{figure*}[h!]%
    \begin{subfigure}{0.26\textwidth} %
        \centering
        \includegraphics[width=\linewidth]{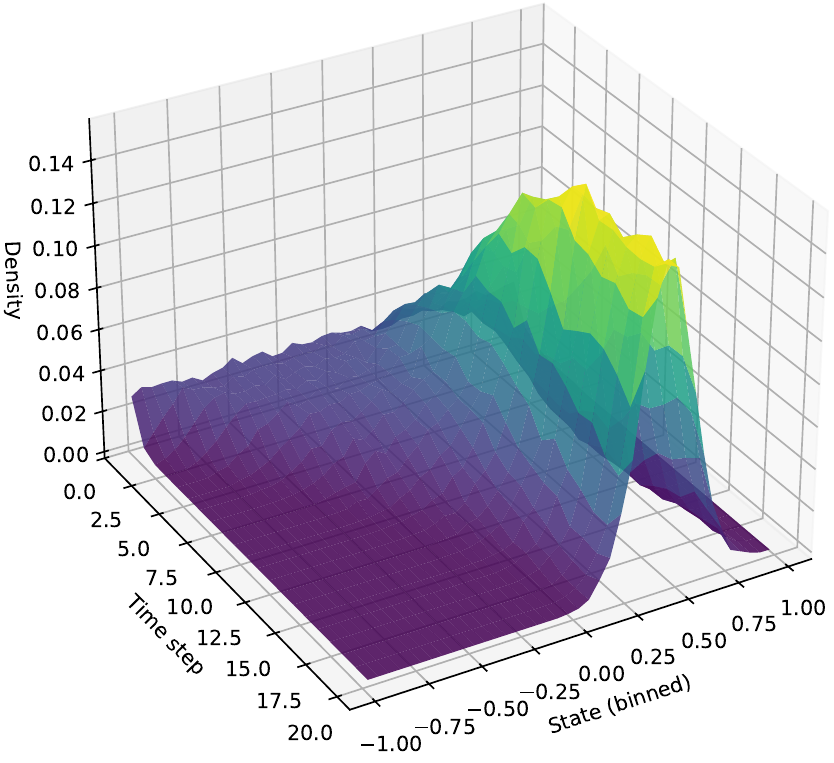}
    \end{subfigure}%
    \hfill%
    \begin{subfigure}{0.31\textwidth} %
        \centering
        \includegraphics[width=\linewidth]{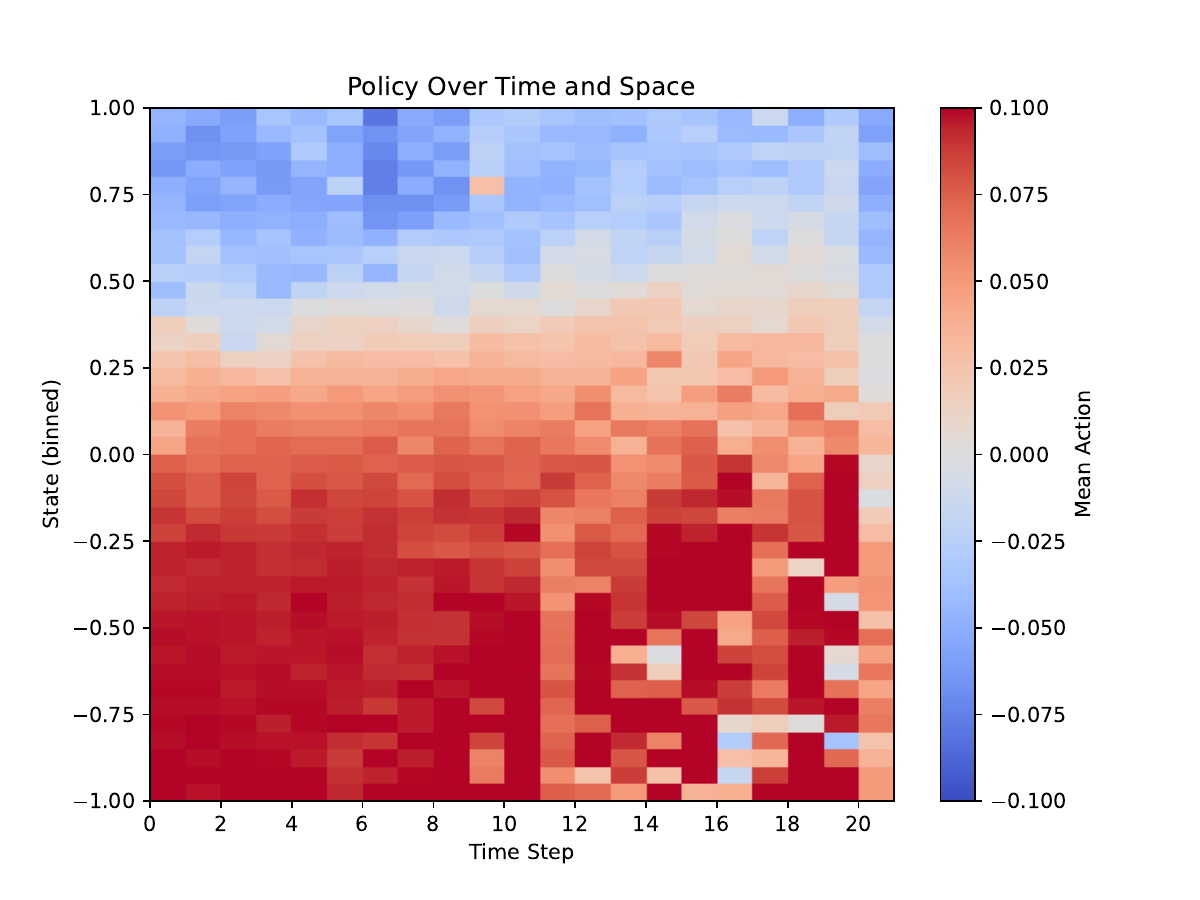}
    \end{subfigure}%
    \hfill
    \begin{subfigure}{0.35\textwidth} %
        \centering
        \includegraphics[width=\linewidth]{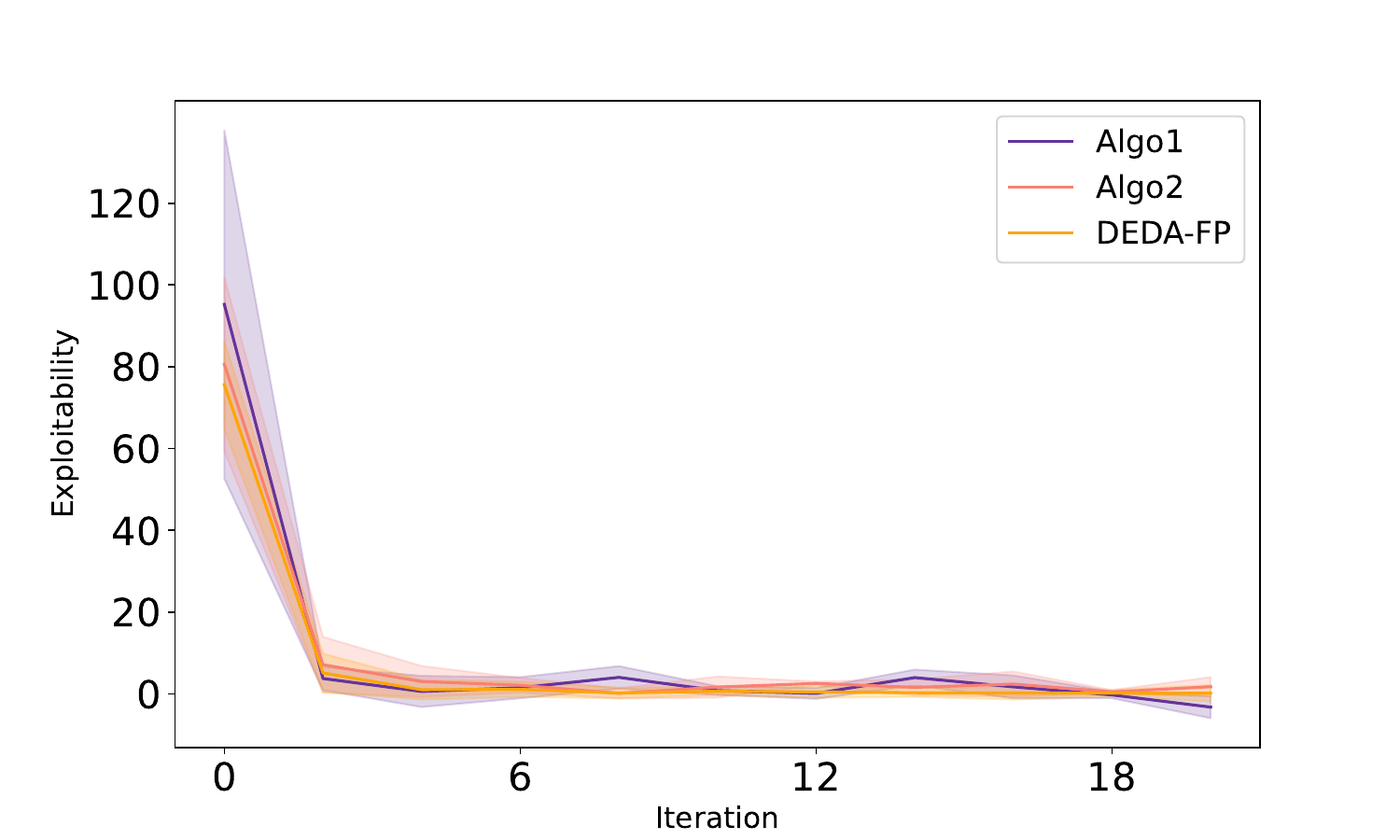}
    \end{subfigure}%
    \caption{\small \textbf{LQ Model DEDA-FP} Left: mean field flow $\bsmu^{\bar{\bspi}_{K}}$ at the last Fictitious Play (FP) iteration. Middle: policy $\bar{\bspi}_{K}$ at the last FP iteration. Right: comparison of exploitability between \textbf{DEDA-FP} and \ref{algo:1} and \ref{algo:2}.}
    \label{fig:LQ}
\end{figure*}
\vspace{-0.1cm}
\textbf{Environment: } 
We consider a linear-quadratic (LQ) model with continuous state and action spaces and a finite time horizon $N_T$. Similar LQ models have been considered in~\citep{lauriere2022scalable} with discrete spaces and~\citep{angiuli2023deep} with stationary mean field. 
We take the state space $\cX = [-1,1]$ and the action space $\cA=[-0.1,0.1]$. The dynamics is: $x_{t+1} =A x_{t} + B a_{t} + \bar{A} \bar\mu_t + \epsilon_t$, where $A$, $B$ and $\bar{A}$ are real constants, $\bar\mu_t=\int_{\cX}x{\mu}_{t}(x)\mathrm{d}x$ is the first moment (mean) of the distribution at time $t$, and $\epsilon_t$ represents the noise that is uniformly distributed over $[-0.1,0.1]$. The reward is: $r(x,a,\mu) = -c_X|x-x_{\text{target}}|^2 -c_A |a|^2 - c_M |x-\bar\mu|^2$, where $c_X, c_A$ and $c_M$ are positive constants. The dynamics and the reward are linear and quadratic in the state $x$, the action $a$, and the mean $\bar\mu$ of the distribution. The agent learns to maximize the reward by finding an optimal policy that balances staying close to the target state, minimizing the action cost, and remaining near the population mean. 
In the experiment, we set $N_T=20$, $A=1$, $B=1$, $\bar{A}=0.06$. The reward coefficients are $c_{\cX}=5$, $c_A=0.1$, and $c_M=1$.

\noindent
\textbf{Numerical results: }
Fig.~\ref{fig:LQ} shows the mean field flow and the learned policy after the last FP iteration by \ref{algo:2}. The distribution concentrates near the target position $x_{\text{target}}=0.6$ that aligns with the reward's high weight target discrepancy term. The learned policy also shows its linearity with respect to the state. The agent takes action that converges to the target position with increasing $|a|$ as the distance from the target increases. It is important to note that at later time steps, the policy's predictions far from the target may exhibit inconsistencies. However, this is not a limitation, as it is caused by the low agent density in those regions, rendering the action choices effectively arbitrary at those times. \textbf{This example demonstrates the policy network's effectiveness in approximating the average policy, and, importantly, shows that performance is not degraded.}
The averaged exploitability curves of \ref{algo:1} and \ref{algo:2} both show the exploitability converges to zero quickly after several FP iterations and stay near zero, confirm the latter statement. 
\vspace{-0.4cm}
\subsection{Case Study: \textit{4-rooms exploration}} 
Building upon the limitations observed with~\ref{algo:1} and~\ref{algo:2}, we now introduce a more complex setting to further highlight the strengths of the \textbf{DEDA-FP} (\ref{alg:main-algo}) against the benchmarks. In this problem, the approximation of the mean-field distribution becomes a critical aspect, primarily due to its inherent nature of entropy maximization.
\vspace{-0.4cm}
 \setlength{\belowcaptionskip}{-0.2cm} %
\begin{figure}[h]
    \centering
    \includegraphics[width=\linewidth]{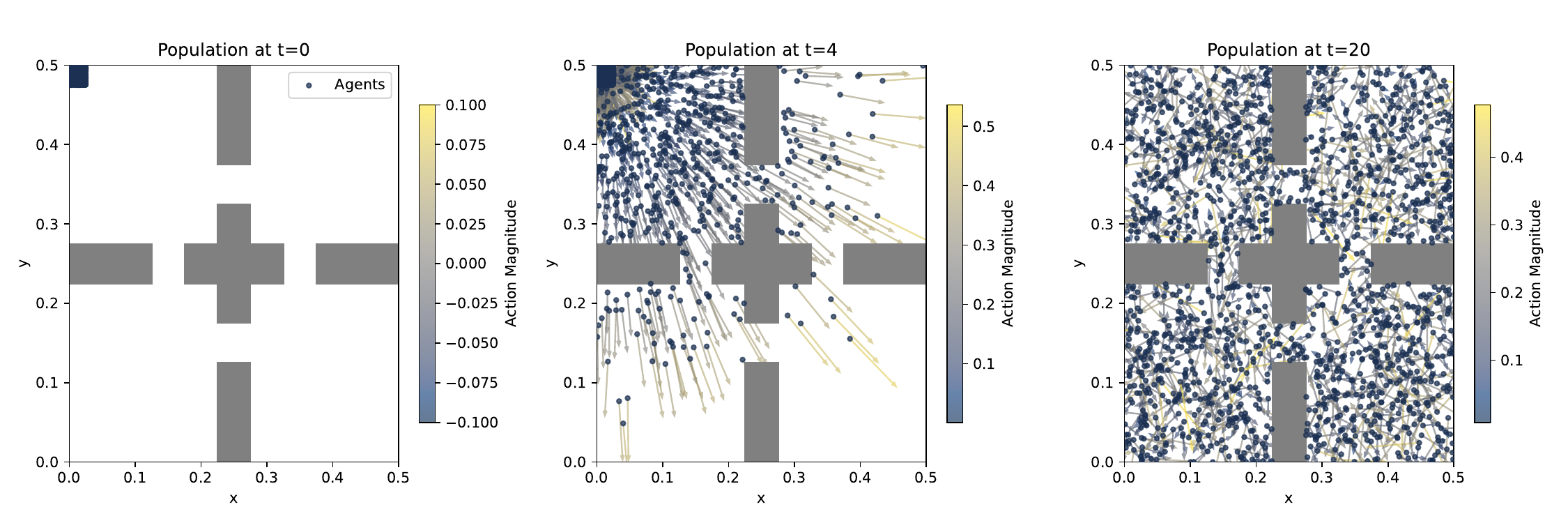}
    \caption{\small {\textbf{4-rooms Exploration.} Visualization of a large, finite population of 2000 agents and their velocity vectors during exploration in the 4-rooms environment. The agents' behavior is governed by the mean-field Nash equilibrium policy learned by the DEDA-FP solver, \ref{alg:main-algo}. This shows how well the mean-field approximation captures the behavior of a large-population system.}}
    \label{fig:4r_agents}
\end{figure}
\noindent

\textbf{Environment: } The 4-rooms exploration MFG has been introduced using discrete spaces in~\citep{geist2022concave} and has served as a benchmark e.g. in~\citep{lauriere2022scalable,algumaei2023regularization}. The state space and the action space are 2 dimensional ($d=2$) and the states have constraints represented by walls forming four connected rooms. While the original model was discrete, we consider here a continuous space generalization, which is more natural because pedestrians move in continuous space. The action is the vector of movement (velocity) and the reward decreases with the mean field density: a larger density at the player's location means a smaller reward. Hence the players are encouraged to move in order to go to less crowded regions. 
In the end, if the time horizon is long enough, the mean field density becomes uniform. In this example, the stationary distribution is trivial and the key point is to learn the entire flow from the initial distribution to the stationary one.
Following, the mathematical formulation. We take $\cX = [0,1]^2$, interpreted as a 2D domain. Time horizon $N_T$ is set to $20$. There are obstacles (walls) such that the domain has the shape of four connected rooms (see Fig.~\ref{fig:4rooms_final3d}). 
The dynamics are: $x_{t+1} = x_{t} + v_{t} + \epsilon_t$, except that the agent cannot cross a wall. The reward is: $r(x,v,\mu) = -c_A ||v||_2^2 - c_M \log(\mu(x)+\epsilon)$, where $c_A, c_M$ and $\epsilon$ are positive constants, with $\epsilon$ very close to $0$. \textbf{This reward (entropy maximization) discourages the agent from taking large actions (i.e., from moving a lot) and from being at a crowded location}. The initial distribution is uniform over a small square at the top left corner. We expect the agents to spread throughout the domain and, if the time horizon is long enough, the population distribution should converge to the uniform distribution over the domain.

\begin{figure*}[h!] %
    \centering
        \includegraphics[width=0.325\linewidth]{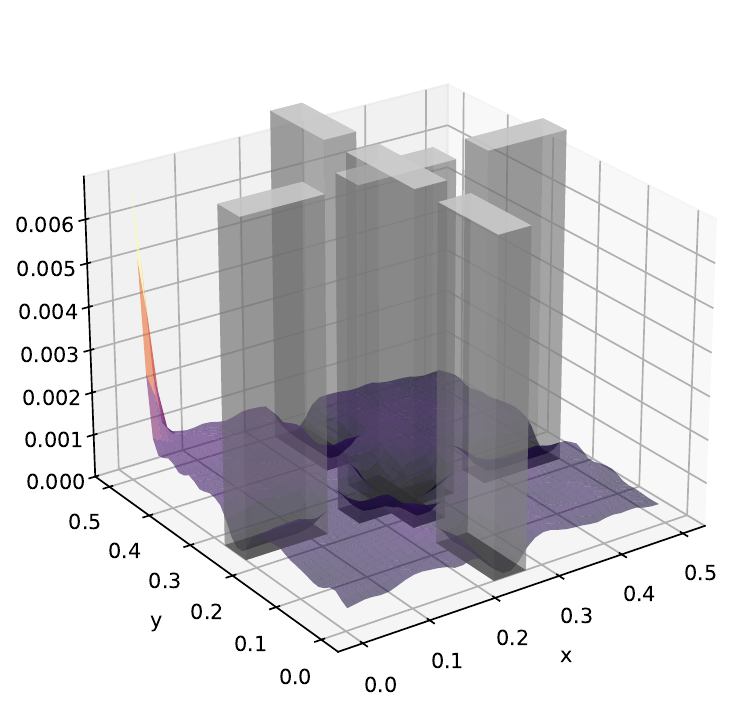}
        \includegraphics[width=0.325\linewidth]{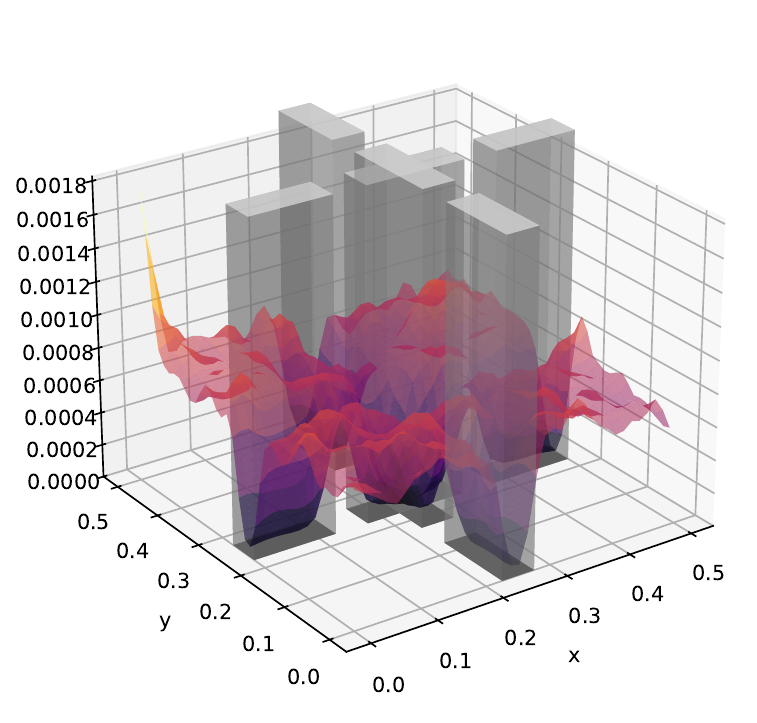}
        \includegraphics[width=0.325\linewidth]{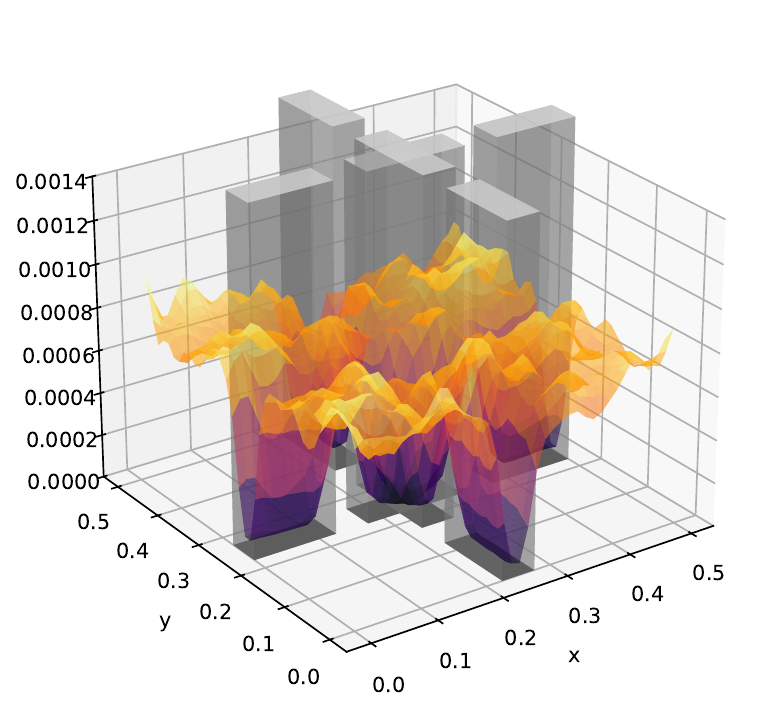}
    \caption{\small \textbf{4-rooms Exploration - NE flow.} The three plots represent the dynamics of the Nash Equilibrium mean field flow $\bar \bsG_K$ at time $t=6, 15, 20$, obtained by \textbf{DEDA-FP}. It can be seen how the population is spreading across the $4$ different rooms. 
    }
    
    \label{fig:4rooms_final3d}
\end{figure*}

\setlength{\belowcaptionskip}{-0.2cm}
\setlength{\intextsep}{-1pt}
\begin{wrapfigure}{r}{0.38\textwidth} %
  \centering
  \begin{minipage}{0.37\textwidth} %
    \includegraphics[width=\linewidth]{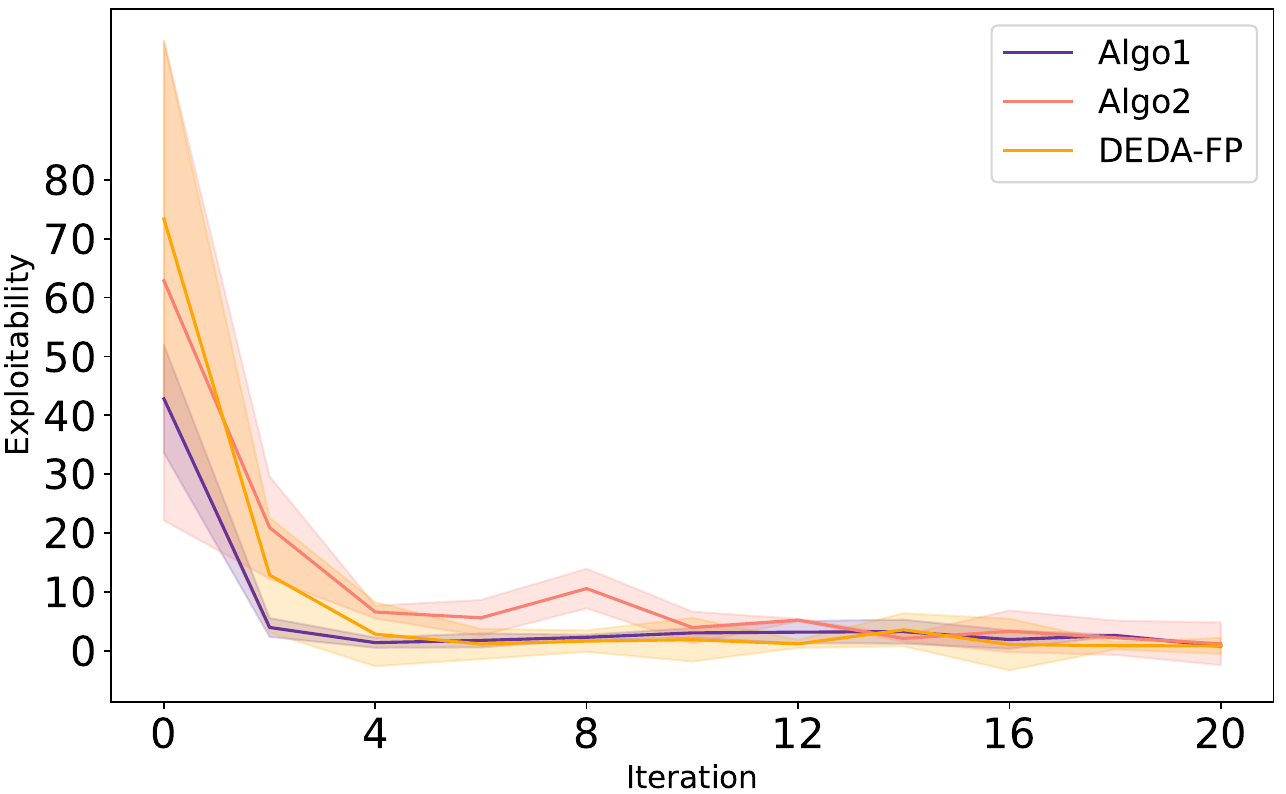}
  \end{minipage}
    \begin{minipage}{0.26\textwidth} %
    \includegraphics[width=\linewidth]{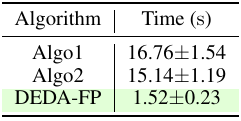}
  \end{minipage}

  \caption{\small (top) Exploitability decay. (bottom) Time to sample $5000$ trajectories.}
  \label{fig:4rooms_exploit}
\end{wrapfigure}
\textbf{Discussion: } The aim of this case study was to demonstrate the effectiveness of DEDA-FP (\ref{alg:main-algo}) compared to~\ref{algo:1} and~\ref{algo:2}. Figs.~\ref{fig:4rooms_final3d},~\ref{fig:4rooms_exploit} and~\ref{fig:4_rooms_differences} summarize our results. First, DEDA-FP enables direct access to the local dependence $\mu(x)$ in the reward function, without the need to compute the convolution or any other approximation of the local density. Secondly, by leveraging the generative model's sample efficiency, our method yields a superior mean-field distribution representation during training (given a fixed-time budget for all algorithms) while maintaining comparable final performance. 

\textit{Scalable Sampling:} In Fig.~\ref{fig:4rooms_exploit} (bottom) we display a table highlighting DEDA-FP's $>10\times$ efficiency advantage over~\ref{algo:1} and~\ref{algo:2} in generating $5000$ trajectories (time horizon: $T=20$). 
This represents a substantial reduction in the computational cost of rollouts (critical, for instance, when applying the mean field policy in finite agent settings; see Fig.~\ref{fig:4r_agents}), enabling a considerably faster approximation of the environment (distribution and reward).
\vspace{0.5cm}
\setlength{\belowcaptionskip}{0cm}
\begin{figure}[h!]
    \centering
    \includegraphics[width=0.35\linewidth]{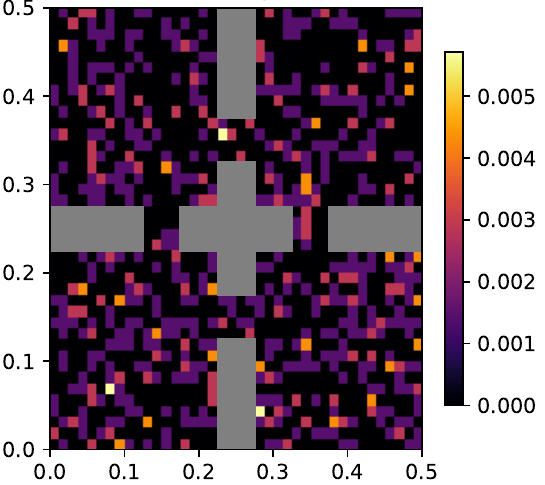}
    \hspace{0.7cm}
    \includegraphics[width=0.335\linewidth]{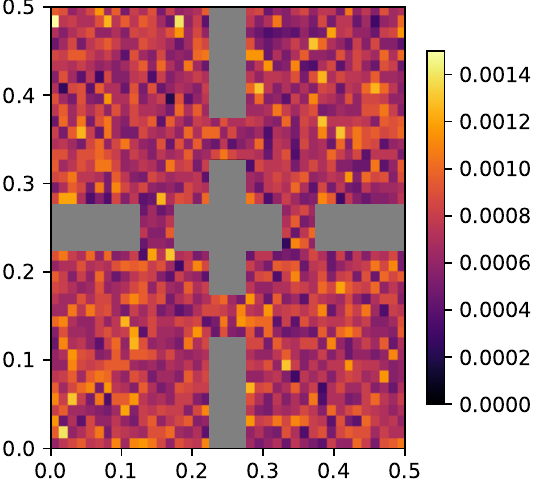}
    \caption{\small \textbf{4-rooms Exploration - Algo 2 vs DEDA-FP} The figure shows the last step Nash equilibrium policy generate by (left) \ref{algo:2} and (right) \textbf{DEDA-FP} with a fixed-time budget of $10s$. Remarkably, DEDA-FP can approximate the mean field distribution by sampling $10$ times more agents than \ref{algo:2}.}
    \label{fig:4_rooms_differences}
\end{figure}

\section{Conclusion and Limitations}
\paragraph{Conclusions.} In this paper, we introduce a deep reinforcement learning algorithm for non-stationary continuous MFGs. Our primary contribution lies in the development of DEDA-FP, which combines the strengths of Fictitious Play and supervised learning to accurately compute both the Nash equilibrium policy and the mean field distribution. Our approach enables efficient sampling and density approximation by leveraging time-conditioned normalizing flows, addressing the critical limitations in scalability and local mean field dependencies. In Theorem \ref{theorem: convergence_of_dedafp} we provide a theoretical guarantee for the convergence of DEDA-FP.
Through three increasingly complex numerical experiments, we demonstrate the effectiveness of DEDA-FP in solving continuous space non-stationary MFGs with general dynamics and rewards. The results show that our approach yields a significant contribution to the application of RL techniques to continuous MFGs.
\vspace{-3mm}
\paragraph{Limitations and future work. } As of now, we are still lacking a complete theoretical understanding of the proposed algorithm, particularly due to the complexity of analyzing deep neural networks training. We also left for future work extensions beyond standard MFGs, such as multiple populations and graphon games, or MFGs with common noise and real-world applications. Furthermore, our present evaluation relies on approximate exploitability, which, while a state-of-the-art technique for assessing Nash equilibria, provides an evaluation that is inherently dependent on the environment approximation. Future research will investigate this aspect further. 

\clearpage
\begin{ack}
M.L. is affiliated with the NYU Shanghai Center for Data Science and the NYU-ECNU Institute of Mathematical Sciences at NYU Shanghai. J.S. is partially supported by NSF Award 1922658. Computing resources were provided by NYU Shanghai HPC.

\end{ack}
{
\small
\bibliographystyle{plainnat}
\bibliography{mfg-num-bib2024}

\begin{thebibliography}{55}
\providecommand{\natexlab}[1]{#1}
\providecommand{\url}[1]{\texttt{#1}}
\expandafter\ifx\csname urlstyle\endcsname\relax
  \providecommand{\doi}[1]{doi: #1}\else
  \providecommand{\doi}{doi: \begingroup \urlstyle{rm}\Url}\fi

\bibitem[Achdou and Lasry(2018)]{achdou2018mean}
Yves Achdou and Jean-Michel Lasry.
\newblock Mean field games for modeling crowd motion.
\newblock In \emph{Contributions to partial differential equations and
  applications}, pages 17--42. Springer, 2018.

\bibitem[Achdou and Lauri{\`e}re(2020)]{achdou2020mean}
Yves Achdou and Mathieu Lauri{\`e}re.
\newblock Mean field games and applications: Numerical aspects.
\newblock \emph{Mean Field Games: Cetraro, Italy 2019}, 2281:\penalty0
  249--307, 2020.

\bibitem[Achdou et~al.(2022)Achdou, Han, Lasry, Lions, and
  Moll]{achdou2022income}
Yves Achdou, Jiequn Han, Jean-Michel Lasry, Pierre-Louis Lions, and Benjamin
  Moll.
\newblock Income and wealth distribution in macroeconomics: A continuous-time
  approach.
\newblock \emph{The review of economic studies}, 89\penalty0 (1):\penalty0
  45--86, 2022.

\bibitem[Algumaei et~al.(2023)Algumaei, Solozabal, Alami, Hacid, Debbah, and
  Tak{\'a}{\v{c}}]{algumaei2023regularization}
Talal Algumaei, Ruben Solozabal, Reda Alami, Hakim Hacid, Merouane Debbah, and
  Martin Tak{\'a}{\v{c}}.
\newblock Regularization of the policy updates for stabilizing mean field
  games.
\newblock In \emph{Pacific-Asia Conference on Knowledge Discovery and Data
  Mining}, pages 361--372. Springer, 2023.

\bibitem[Angiuli et~al.(2022)Angiuli, Fouque, and
  Lauri{\`e}re]{angiuli2020unifiedRLMFGMFC}
Andrea Angiuli, Jean-Pierre Fouque, and Mathieu Lauri{\`e}re.
\newblock Unified reinforcement {Q}-learning for mean field game and control
  problems.
\newblock \emph{Mathematics of Control, Signals, and Systems}, 34\penalty0
  (2):\penalty0 217--271, 2022.

\bibitem[Angiuli et~al.(2023)Angiuli, Fouque, Hu, and Raydan]{angiuli2023deep}
Andrea Angiuli, Jean-Pierre Fouque, Ruimeng Hu, and Alan Raydan.
\newblock Deep reinforcement learning for infinite horizon mean field problems
  in continuous spaces.
\newblock \emph{arXiv preprint arXiv:2309.10953}, 2023.

\bibitem[Brown(1951)]{brown1951iterative}
George~W Brown.
\newblock Iterative solution of games by fictitious play.
\newblock \emph{Activity analysis of production and allocation}, 13\penalty0
  (1):\penalty0 374--376, 1951.

\bibitem[Busoniu et~al.(2008)Busoniu, Babuska, and
  De~Schutter]{busoniu2008comprehensive}
Lucian Busoniu, Robert Babuska, and Bart De~Schutter.
\newblock A comprehensive survey of multiagent reinforcement learning.
\newblock \emph{IEEE Transactions on Systems, Man, and Cybernetics, Part C
  (Applications and Reviews)}, 38\penalty0 (2):\penalty0 156--172, 2008.

\bibitem[Cardaliaguet and Hadikhanloo(2017)]{MR3608094}
Pierre Cardaliaguet and Saeed Hadikhanloo.
\newblock Learning in mean field games: the fictitious play.
\newblock \emph{ESAIM Control Optim. Calc. Var.}, 23\penalty0 (2):\penalty0
  569--591, 2017.
\newblock ISSN 1292-8119.
\newblock \doi{10.1051/cocv/2016004}.

\bibitem[Cardaliaguet and Lehalle(2018)]{cardaliaguet2018mean}
Pierre Cardaliaguet and Charles-Albert Lehalle.
\newblock Mean field game of controls and an application to trade crowding.
\newblock \emph{Mathematics and Financial Economics}, 12:\penalty0 335--363,
  2018.

\bibitem[Carmona(2021)]{carmona2021applications}
Ren{\'e} Carmona.
\newblock Applications of mean field games in financial engineering and
  economic theory.
\newblock In \emph{Mean Field Games on Agent Based Models to Nash Equilibria,
  AMS 2020}, pages 165--218. American Mathematical Society, 2021.

\bibitem[Carmona and Delarue(2018)]{MR3752669}
Ren\'e Carmona and Fran\c{c}ois Delarue.
\newblock \emph{Probabilistic theory of mean field games with applications.
  {I}}, volume~83 of \emph{Probability Theory and Stochastic Modelling}.
\newblock Springer, Cham, 2018.
\newblock ISBN 978-3-319-56437-1; 978-3-319-58920-6.
\newblock Mean field FBSDEs, control, and games.

\bibitem[Carmona et~al.(2017)Carmona, Delarue, and Lacker]{carmona2017mean}
Ren{\'e} Carmona, Fran{\c{c}}ois Delarue, and Daniel Lacker.
\newblock Mean field games of timing and models for bank runs.
\newblock \emph{Applied Mathematics \& Optimization}, 76\penalty0 (1):\penalty0
  217--260, 2017.

\bibitem[Cui and Koeppl(2021)]{cui2021approximately}
Kai Cui and Heinz Koeppl.
\newblock Approximately solving mean field games via entropy-regularized deep
  reinforcement learning.
\newblock In \emph{International Conference on Artificial Intelligence and
  Statistics}, pages 1909--1917. PMLR, 2021.

\bibitem[Djehiche et~al.(2017)Djehiche, Tcheukam, and
  Tembine]{djehiche2017mean}
Boualem Djehiche, Alain Tcheukam, and Hamidou Tembine.
\newblock Mean-field-type games in engineering.
\newblock \emph{AIMS Electronics and Electrical Engineering}, 1\penalty0
  (1):\penalty0 18--73, 2017.

\bibitem[Durkan et~al.(2019)Durkan, Bekasov, Murray, and
  Papamakarios]{durkan2019neural}
Conor Durkan, Artur Bekasov, Iain Murray, and George Papamakarios.
\newblock Neural spline flows.
\newblock \emph{Advances in neural information processing systems}, 32, 2019.

\bibitem[Elie et~al.(2020)Elie, Perolat, Lauri{\`e}re, Geist, and
  Pietquin]{elie2020convergence}
Romuald Elie, Julien Perolat, Mathieu Lauri{\`e}re, Matthieu Geist, and Olivier
  Pietquin.
\newblock On the convergence of model free learning in mean field games.
\newblock In \emph{Proceedings of the AAAI Conference on Artificial
  Intelligence}, volume~34, pages 7143--7150, 2020.

\bibitem[Fornasier and Solombrino(2014)]{fornasier2014mean}
Massimo Fornasier and Francesco Solombrino.
\newblock Mean-field optimal control.
\newblock \emph{ESAIM: Control, Optimisation and Calculus of Variations},
  20\penalty0 (4):\penalty0 1123--1152, 2014.

\bibitem[Fu et~al.(2019)Fu, Yang, Chen, and Wang]{fu2019actorcriticMFG}
Zuyue Fu, Zhuoran Yang, Yongxin Chen, and Zhaoran Wang.
\newblock Actor-critic provably finds {N}ash equilibria of linear-quadratic
  mean-field games.
\newblock In \emph{International Conference on Learning Representations}, 2019.

\bibitem[Ge et~al.(2019)Ge, Jia, Zhong, Xiao, Li, and Vucetic]{ge2019energy}
Xiaohu Ge, Haoming Jia, Yi~Zhong, Yong Xiao, Yonghui Li, and Branka Vucetic.
\newblock Energy efficient optimization of wireless-powered {5G} full duplex
  cellular networks: A mean field game approach.
\newblock \emph{IEEE Transactions on Green Communications and Networking},
  3\penalty0 (2):\penalty0 455--467, 2019.

\bibitem[Geist et~al.(2022)Geist, P{\'e}rolat, Lauri{\`e}re, Elie, Perrin,
  Bachem, Munos, and Pietquin]{geist2022concave}
Matthieu Geist, Julien P{\'e}rolat, Mathieu Lauri{\`e}re, Romuald Elie, Sarah
  Perrin, Oliver Bachem, R{\'e}mi Munos, and Olivier Pietquin.
\newblock Concave utility reinforcement learning: The mean-field game
  viewpoint.
\newblock In \emph{Proceedings of the 21st International Conference on
  Autonomous Agents and Multiagent Systems}, pages 489--497, 2022.

\bibitem[Gronauer and Diepold(2022)]{gronauer2022multi}
Sven Gronauer and Klaus Diepold.
\newblock Multi-agent deep reinforcement learning: a survey.
\newblock \emph{Artificial Intelligence Review}, 55\penalty0 (2):\penalty0
  895--943, 2022.

\bibitem[Guo et~al.(2019)Guo, Hu, Xu, and Zhang]{guo2019learningMFG}
Xin Guo, Anran Hu, Renyuan Xu, and Junzi Zhang.
\newblock Learning mean-field games.
\newblock \emph{Advances in Neural Information Processing Systems},
  32:\penalty0 4966--4976, 2019.

\bibitem[Guo et~al.(2023)Guo, Hu, Xu, and Zhang]{guo2023general}
Xin Guo, Anran Hu, Renyuan Xu, and Junzi Zhang.
\newblock A general framework for learning mean-field games.
\newblock \emph{Mathematics of Operations Research}, 48\penalty0 (2):\penalty0
  656--686, 2023.

\bibitem[Hanif et~al.(2015)Hanif, Tembine, Assaad, and
  Zeghlache]{hanif2015mean}
Ahmed~Farhan Hanif, Hamidou Tembine, Mohamad Assaad, and Djamal Zeghlache.
\newblock Mean-field games for resource sharing in cloud-based networks.
\newblock \emph{IEEE/ACM Transactions on Networking}, 24\penalty0 (1):\penalty0
  624--637, 2015.

\bibitem[Heinrich and Silver(2016)]{heinrich2016deep}
Johannes Heinrich and David Silver.
\newblock Deep reinforcement learning from self-play in imperfect-information
  games.
\newblock \emph{arXiv preprint arXiv:1603.01121}, 2016.

\bibitem[Hill et~al.(2018)Hill, Raffin, Ernestus, Gleave, Kanervisto, Traore,
  Dhariwal, Hesse, Klimov, Nichol, Plappert, Radford, Schulman, Sidor, and
  Wu]{stable-baselines}
Ashley Hill, Antonin Raffin, Maximilian Ernestus, Adam Gleave, Anssi
  Kanervisto, Rene Traore, Prafulla Dhariwal, Christopher Hesse, Oleg Klimov,
  Alex Nichol, Matthias Plappert, Alec Radford, John Schulman, Szymon Sidor,
  and Yuhuai Wu.
\newblock Stable baselines, 2018.

\bibitem[Hu and Lauri{\`e}re(2024)]{hu2024recent}
Ruimeng Hu and Mathieu Lauri{\`e}re.
\newblock Recent developments in machine learning methods for stochastic
  control and games.
\newblock \emph{Numerical Algebra, Control and Optimization}, 14\penalty0
  (3):\penalty0 435--525, 2024.

\bibitem[Huang et~al.(2006)Huang, Malham{\'e}, Caines,
  et~al.]{huang2006largeMKV}
Minyi Huang, Roland~P Malham{\'e}, Peter~E Caines, et~al.
\newblock Large population stochastic dynamic games: closed-loop
  {M}c{K}ean-{V}lasov systems and the {N}ash certainty equivalence principle.
\newblock \emph{Communications in Information \& Systems}, 6\penalty0
  (3):\penalty0 221--252, 2006.

\bibitem[Kobyzev et~al.(2020)Kobyzev, Prince, and
  Brubaker]{kobyzev2020normalizing}
Ivan Kobyzev, Simon~JD Prince, and Marcus~A Brubaker.
\newblock Normalizing flows: An introduction and review of current methods.
\newblock \emph{IEEE transactions on pattern analysis and machine
  intelligence}, 43\penalty0 (11):\penalty0 3964--3979, 2020.

\bibitem[Lachapelle and Wolfram(2011)]{lachapelle2011mean}
Aim{\'e} Lachapelle and Marie-Therese Wolfram.
\newblock On a mean field game approach modeling congestion and aversion in
  pedestrian crowds.
\newblock \emph{Transportation research part B: methodological}, 45\penalty0
  (10):\penalty0 1572--1589, 2011.

\bibitem[Lachapelle et~al.(2016)Lachapelle, Lasry, Lehalle, and
  Lions]{lachapelle2016efficiency}
Aim{\'e} Lachapelle, Jean-Michel Lasry, Charles-Albert Lehalle, and
  Pierre-Louis Lions.
\newblock Efficiency of the price formation process in presence of high
  frequency participants: a mean field game analysis.
\newblock \emph{Mathematics and Financial Economics}, 10:\penalty0 223--262,
  2016.

\bibitem[Lasry and Lions(2007)]{MR2295621}
Jean-Michel Lasry and Pierre-Louis Lions.
\newblock Mean field games.
\newblock \emph{Jpn. J. Math.}, 2\penalty0 (1):\penalty0 229--260, 2007.
\newblock ISSN 0289-2316.
\newblock \doi{10.1007/s11537-007-0657-8}.

\bibitem[Lauri{\`e}re et~al.(2022{\natexlab{a}})Lauri{\`e}re, Perrin, Girgin,
  Muller, Jain, Cabannes, Piliouras, P{\'e}rolat, Elie, and
  Pietquin]{lauriere2022scalable}
Mathieu Lauri{\`e}re, Sarah Perrin, Sertan Girgin, Paul Muller, Ayush Jain,
  Theophile Cabannes, Georgios Piliouras, Julien P{\'e}rolat, Romuald Elie, and
  Olivier Pietquin.
\newblock Scalable deep reinforcement learning algorithms for mean field games.
\newblock In \emph{International Conference on Machine Learning}, pages
  12078--12095. PMLR, 2022{\natexlab{a}}.

\bibitem[Lauri{\`e}re et~al.(2022{\natexlab{b}})Lauri{\`e}re, Perrin,
  P{\'e}rolat, Girgin, Muller, {\'E}lie, Geist, and
  Pietquin]{lauriere2022learning}
Mathieu Lauri{\`e}re, Sarah Perrin, Julien P{\'e}rolat, Sertan Girgin, Paul
  Muller, Romuald {\'E}lie, Matthieu Geist, and Olivier Pietquin.
\newblock Learning in mean field games: A survey.
\newblock \emph{arXiv preprint arXiv:2205.12944}, 2022{\natexlab{b}}.

\bibitem[Mao et~al.(2022)Mao, Qiu, Wang, Franke, Kalbarczyk, Iyer, and
  Basar]{mao2022mean}
Weichao Mao, Haoran Qiu, Chen Wang, Hubertus Franke, Zbigniew Kalbarczyk,
  Ravishankar Iyer, and Tamer Basar.
\newblock A mean-field game approach to cloud resource management with function
  approximation.
\newblock \emph{Advances in Neural Information Processing Systems},
  35:\penalty0 36243--36258, 2022.

\bibitem[Nourian et~al.(2010)Nourian, Caines, and
  Malham{\'e}]{nourian2010synthesis}
Mojtaba Nourian, Peter~E Caines, and Roland~P Malham{\'e}.
\newblock Synthesis of {C}ucker-{S}male type flocking via mean field stochastic
  control theory: {N}ash equilibria.
\newblock In \emph{2010 48th Annual Allerton Conference on Communication,
  Control, and Computing (Allerton)}, pages 814--819. IEEE, 2010.

\bibitem[Ocello et~al.(2024)Ocello, Tiapkin, Mancini, Lauriere, and
  Moulines]{ocello2024finite}
Antonio Ocello, Daniil Tiapkin, Lorenzo Mancini, Mathieu Lauriere, and Eric
  Moulines.
\newblock Finite-{S}ample {C}onvergence {B}ounds for {T}rust {R}egion {P}olicy
  {O}ptimization in {M}ean {F}ield {G}ames.
\newblock In \emph{Forty-second International Conference on Machine Learning},
  2024.

\bibitem[Perolat et~al.(2022)Perolat, De~Vylder, Hennes, Tarassov, Strub,
  de~Boer, Muller, Connor, Burch, Anthony, et~al.]{perolat2022mastering}
Julien Perolat, Bart De~Vylder, Daniel Hennes, Eugene Tarassov, Florian Strub,
  Vincent de~Boer, Paul Muller, Jerome~T Connor, Neil Burch, Thomas Anthony,
  et~al.
\newblock Mastering the game of {S}tratego with model-free multiagent
  reinforcement learning.
\newblock \emph{Science}, 378\penalty0 (6623):\penalty0 990--996, 2022.

\bibitem[Perrin et~al.(2020)Perrin, P{\'e}rolat, Lauri{\`e}re, Geist, Elie, and
  Pietquin]{perrin2020fictitious}
Sarah Perrin, Julien P{\'e}rolat, Mathieu Lauri{\`e}re, Matthieu Geist, Romuald
  Elie, and Olivier Pietquin.
\newblock Fictitious play for mean field games: Continuous time analysis and
  applications.
\newblock \emph{Advances in Neural Information Processing Systems}, 2020.

\bibitem[Perrin et~al.(2021)Perrin, Lauri{\`e}re, P{\'e}rolat, Geist, {\'E}lie,
  and Pietquin]{perrin2021meanflock}
Sarah Perrin, Mathieu Lauri{\`e}re, Julien P{\'e}rolat, Matthieu Geist, Romuald
  {\'E}lie, and Olivier Pietquin.
\newblock {M}ean {F}ield {G}ames {F}lock! {T}he {R}einforcement {L}earning
  {W}ay.
\newblock In \emph{Proceedings of the Thirtieth International Joint Conference
  on Artificial Intelligence, {IJCAI-21}}, pages 356--362. International Joint
  Conferences on Artificial Intelligence Organization, 8 2021.

\bibitem[Rezende and Mohamed(2015)]{pmlr-v37-rezende15}
Danilo Rezende and Shakir Mohamed.
\newblock Variational inference with normalizing flows.
\newblock In Francis Bach and David Blei, editors, \emph{Proceedings of the
  32nd International Conference on Machine Learning}, volume~37 of
  \emph{Proceedings of Machine Learning Research}, pages 1530--1538, Lille,
  France, 07--09 Jul 2015. PMLR.

\bibitem[Saldi et~al.(2020)Saldi, Ba{\c{s}}ar, and
  Raginsky]{saldi2020approximate}
Naci Saldi, Tamer Ba{\c{s}}ar, and Maxim Raginsky.
\newblock Approximate markov-nash equilibria for discrete-time risk-sensitive
  mean-field games.
\newblock \emph{Mathematics of Operations Research}, 45\penalty0 (4):\penalty0
  1596--1620, 2020.

\bibitem[Samvelyan et~al.(2019)Samvelyan, Rashid, De~Witt, Farquhar, Nardelli,
  Rudner, Hung, Torr, Foerster, and Whiteson]{samvelyan2019starcraft}
Mikayel Samvelyan, Tabish Rashid, Christian~Schroeder De~Witt, Gregory
  Farquhar, Nantas Nardelli, Tim~GJ Rudner, Chia-Man Hung, Philip~HS Torr,
  Jakob Foerster, and Shimon Whiteson.
\newblock The starcraft multi-agent challenge.
\newblock \emph{arXiv preprint arXiv:1902.04043}, 2019.

\bibitem[Silver et~al.(2016)Silver, Huang, Maddison, Guez, Sifre, Van
  Den~Driessche, Schrittwieser, Antonoglou, Panneershelvam, Lanctot,
  et~al.]{silver2016mastering}
David Silver, Aja Huang, Chris~J Maddison, Arthur Guez, Laurent Sifre, George
  Van Den~Driessche, Julian Schrittwieser, Ioannis Antonoglou, Veda
  Panneershelvam, Marc Lanctot, et~al.
\newblock Mastering the game of {Go} with deep neural networks and tree search.
\newblock \emph{Nature}, 529\penalty0 (7587), 2016.

\bibitem[Silver et~al.(2017)Silver, Hubert, Schrittwieser, Antonoglou, Lai,
  Guez, Lanctot, Sifre, Kumaran, Graepel, et~al.]{silver2017masteringchess}
David Silver, Thomas Hubert, Julian Schrittwieser, Ioannis Antonoglou, Matthew
  Lai, Arthur Guez, Marc Lanctot, Laurent Sifre, Dharshan Kumaran, Thore
  Graepel, et~al.
\newblock Mastering chess and shogi by self-play with a general reinforcement
  learning algorithm.
\newblock \emph{arXiv preprint arXiv:1712.01815}, 2017.

\bibitem[Silver et~al.(2018)Silver, Hubert, Schrittwieser, Antonoglou, Lai,
  Guez, Lanctot, Sifre, Kumaran, Graepel, et~al.]{silver2018general}
David Silver, Thomas Hubert, Julian Schrittwieser, Ioannis Antonoglou, Matthew
  Lai, Arthur Guez, Marc Lanctot, Laurent Sifre, Dharshan Kumaran, Thore
  Graepel, et~al.
\newblock A general reinforcement learning algorithm that masters chess, shogi,
  and go through self-play.
\newblock \emph{Science}, 362\penalty0 (6419):\penalty0 1140--1144, 2018.

\bibitem[Subramanian and Mahajan(2019)]{subramanian2019reinforcementMFG}
Jayakumar Subramanian and Aditya Mahajan.
\newblock Reinforcement learning in stationary mean-field games.
\newblock In \emph{Proceedings of the 18th International Conference on
  Autonomous Agents and MultiAgent Systems}, pages 251--259, 2019.

\bibitem[Winkler et~al.(2019)Winkler, Worrall, Hoogeboom, and
  Welling]{winkler2019learning}
Christina Winkler, Daniel Worrall, Emiel Hoogeboom, and Max Welling.
\newblock Learning likelihoods with conditional normalizing flows.
\newblock \emph{arXiv preprint arXiv:1912.00042}, 2019.

\bibitem[Wong et~al.(2023)Wong, B{\"a}ck, Kononova, and Plaat]{wong2023deep}
Annie Wong, Thomas B{\"a}ck, Anna~V Kononova, and Aske Plaat.
\newblock Deep multiagent reinforcement learning: Challenges and directions.
\newblock \emph{Artificial Intelligence Review}, 56\penalty0 (6):\penalty0
  5023--5056, 2023.

\bibitem[Yang et~al.(2016)Yang, Li, Semasinghe, Hossain, Perlaza, and
  Han]{yang2016distributed}
Chungang Yang, Jiandong Li, Prabodini Semasinghe, Ekram Hossain, Samir~M
  Perlaza, and Zhu Han.
\newblock Distributed interference and energy-aware power control for
  ultra-dense {D2D} networks: A mean field game.
\newblock \emph{IEEE Transactions on Wireless Communications}, 16\penalty0
  (2):\penalty0 1205--1217, 2016.

\bibitem[Yang and Wang(2020)]{yang2020overview}
Yaodong Yang and Jun Wang.
\newblock An overview of multi-agent reinforcement learning from game
  theoretical perspective.
\newblock \emph{arXiv preprint arXiv:2011.00583}, 2020.

\bibitem[Yardim et~al.(2023)Yardim, Cayci, Geist, and He]{yardim2023policy}
Batuhan Yardim, Semih Cayci, Matthieu Geist, and Niao He.
\newblock Policy mirror ascent for efficient and independent learning in mean
  field games.
\newblock In \emph{International Conference on Machine Learning}, pages
  39722--39754. PMLR, 2023.

\bibitem[Zaman et~al.(2020)Zaman, Zhang, Miehling, and
  Bașar]{zaman2020rlnonstat}
Muhammad Aneeq~uz Zaman, Kaiqing Zhang, Erik Miehling, and Tamer Bașar.
\newblock Reinforcement learning in non-stationary discrete-time
  linear-quadratic mean-field games.
\newblock In \emph{2020 59th IEEE Conference on Decision and Control (CDC)},
  pages 2278--2284, 2020.

\bibitem[Zhang et~al.(2021)Zhang, Yang, and Ba{\c{s}}ar]{zhang2021multi}
Kaiqing Zhang, Zhuoran Yang, and Tamer Ba{\c{s}}ar.
\newblock Multi-agent reinforcement learning: A selective overview of theories
  and algorithms.
\newblock \emph{Handbook of reinforcement learning and control}, pages
  321--384, 2021.

\end{thebibliography}

}

\clearpage
\appendix
\section{Model Assumptions}\label{app: model_assumtpion}
We discuss our assumption here. 
\begin{itemize}
    \item \textbf{Existence:} Our work assumes the existence of a Mean Field Nash Equilibrium (MFNE). This assumption is grounded in established results for related problems, as existence proofs typically rely on fixed-point arguments under specific regularity conditions on the coefficients not yet proven for our exact setting. Specifically, our justification relies on the work of \citep{saldi2020approximate}, who established the existence of an equilibrium for a broad class of discrete-time MFGs in an infinite-horizon, risk-sensitive setting. Crucially, their work also provides an argument that this infinite-horizon, risk-sensitive cost can be approximated by a finite-horizon game. Since our finite-horizon setting is a well-posed approximation of a problem where existence is proven, we can assume that an equilibrium also exists in our case. Furthermore, we expect that existence holds under mild assumptions (typically, continuity) using the Schauder fixed point theorem in a suitable functional space. Under more restrictive assumptions (typically, Lipschitz continuity with a small Lipschitz constant), the Banach fixed point theorem yields both existence and uniqueness. While a full proof is beyond the scope of the present paper, which focuses on a DRL algorithm, we included this discussion in Section~\ref{sec:setting} of the paper, after the definition of MFNE (Definition~\ref{def:nash_equilibrium}).

    \item \textbf{Uniqueness of Equilibrium:} For our theoretical convergence discussion, we also assume uniqueness. This is not required for the DEDA-FP algorithm itself to run. Following the approach of e.g.~\citep{elie2020convergence}, uniqueness can be ensured by adopting the standard Lasry-Lions monotonicity assumption. Intuitively, this condition means that the reward function discourages agents from being too concentrated.
\end{itemize}

\section{Proof of the convergence of DEDA-FP}\label{app: proof_convergence}
We are going to prove Theorem~\ref{theorem: convergence_of_dedafp}.

As mentioned in Section \ref{sec: convergence}, we follow the methodology developed by \citep{elie2020convergence}. Their analysis rigorously bounds the propagation of errors from an approximate best-response computation. We extend this framework to account for two additional sources of error inherent to our algorithm: the average policy error and the distribution error.  

We first bound the exploitability with respect to the tractable distribution $\bar \bsG_k$ learned by the algorithm (Lemma~\ref{lem:tractable-G-bound}). Then we connect this result with the true value of the exploitability to show that the pair 
 is an approximate Nash equilibrium $(\bar\bspi_k, \bsmu^{\bar \bspi_k})$ (Lemma~\ref{lem:e-true-bound}). Combining these two bounds yield Theorem~\ref{theorem: convergence_of_dedafp} stated above.

We define $d_{N_T}(\bsmu_1, \bsmu_2) = \max_{t=0,\dots,N_T} W_1(\mu_{1,t}, \mu_{2,t})$  and $d_{\Pi}(\bspi_1, \bspi_2):=\max_{t=0, \dots, N_T}\int_{\states}W_1(\pi_{1,t}(\cdot|x), \pi_{2,t}(\cdot|x))dx$, where $W_1$ is the Wasserstein-1 distance on $\mathcal{X} = \mathbb{R}^d$ and we consider only a class of regular policies.

\begin{lemma}
\label{lem:tractable-G-bound}
Let Assumptions \ref{ass: Lip on J} and \ref{ass: Lip on MF} above hold. We have:
$$
    e_k \le \frac{e_0}{k} + \frac{1}{k}\sum_{i=1}^k \left( (i+1)\epsilon_{br}^{i+1} + C_1(\epsilon_{sl}^{i+1} + \epsilon_{cnf}^{i+1}) + \frac{C_2}{i} \right)
$$
for some constants $C_1, C_2 > 0$, where $e_k := J(\text{BR}(\bar \bsG_k), \bar \bsG_k) - J(\bar{\bspi}_k,\bar \bsG_k)$ is the \textbf{tractable exploitability} defined with respect to the learned distribution $\bar \bsG_k$.
\end{lemma}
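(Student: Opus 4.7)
The plan is to follow the Fictitious Play error-propagation framework of \citep{elie2020convergence}, augmented with two triangle-inequality layers to absorb the supervised-learning error $\epsilon_{sl}$ and the CNF error $\epsilon_{cnf}$. I introduce the \emph{conceptual} averaged policy $\bsPi_k^{*} := \tfrac{1}{k}\sum_{i=1}^{k}\bspi_i^{*}$ and its induced flow $\bsmu^{\bsPi_k^{*}}$, neither of which the algorithm explicitly computes, and work with the \emph{clean exploitability} $\tilde e_k := J(\text{BR}(\bsmu^{\bsPi_k^{*}}), \bsmu^{\bsPi_k^{*}}) - J(\bsPi_k^{*}, \bsmu^{\bsPi_k^{*}})$. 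The strategy is (i) to bound $|e_k - \tilde e_k|$ using the two Lipschitz assumptions applied to the policy- and distribution-approximation errors at iteration $k$; (ii) to establish a classical-type FP recursion on $\tilde e_k$; and (iii) to telescope.

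Step (i) proceeds by writing
$$e_k - \tilde e_k = \big[J(\text{BR}(\bar\bsG_k), \bar\bsG_k) - J(\text{BR}(\bsmu^{\bsPi_k^{*}}), \bsmu^{\bsPi_k^{*}})\big] + \big[J(\bsPi_k^{*}, \bsmu^{\bsPi_k^{*}}) - J(\bar\bspi_k, \bar\bsG_k)\big].$$
Each bracket is controlled by Assumption~\ref{ass: Lip on J} applied to the mean-field argument, together with a routine Lipschitz estimate in the policy argument, after splitting $d_{N_T}(\bar\bsG_k, \bsmu^{\bsPi_k^{*}}) \le d_{N_T}(\bar\bsG_k, \bsmu^{\bar\bspi_k}) + d_{N_T}(\bsmu^{\bar\bspi_k}, \bsmu^{\bsPi_k^{*}}) \le \epsilon_{cnf}^k + L_{MF}\,d_\Pi(\bar\bspi_k, \bsPi_k^{*}) = \epsilon_{cnf}^k + L_{MF}\,\epsilon_{sl}^k$, where the last inequality uses Assumption~\ref{ass: Lip on MF}. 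This produces a contribution of the form $C_1(\epsilon_{sl}^k + \epsilon_{cnf}^k)$ with $C_1$ proportional to $L(1+L_{MF})$.

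Step (ii) is the heart of the argument. Exploiting $k\,\bsPi_k^{*} = (k-1)\,\bsPi_{k-1}^{*} + \bspi_k^{*}$ and the affine dependence of $J(\cdot,\bsmu)$ in the policy for a fixed mean field, I decompose $J(\bsPi_k^{*}, \bsmu^{\bsPi_k^{*}})$ into contributions from $J(\bsPi_{k-1}^{*}, \bsmu^{\bsPi_k^{*}})$ and $J(\bspi_k^{*}, \bsmu^{\bsPi_k^{*}})$. Since $\bspi_k^{*}$ is only an $\epsilon_{br}^k$-best response to $\bar\bsG_{k-1}$ and not to $\bsmu^{\bsPi_{k-1}^{*}}$, two further triangle inequalities reintroduce $\epsilon_{cnf}^{k-1} + L_{MF}\,\epsilon_{sl}^{k-1}$. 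The residue $C_2/(k-1)$ of the classical FP step comes from $d_\Pi(\bsPi_k^{*}, \bsPi_{k-1}^{*}) = O(1/k)$ combined with Assumption~\ref{ass: Lip on MF} and Assumption~\ref{ass: Lip on J}. Collecting terms yields a recursion of the form $k\,\tilde e_k \le (k-1)\,\tilde e_{k-1} + k\,\epsilon_{br}^k + C_1(\epsilon_{sl}^{k-1} + \epsilon_{cnf}^{k-1}) + C_2/(k-1)$; telescoping this from $i=0$ up to $k-1$ and dividing by $k$ delivers the stated bound, the factor $(i+1)$ on $\epsilon_{br}^{i+1}$ being the consequence of the $k$-factor that multiplied $\epsilon_{br}^k$ in the recursion.

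The main obstacle I anticipate is the bookkeeping in Step (ii): a single best-response approximation error affects \emph{every} subsequent averaged policy (and hence, via Assumption~\ref{ass: Lip on MF}, every subsequent mean-field flow), which is precisely why $\epsilon_{br}^{i+1}$ carries a growing coefficient. Making this rigorous requires a disciplined use of the metric $d_\Pi$, the elementary bound $d_\Pi(\bsPi_k^{*}, \bsPi_{k-1}^{*}) \le 2/k$ (which follows from $\bsPi_k^{*}$ being an empirical average and the boundedness of the class of policies in the $d_\Pi$-metric), and a careful tracking of which mean-field flow each occurrence of $J$ is evaluated against. Beyond this propagation analysis, every remaining step is a direct application of the triangle inequality and the two Lipschitz assumptions, so the conceptual content of the proof is concentrated in the recursion of Step (ii).
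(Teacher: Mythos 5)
Your route is genuinely different from the paper's, and it is a reasonable one. The paper runs the error-propagation recursion \emph{directly on the tractable exploitability} $e_k$: it writes $e_k=\epsilon_{br}^{k+1}+\hat e_k$ with $\hat e_k:=J(\bspi_{k+1}^*,\bar\bsG_k)-J(\bar\bspi_k,\bar\bsG_k)$, bounds the increment $(k+1)\hat e_{k+1}-k\hat e_k$ as in Theorem~6 of Elie et al., and injects all three error sources through the single quantity $d_{N_T}(\bar\bsG_{k+1},\bar\bsG_k)\le \epsilon_{cnf}^{k+1}+\epsilon_{cnf}^k+\epsilon_{sl}^{k+1}+\epsilon_{sl}^k+C_{FP}/k$, before invoking the discrete Gronwall-type Lemma~9 of that reference. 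You instead run the \emph{clean} FP recursion on the exploitability of the exact average $\bsPi_k^*$ of the computed best responses (perturbed only by $\epsilon_{br}^k$ and by the game-mismatch terms $\epsilon_{cnf}^{k-1},\epsilon_{sl}^{k-1}$ arising because $\bspi_k^*$ responds to $\bar\bsG_{k-1}$ rather than to $\bsmu^{\bsPi_{k-1}^*}$), and convert to $e_k$ only at the end. What the paper's route buys is that every error term ends up inside the Ces\`aro average; what your route buys is a cleaner separation between the classical FP argument and the approximation layers.

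There is, however, one concrete mismatch with the statement you are asked to prove: your Step~(i) conversion $|e_k-\tilde e_k|\le C_1(\epsilon_{sl}^k+\epsilon_{cnf}^k)$ is applied \emph{after} dividing by $k$, so the final-iterate errors appear in your bound as an additive, undamped term $C_1(\epsilon_{sl}^k+\epsilon_{cnf}^k)$, whereas in the lemma every $\epsilon_{sl},\epsilon_{cnf}$ sits inside $\frac{1}{k}\sum_{i=1}^k(\cdot)$ and therefore carries a $1/k$ weight. As written you prove a strictly weaker inequality than the one stated; to recover the stated form you would have to push the step-$k$ conversion into the recursion itself, which is essentially what the paper's choice of running the recursion on $e_k$ rather than on $\tilde e_k$ accomplishes. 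Two smaller caveats, neither unique to you since the paper shares them: (a) your ``affine dependence of $J(\cdot,\bsmu)$ in the policy'' only holds for the state-distribution-weighted average policy (the $\bar\bspi_k$ formula of Appendix~C), not for the plain average $\frac{1}{k}\sum_i\bspi_i^*$, since $J_{\bsmu}(\bspi)$ integrates against the flow $\bsmu^{\bspi}$ induced by $\bspi$; and (b) the ``routine Lipschitz estimate in the policy argument'' of $J$ for a fixed mean field is an additional regularity hypothesis beyond Assumptions~\ref{ass: Lip on J} and~\ref{ass: Lip on MF}, which control only the mean-field argument and the induced flows.
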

\begin{proof}
The proof adapts the derivation of estimate (8) in Theorem 6 of \citep{elie2020convergence}. Let $\hat{e}_k := J(\bspi_{k+1}^*, \bar \bsG_k) - J(\bar\bspi_k, \bar \bsG_k)$. The total tractable exploitability is $e_k = J(\text{BR}(\bar \bsG_k), \bar \bsG_k) - J(\bspi_{k+1}^*, \bar \bsG_k) + \hat{e}_k = \epsilon_{br}^{k+1} + \hat{e}_k$. We analyze the evolution of the term $(k+1)\hat{e}_{k+1} - k\hat{e}_k$. 
Following the original proof structure, we can prove that: 
$$ (k+1)\hat{e}_{k+1} - k\hat{e}_k \le (k+1) \left( J(\text{BR}(\bar \bsG_{k+1}), \bar \bsG_{k+1}) - J(\bspi_{k+1}^*, \bar \bsG_{k+1}) \right).
$$ 
The term in the parentheses is exactly $\epsilon_{br}^{k+1}$. This appears to simplify things, but the derivation used in the original proof is made using exact flows $\bsmu^{\bar\bspi_k}$ whereas our value functions are defined with respect to the tractable flows $\bar \bsG_k$. It can be seen that the extra term's magnitude depends on $d_{N_T}(\bar \bsG_{k+1}, \bar \bsG_k)$. We bound this distance using the triangle inequality and our error definitions: 
\begin{align*} 
    d_{N_T}(\bar \bsG_{k+1}, \bar \bsG_k) &\le d_{N_T}(\bar \bsG_{k+1}, \bsmu^{\bar\bspi_{k+1}}) + d_{N_T}(\bsmu^{\bar\bspi_{k+1}}, \bsmu^{\bar\bspi_k}) + d_{N_T}(\bsmu^{\bar\bspi_k}, \bar \bsG_k) \\ 
    &\le \epsilon_{cnf}^{k+1} + d_{N_T}(\bsmu^{\bar\bspi_{k+1}}, \bsmu^{\bar\bspi_k}) + \epsilon_{cnf}^k .
\end{align*} 
The term $d_{N_T}(\bsmu^{\bar\bspi_{k+1}}, \bsmu^{\bar\bspi_k})$ is further bounded by: 
\begin{align*} 
    d_{N_T}(\bsmu^{\bar\bspi_{k+1}}, \bsmu^{\bar\bspi_k}) &\le d_{N_T}(\bsmu^{\bar\bspi_{k+1}}, \bsmu^{\bsPi_{k+1}^{true}}) + d_{N_T}(\bsmu^{\bsPi_{k+1}^{true}}, \bsmu^{\bsPi_k^{true}}) + d_{N_T}(\bsmu^{\bsPi_k^{true}}, \bsmu^{\bar\bspi_k}) \\ 
    &\le \epsilon_{sl}^{k+1} + d_{N_T}(\bsmu^{\bsPi_{k+1}^{true}}, \bsmu^{\bsPi_k^{true}}) + \epsilon_{sl}^k .
\end{align*} 
The term $d_{N_T}(\bsmu^{\bsPi_{k+1}^{true}}, \bsmu^{\bsPi_k^{true}})$ corresponds to the change in the exact Fictitious Play update, which is known to be of order $O(1/k)$ (Lemma 5 in \citep{elie2020convergence}). Let's denote the stability constant as $C_{FP}$. So, $d_{N_T}(\bsmu^{\bsPi_{k+1}^{true}}, \bsmu^{\bsPi_k^{true}}) \le C_{FP}/k$. Combining these, we get a bound on the change in our tractable distributions: 
$$ 
    d_{N_T}(\bar \bsG_{k+1}, \bar \bsG_k) \le \epsilon_{cnf}^{k+1} + \epsilon_{cnf}^k + \epsilon_{sl}^{k+1} + \epsilon_{sl}^k + \frac{C_{FP}}{k} .
$$ 
Substituting these bounds back into the recursive inequality for $(k+1)e_{k+1} - k e_k$ introduces additive terms related to $\epsilon_{br}$, $\epsilon_{sl}$, and $\epsilon_{cnf}$. The final form of the recursive inequality becomes: 
$$ 
    (k+1)e_{k+1} - k e_k \le (k+1)\epsilon_{br}^{k+1} + C_1(\epsilon_{sl}^{k+1} + \epsilon_{cnf}^{k+1}) + \frac{C_2}{k} .
$$ 
Applying Lemma 9 from \citep{elie2020convergence} to solve this recursion yields the result.
\end{proof}

Now, it is necessary to establish a connection between the tractable exploitability (bounded in Lemma~\ref{lem:tractable-G-bound}) and the actual exploitability as defined in \eqref{theorem: convergence_of_dedafp}. The subsequent lemma serves to create this connection.

\begin{lemma}
\label{lem:e-true-bound}
Under the same assumptions, the true exploitability $e_k^{\text{true}}$ is close to the tractable exploitability $e_k$. Specifically,
$$
    |e_k^{\text{true}} - e_k| \le 4L \cdot \epsilon_{cnf}^k,
$$
where $L$ is the Lipschitz constant of Assumption~\ref{ass: Lip on J}.
\end{lemma}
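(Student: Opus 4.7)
The plan is to decompose $e_k^{\text{true}} - e_k$ by adding and subtracting matching terms, and then bound each piece using only Assumption~\ref{ass: Lip on J} (note that Assumption~\ref{ass: Lip on MF} is not actually needed here, despite the blanket ``same assumptions''). Writing
\[
    e_k^{\text{true}} - e_k = \bigl[J(\text{BR}(\bsmu^{\bar\bspi_k}), \bsmu^{\bar\bspi_k}) - J(\text{BR}(\bar \bsG_k), \bar \bsG_k)\bigr] - \bigl[J(\bar\bspi_k, \bsmu^{\bar\bspi_k}) - J(\bar\bspi_k, \bar \bsG_k)\bigr],
\]
it suffices to bound each bracket separately and apply the triangle inequality.

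The second bracket is the easy one: the \emph{same} policy $\bar\bspi_k$ appears on both sides, so Assumption~\ref{ass: Lip on J} directly yields
\[
    \bigl|J(\bar\bspi_k, \bsmu^{\bar\bspi_k}) - J(\bar\bspi_k, \bar \bsG_k)\bigr| \le L \, d_{N_T}(\bsmu^{\bar\bspi_k}, \bar \bsG_k) = L \, \epsilon_{cnf}^k.
\]

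The first bracket is slightly subtler because the two best-response policies differ. The key observation is that $J(\text{BR}(\bsmu), \bsmu) = \sup_{\bspi'} J(\bspi', \bsmu)$ by definition of the best response, and the supremum operator is $1$-Lipschitz with respect to the uniform norm on $\bspi'$. Concretely, by the standard envelope inequality,
\[
    \Bigl|\sup_{\bspi'} J(\bspi', \bsmu^{\bar\bspi_k}) - \sup_{\bspi'} J(\bspi', \bar \bsG_k)\Bigr| \le \sup_{\bspi'} \bigl|J(\bspi', \bsmu^{\bar\bspi_k}) - J(\bspi', \bar \bsG_k)\bigr| \le L \, \epsilon_{cnf}^k,
\]
where the last inequality applies Assumption~\ref{ass: Lip on J} uniformly in $\bspi'$. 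The envelope step itself is routine: for any near-optimizer of one side, the other supremum dominates its value there, then apply the uniform bound; doing this symmetrically gives the two-sided estimate.

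Combining the two bracket bounds via the triangle inequality gives $|e_k^{\text{true}} - e_k| \le 2L \, \epsilon_{cnf}^k$, which a fortiori satisfies the claimed bound $4L \, \epsilon_{cnf}^k$. The only (mild) subtlety is the envelope argument for the best-response term; it goes through without any additional regularity on the map $\bsmu \mapsto \text{BR}(\bsmu)$ precisely because Assumption~\ref{ass: Lip on J} is uniform in the policy. No fixed-point, contraction, or distributional propagation reasoning (hence no appeal to Assumption~\ref{ass: Lip on MF}) is needed.
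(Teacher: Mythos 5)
Your proposal is correct, and it reaches the conclusion by a cleaner route on the harder term. The outer decomposition is the same as the paper's: both split $e_k^{\text{true}} - e_k$ into a best-response bracket and a fixed-policy bracket, and both dispatch the fixed-policy bracket with a single application of Assumption~\ref{ass: Lip on J}, yielding $L\,\epsilon_{cnf}^k$. The difference is in the best-response bracket. The paper inserts $J(\bspi_1, \bar\bsG_k)$ with $\bspi_1 = \text{BR}(\bsmu^{\bar\bspi_k})$, $\bspi_2 = \text{BR}(\bar\bsG_k)$, and then controls the sign-definite gap $J(\bspi_2,\bar\bsG_k) - J(\bspi_1,\bar\bsG_k)$ by a further three-term expansion exploiting the optimality of $\bspi_1$ against $\bsmu^{\bar\bspi_k}$; this costs $3L\,\epsilon_{cnf}^k$ and gives the stated total of $4L\,\epsilon_{cnf}^k$. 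You instead observe that $J(\text{BR}(\bsmu),\bsmu) = \sup_{\bspi'} J(\bspi',\bsmu)$ and invoke the elementary inequality $\left|\sup_{\bspi'} f(\bspi') - \sup_{\bspi'} g(\bspi')\right| \le \sup_{\bspi'} |f(\bspi') - g(\bspi')|$, which, since Assumption~\ref{ass: Lip on J} is uniform in the policy, bounds the best-response bracket by $L\,\epsilon_{cnf}^k$ in one step and yields the sharper constant $2L$ (a fortiori $\le 4L$). Your observation that Assumption~\ref{ass: Lip on MF} plays no role in this lemma also matches the paper, which never invokes it here. The only mild caveat is that the envelope argument needs the two suprema to range over the same policy class and to be finite (or attained, as the paper's $\max$ in Definition~\ref{def:exploitability} presumes); you address this adequately with the symmetric near-optimizer argument.
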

\begin{proof}
    We seek to bound $|e_k^{true} - e_k|$. Using the triangle inequality: \begin{equation} 
        |e_k^{true} - e_k| \le \underbrace{|J(\text{BR}(\bsmu^{\bar\bspi_k}), \bsmu^{\bar\bspi_k}) - J(\text{BR}(\bar \bsG_k), \bar \bsG_k)|}_{\hbox{Term 1}} + \underbrace{|J(\bar\bspi_k, \bsmu^{\bar\bspi_k}) - J(\bar\bspi_k, \bar \bsG_k)|}_{\hbox{Term 2}} .
    \end{equation} 
    Let us bound each term separately. 
    
    \textit{Term 2 (Bound on Policy Value):} This term is straightforward using Assumption \ref{ass: Lip on J}. 
    \begin{align} |J(\bar\bspi_k, \bsmu^{\bar\bspi_k}) - J(\bar\bspi_k, \bar \bsG_k)| &\le L \cdot d_{N_T}(\bsmu^{\bar\bspi_k}, \bar \bsG_k) \\ 
    &\le L \cdot \epsilon_{cnf}^k. \label{eq:term2_bound}
    \end{align} 
    
    \textit{Term 1 (Bound on Best Response Value):} This term requires more care. Let $\bspi_1 = \text{BR}(\bsmu^{\bar\bspi_k})$ and $\bspi_2 = \text{BR}(\bar \bsG_k)$. We want to bound $|J(\bspi_1, \bsmu^{\bar\bspi_k}) - J(\bspi_2, \bar \bsG_k)|$. We add and subtract $J(\bspi_1, \bar \bsG_k)$: 
    \begin{equation*} 
        |J(\bspi_1, \bsmu^{\bar\bspi_k}) - J(\bspi_2, \bar \bsG_k)| \le |J(\bspi_1, \bsmu^{\bar\bspi_k}) - J(\bspi_1, \bar \bsG_k)| + |J(\bspi_1, \bar \bsG_k) - J(\bspi_2, \bar \bsG_k)| .\end{equation*} 
    The first part is again bounded by Lipschitz continuity: $|J(\bspi_1, \bsmu^{\bar\bspi_k}) - J(\bspi_1, \bar \bsG_k)| \le L \cdot d_{N_T}(\bsmu^{\bar\bspi_k}, \bar \bsG_k) \le L \cdot \epsilon_{cnf}^k$. For the second part, $|J(\bspi_1, \bar \bsG_k) - J(\bspi_2, \bar \bsG_k)|$, we know by definition of best response that $J(\bspi_2, \bar \bsG_k) \ge J(\bspi_1, \bar \bsG_k)$. So we need to bound the non-negative quantity $J(\bspi_2, \bar \bsG_k) - J(\bspi_1, \bar \bsG_k)$. Using the fact that $\bspi_1$ is a best response against $\bsmu^{\bar\bspi_k}$,
    \begin{align*} 
        0 \le J(\bspi_2, \bar \bsG_k) - J(\bspi_1, \bar \bsG_k) 
        &= J(\bspi_2, \bar \bsG_k) - J(\bspi_2, \bsmu^{\bar\bspi_k}) 
        \\
        &\qquad + J(\bspi_2, \bsmu^{\bar\bspi_k}) - J(\bspi_1, \bsmu^{\bar\bspi_k}) + J(\bspi_1, \bsmu^{\bar\bspi_k}) - J(\bspi_1, \bar \bsG_k) \\ 
        &\le \left( J(\bspi_2, \bar \bsG_k) - J(\bspi_2, \bsmu^{\bar\bspi_k}) \right) + \left( J(\bspi_1, \bsmu^{\bar\bspi_k}) - J(\bspi_1, \bar \bsG_k) \right) .
    \end{align*} 
    Both remaining terms can be bounded by Assumption \ref{ass: Lip on J}: \begin{align*} 
        J(\bspi_2, \bar \bsG_k) - J(\bspi_1, \bar \bsG_k) &\le |J(\bspi_2, \bar \bsG_k) - J(\bspi_2, \bsmu^{\bar\bspi_k})| + |J(\bspi_1, \bsmu^{\bar\bspi_k}) - J(\bspi_1, \bar \bsG_k)| \\
        &\le L \cdot d_{N_T}(\bar \bsG_k, \bsmu^{\bar\bspi_k}) + L \cdot d_{N_T}(\bsmu^{\bar\bspi_k}, \bar \bsG_k) \\ &\le 2L \cdot \epsilon_{cnf}^k .
    \end{align*} 
    Combining the parts for Term 1, we get $|J(\text{BR}(\bsmu^{\bar\bspi_k}), \bsmu^{\bar\bspi_k}) - J(\text{BR}(\bar \bsG_k), \bar \bsG_k)| \le L\epsilon_{cnf}^k + 2L\epsilon_{cnf}^k = 3L \cdot \epsilon_{cnf}^k$.

    \textit{Final Bound:} Combining the bounds for Term 1 and Term 2: \begin{equation*} 
    |e_k^{true} - e_k| \le (3L \cdot \epsilon_{cnf}^k) + (L \cdot \epsilon_{cnf}^k) = 4L \cdot \epsilon_{cnf}^k .
    \end{equation*} 
\end{proof}
    This completes the proof of Theorem~\ref{theorem: convergence_of_dedafp}.

\section{Algorithms Details}\label{appx: algo}
In the context of MFGs, Fictitious Play (FP) operates by iteratively computing the best response of a representative agent against the distribution induced by the average of past best responses of the entire population.
The discrete-time FP process involves several key steps at each iteration $k = 0 \dots, K$:
\begin{enumerate}
    \item \textbf{Best Response Computation}: An agent finds its best response policy $\bspi^{BR}_k$ against the approximated average population distribution $\bar\bsmu_{k-1}$. %
    \item \textbf{Average Policy Update}: The average policy $\bar \bspi_k$ is computed by averaging the current best response policy with all previous policies. In particular we have $\forall t=0, \dots,N_T$
    $$
    \bar\bspi_{k-1}(\cdot|x, t)=\frac{\sum_{i=0}^{i=k}\pi^{BR}_i(\cdot|x, t)\mu_t^{\bspi^{BR}_k}(x)}{\sum_{i=0}^{i=k}\mu_t^{\bspi^{BR}_k}(x)}
    $$
    \item \textbf{Average Distribution Update}: The average population distribution $\bar\bsmu_{k}$ is updated by averaging the current population distribution with past distributions. In particular we have, 
    $$
    \bar\bsmu_{k}=\frac{k-1}{k}\bar\bsmu_{k-1} + \frac{1}{k}\bsmu^{\bspi^{BR}_k}
    $$ 
    It can be seen that $ \bar\bsmu_{k} = \bsmu^{\bar\bspi_k}$
\end{enumerate}
At the end of the algorithm, the pair $(\bar\bspi_{K}, \bar\bsmu_{K})$ represents the Nash equilibrium of the mean-field game problem. In a model-free framework, however, direct analytical computation is not feasible. For this reason, all three steps  must be approximated to fully solve the game. While~\ref{algo:1} and~\ref{algo:2} only address parts of this problem, \textbf{DEDA-FP} (described in \ref{alg:main-algo}) provides the complete solution.
\paragraph{Notation. } Here, we provide the notation used in the pseudocodes.
\begin{itemize}
    \item $\mathcal{M}$ represent the policy buffer use to store all the best responses during FP.
    \item $\mu_t^{N,\mathcal{M}}$ is the empirical distribution at time $t$, generated by $N$ agents using a policy assigned uniformly at random from the buffer $\mathcal{M}$.
    \item $\mu_t^{N,\pi}$ is the empirical distribution, at time $t$, generated by $N$ agents using the policy $\bar\pi$.
    \item $J^N_{\mu_0, \mathcal{M}}(\bspi)$ is the approximated cost function, computed against $N-1$ trajectories. These trajectories originate from points sampled from the initial distribution $\mu_0$ and subsequently evolve according to $N-1$ policies sampled uniformly from the buffer $\mathcal{M}$.
    \item $J^N_{\mu_0, \bar\bspi}(\bspi)$ is the approximated cost function, computed against $N-1$ trajectories. These trajectories originate from points sampled from the initial distribution $\mu_0$ and subsequently all evolve according to $\bar\bspi$.

    \item $ \mathcal{L}_{\text{NLL}}(\theta)= \mathbb{E}_{(t, s, a) \sim \mathcal{M}_{SL}} \left[ -\log{{\bspi}}^{\theta}(a | t, s) \right] = -\frac{1}{M} \sum_{i=1}^M \log \mathcal{N}(a_i; \mu_\theta(s_i, t_i), \sigma_\theta(s_i, t_i)) $,
    
where  $M$ is the size of the replay buffer $\mathcal{M}_{SL}$ containing all the triples $(t,s,a)$ sampled from the previous policies and $\mu_\theta$ and $\sigma_\theta$ are the mean and standard deviation predicted by the policy network for the state $s_i$ at time $t_i$. 

\end{itemize}

\makeatletter
\renewcommand{\ALG@name}{}
\renewcommand{\thealgorithm}{Algo. 1}
\makeatother
\begin{algorithm}[H]
\caption{Simple Approach}
\label{algo:1}
    \begin{algorithmic}[1]
        \STATE \textbf{Input:} Initial distribution $\mu_0$; population size $N$; Number of iterations $K$.
        \STATE \textbf{Initialize:} $\theta^*_0$, $\mathcal M : = \{\bspi^*_0\}$ where $\bspi^*_0:=\bspi^{\theta^*_0}$.
        \FOR{iteration $k = 1$ to $K$}
            \STATE Using \textbf{DRL}, find the best response $\bspi_k^*:=\bspi^{\theta^*_k}$ such that:
            \[
                \bspi_k^* = \arg\max_{\bspi} J^N_{\mu_0,\mathcal M} (\bspi)
            \]
            \STATE Add $\bspi^*_k$ in $\mathcal M$ 
        \ENDFOR
    \STATE \Return $\mathcal M$
\end{algorithmic}
\end{algorithm}
\makeatletter
\renewcommand{\ALG@name}{}
\renewcommand{\thealgorithm}{Algo. 2}
\makeatother
\begin{algorithm}[H]
\small
\caption{Learning the NE Policy}
\label{algo:2}
\begin{algorithmic}[1]
    \STATE \textbf{Input:} Initial distribution $\mu_0$;  $N_{sa}$: number of state-action pairs to collect at every iteration, $N$: population size in population simulation; number of iterations $K$.
    \STATE \textbf{Initialize:} $\theta^*_0$  and set $\bar\theta_0 = \theta^*_0$ since  ($\bar\bspi^{\bar\theta_0} = \bspi^{\theta^*_0}$); sample, according to $\bspi^*_0:=\bspi^{\theta^*_0}$, $N_{sa}$ (time)-state-action triples $(0, s, a)$ and define $\mathcal M_{SL}$ to store them. 
    \FOR{iteration $k = 1$ to $K$}
        \STATE Using \textbf{DRL}, find the best response $\bspi_k^*:=\bspi^{\theta^*_k}$ such that:
            \[
                \bspi_k^* = \arg\max_{\bspi} J^N_{\mu_0, \bar\bspi_{k-1}} (\bspi)
            \]
        \vspace{-2ex}
        \STATE Collect $N_{sa}$ state-action samples of the form $(t, s, a)$  
            using $\bspi_k^*$ and store in $\mathcal{M}_{SL}$.
       \STATE Train the \textbf{NN policy} $\bar{\bspi}_k := \bar \bspi^{\bar\theta_{k}}$ using supervised learning to minimize the categorical loss:

        \[
            \mathcal{L}_{\text{NLL}}(\bar\theta) = \mathbb{E}_{(t, s, a) \sim \mathcal{M}_{SL}} \left[ -\log{\bar{\bspi}}^{\bar\theta}(a | t, s) \right] %
        \]
        This NN aims to mimic the behaviour of the average policy $\frac{1}{k}(\bspi^*_0 + \dots +  \bspi^*_k)$. 
    \ENDFOR
    \STATE \Return $\bar\bspi^{\bar \theta_K}$
\end{algorithmic}
\end{algorithm}

\subsection{DEDA-FP components}\label{app: deda_components}
In \textbf{DEDA-FP} (\ref{alg:main-algo}), both the overall orchestration and the individual components are chosen for a specific purpose. While other choices could be made, we explain below the rationale behind our choices.

{\bf Fictitious Play:} this is the backbone of our method. The main advantage is that it is known to converge in larges classes of games. One drawback is that convergence can be lower than some other methods, but we prefered to sacrifice the convergence speed and ensure robustness rather than the opposite.

\textbf{DRL for Best Response:} Policies are functions defined on the continuous state and action spaces so they are infinite dimentsional. Hence we had to approximate them using parameterized functions. We chose neural networks due to the empirical success in a variety of machine learning tasks. As for the training, model-free RL has the advantage to avoid exact dynamic programming and hence scale well to highly complex problems. In the implementation we chose SAC and PPO but other choices could be made, depending on the specific MFG at hand.

\textbf{Supervised learning for average policy:} In general, convergence results for Fictitious Play are not for the last iterate policy (the best response computed in the last iteration) but only for the average policy. So computing the average policy is crucial to ensure convergence. However, our policies are neural networks and averaging neural networks is hard due to non-linearities. We thus have to train a new neural network for the average policy. Here, we chose to use supervised learning, drawing inspiration from Neural Fictitious Self-Play (\citep{heinrich2016deep}).
\textbf{Conditional Normalizing Flow (CNF) for the Mean-Field:} This is a critical design choice that directly enables one of our paper's main contributions. To solve MFGs with local density dependence (e.g., congestion), we require a model that can both (1) sample from the population distribution and (2) compute its exact probability density at any given point in time and space ($p(x|t)$). Normalizing Flows (NFs) are well-established generative models that have been introduced precisely to provide both of these capabilities without time and CNFs are an extension of NFs, which allow us to take time into account in a natural way. Other generative models, like GANs or score-based diffusion models, can sample effectively but do not allow direct density evaluation, making them unsuitable for our goal.

\section{Implementation Details}\label{appx: arch}
\subsection{Time Conditioned Neural Spline Flow}
We employ the Neural Spline Flow (NSF) with autoregressive layers~\citep{durkan2019neural} as the flow component in our time conditioned normalizing flow. 

\paragraph{Neural Spline Flows}

The key idea in NSF is to transform a simple distribution (like a standard Gaussian) into a complex one using a series of invertible transformations. To make these transformations very flexible and efficient, NSF uses  "rational-quadratic splines."

\paragraph{Rational-Quadratic Splines}

 A spline can be seen as a flexible curve made up of pieces.  In our case, each piece is a "rational-quadratic" function, which is a ratio of two quadratic polynomials.  These functions are smooth and can be easily inverted, which is important for our model.  A rational-quadratic spline is defined by a set of $K+1$ knots $\{(x^{(k)}, y^{(k)})\}_{k=0}^{K}$ . The value of the spline at a given $x$ is determined by which interval $[x^{(k)}, x^{(k+1)}]$ it falls into.  Letting $\xi = (x - x^{(k)}) / (x^{(k+1)} - x^{(k)})$  represent the normalized position within that interval, the spline segment is:

$$g(x) = \frac{\alpha^{(k)}(\xi)}{\beta^{(k)}(\xi)}$$

where

$$
\alpha^{(k)}(\xi) = s^{(k)} y^{(k+1)} \xi^2 + [y^{(k)} \delta^{(k+1)} + y^{(k+1)} \delta^{(k)}] \xi (1-\xi) + s^{(k)} y^{(k)} (1-\xi)^2
$$

$$
\beta^{(k)}(\xi) = s^{(k)} \xi^2 + [\delta^{(k+1)} + \delta^{(k)}] \xi (1-\xi) + s^{(k)} (1-\xi)^2
$$

and  $s^{(k)} = (y^{(k+1)} - y^{(k)}) / (x^{(k+1)} - x^{(k)})$  is the slope of the line connecting the knots at the interval's boundaries. $\delta^{(k)}$ represents the derivative of the spline at knot $k$.

\paragraph{Autoregressive Neural Spline Flows}
In our implementation, we use the variant of NSF with autoregressive layers. This means that the parameters of the rational-quadratic spline transformation for each dimension of the data are predicted by an autoregressive neural network.  Specifically, for each dimension $i$ of the input $x$, the spline parameters are computed as a function of the previous dimensions $x_{1:i-1}$:

$$
\theta_i = \text{NN}(x_{1:i-1})
$$

where $\text{NN}_{\text{AR}}$ denotes an autoregressive neural network. This autoregressive approach allows the model to capture complex dependencies between the dimensions of the data, as the transformation applied to each dimension is conditioned on the values of the preceding dimensions.
\paragraph{Time Conditioning}

To handle the non-stationary nature of the mean-field distribution, we explicitly condition the Neural Spline Flow on time $t$. This means that the entire transformation, and specifically the parameters of the rational-quadratic splines, are made dependent on the current time step. In our autoregressive setup, the neural network that predicts the spline parameters ($\text{NN}$ in the equation above) not only takes the previous dimensions $x_{1:i-1}$ as input but also the time $t$. The time variable $t$ is typically concatenated with the input features or fed into the neural network as an additional input, allowing the network to learn time-dependent transformations. This enables the flow to dynamically adjust its shape and density characteristics as time evolves from $t=0$ to $t=T$, thereby capturing the non-stationary dynamics of the mean-field.

\section{Numerical Experiments details}\label{appx: exp}
This section provides further experimental results and detailed comparisons between our proposed DEDA-FP approach and the benchmark algorithms considered in the main paper.
\subsection{Beach Bar Problem}
Further numerical results for the Beach Bar problem are shown in Figures \ref{fig: 2tc_heat} and \ref{fig: 2tc_algo3_results}.

\vspace{1.5cm}
\begin{figure}[h!]
    \centering
    \includegraphics[width=0.3\linewidth]{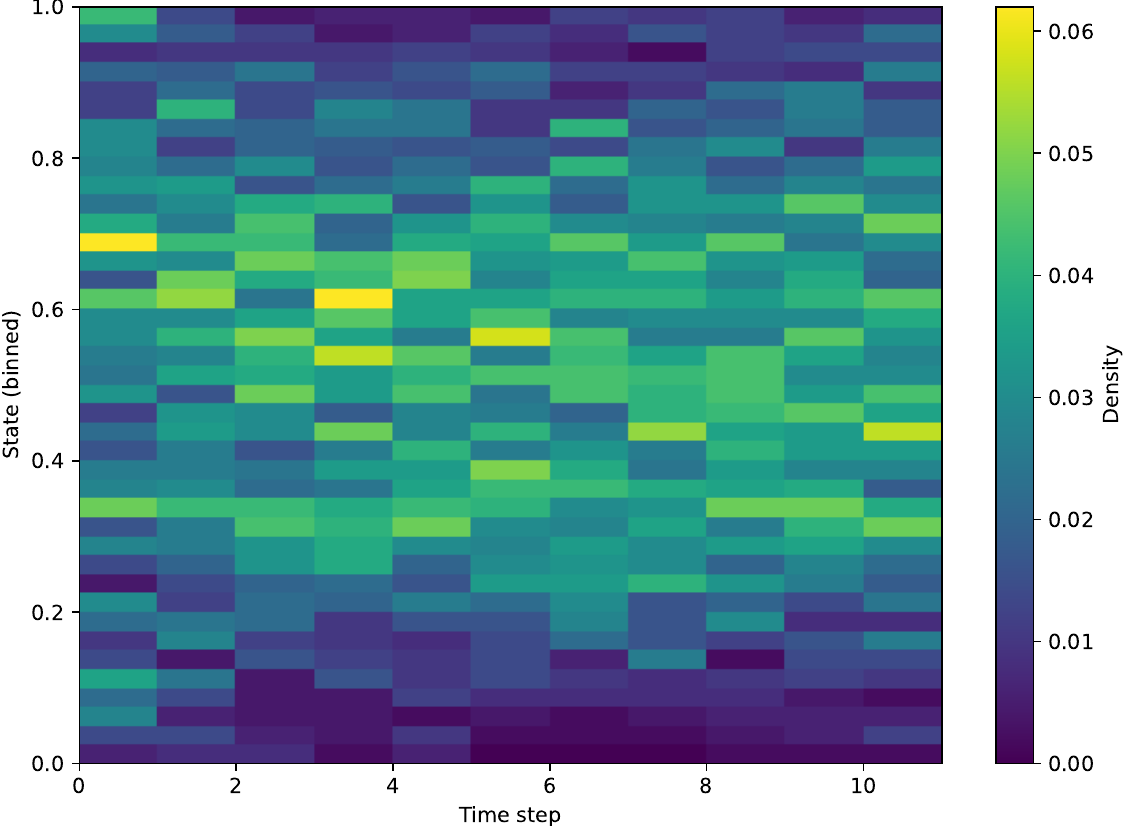}
    \includegraphics[width=0.3\linewidth]{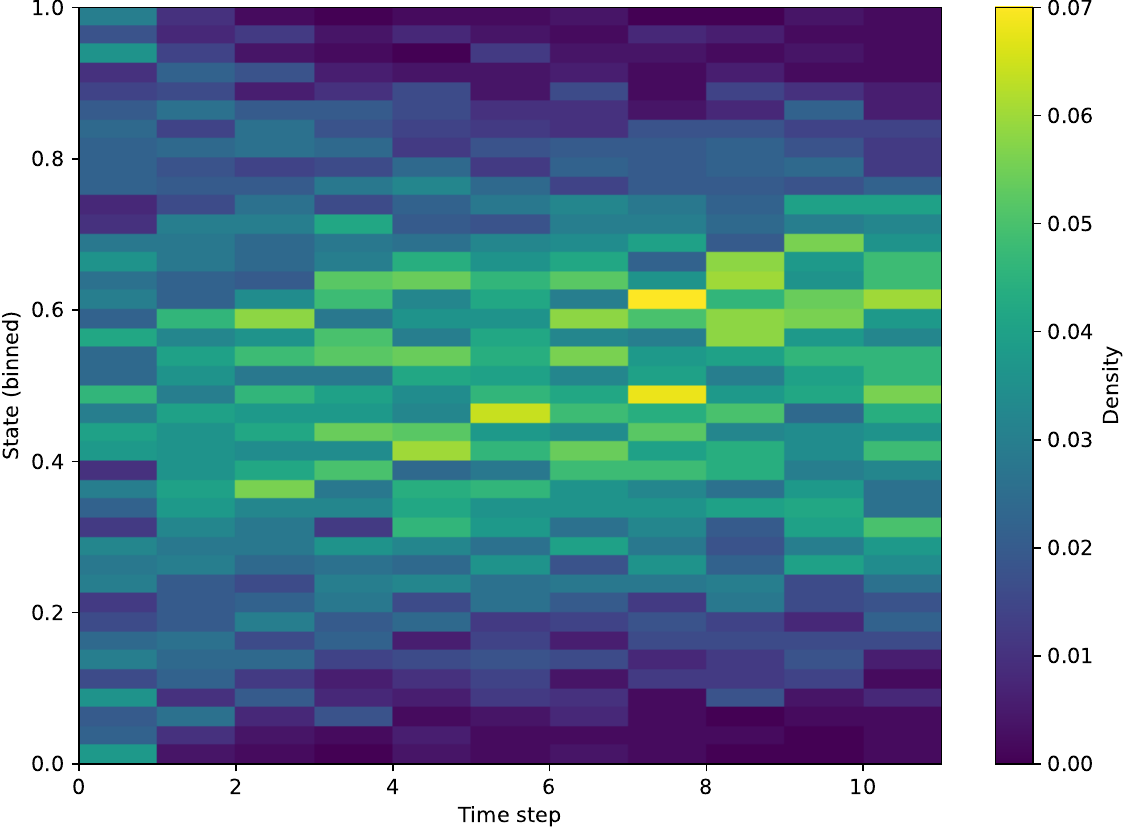}
    \includegraphics[width=0.3\linewidth]{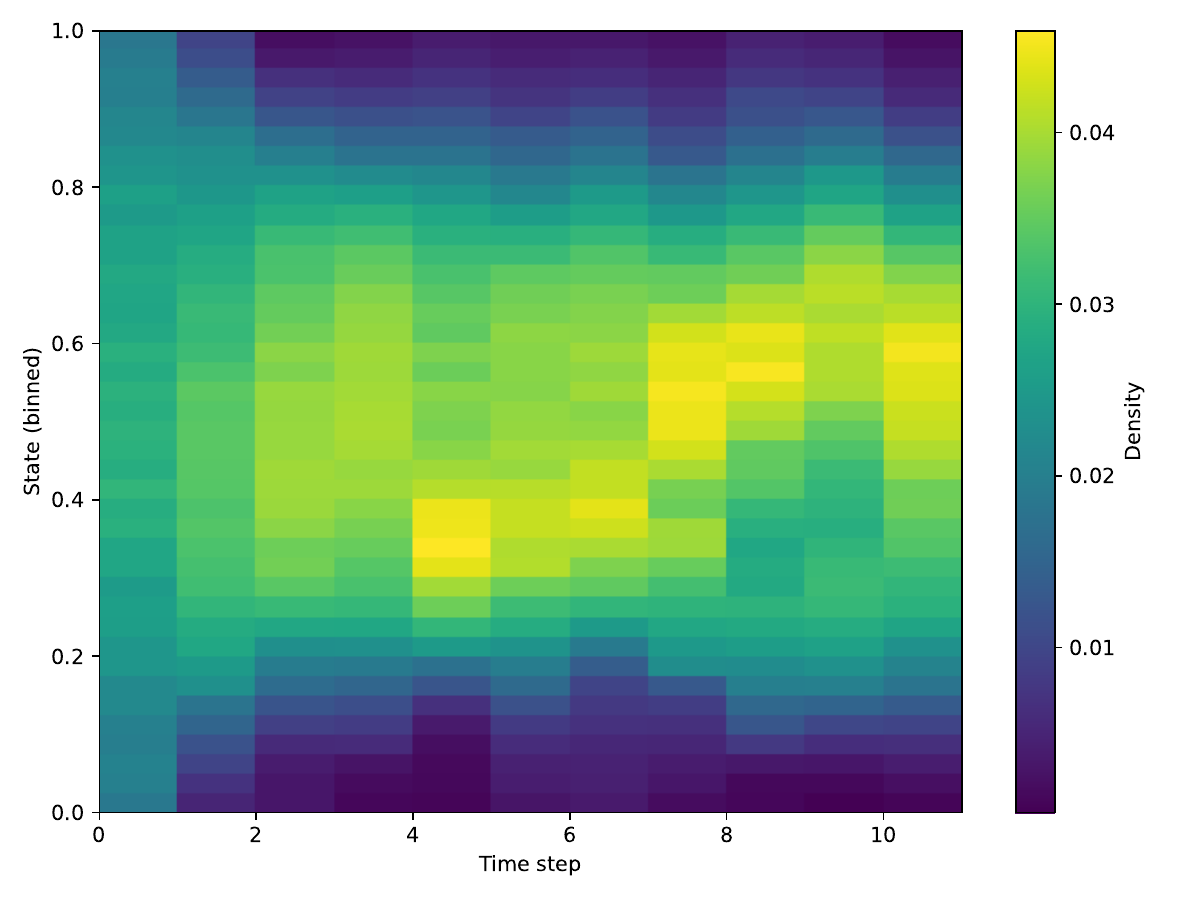}

\caption{\textbf{Comparison of Nash equilibrium distribution heatmaps in the Beach Bar Problem}. From left to right: \ref{algo:1}, \ref{algo:2}, and DEDA-FP (\ref{alg:main-algo}). Thanks to its remarkable speed, DEDA-FP can utilize a high volume of samples (6x times in the displayed figure) for robust distribution approximation, a scale that proves computationally prohibitive for existing benchmarks. }
\label{fig: 2tc_heat}
\end{figure}
\vspace{1cm}
\begin{figure}[h!]
    \centering
    \includegraphics[width=0.38\linewidth]{figures/2tc/algo3/distribution.pdf}
    \includegraphics[width=0.43\linewidth]{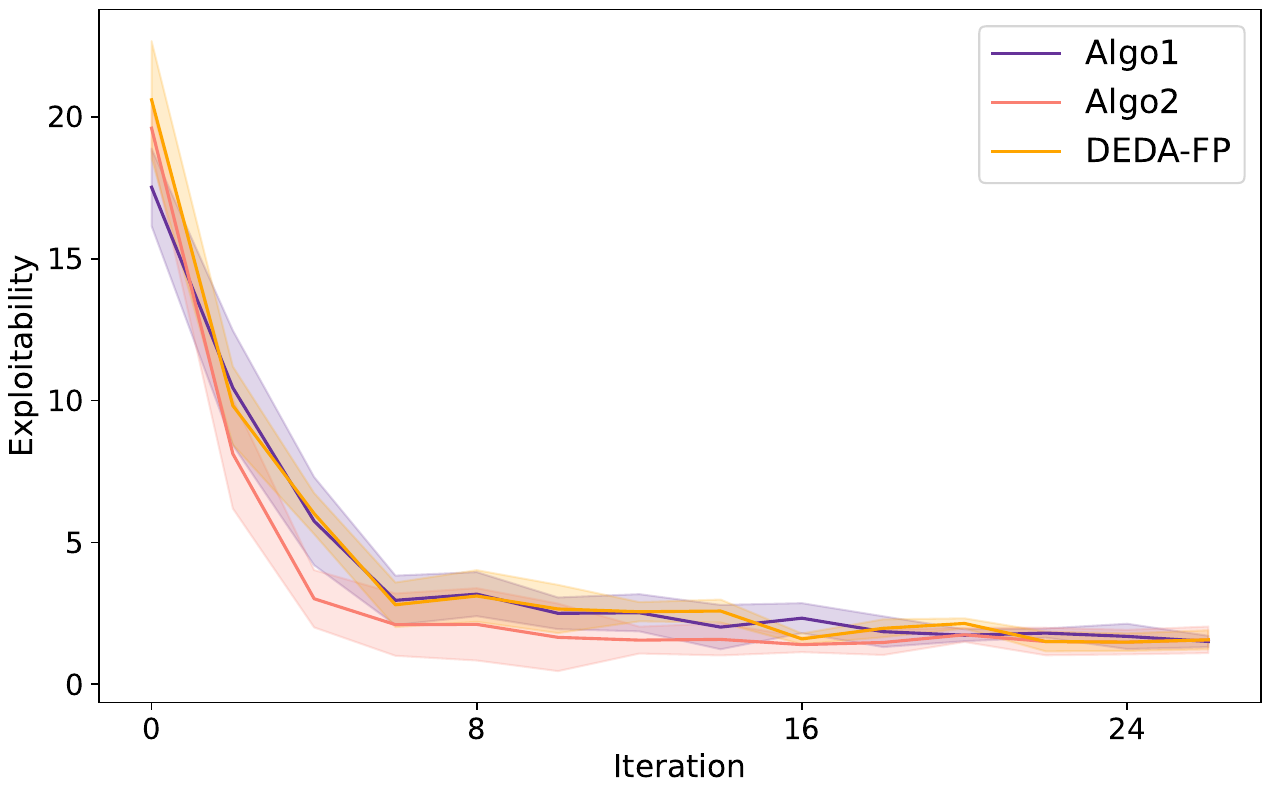}
\caption{\textbf{Beach Bar Problem Results for DEDA-FP}. Left: Nash equilibrium distribution. Right: Exploitability decay comparison across algorithms.}
\label{fig: 2tc_algo3_results}
\end{figure}

\vspace{0.5cm}
\subsection{LQ model}
Further numerical results for the LQ problem are shown in Figures \ref{fig: lq_heat} and \ref{fig:lq_algo3_results}.

\begin{figure}[h!]
    \centering
    \includegraphics[width=0.3\linewidth]{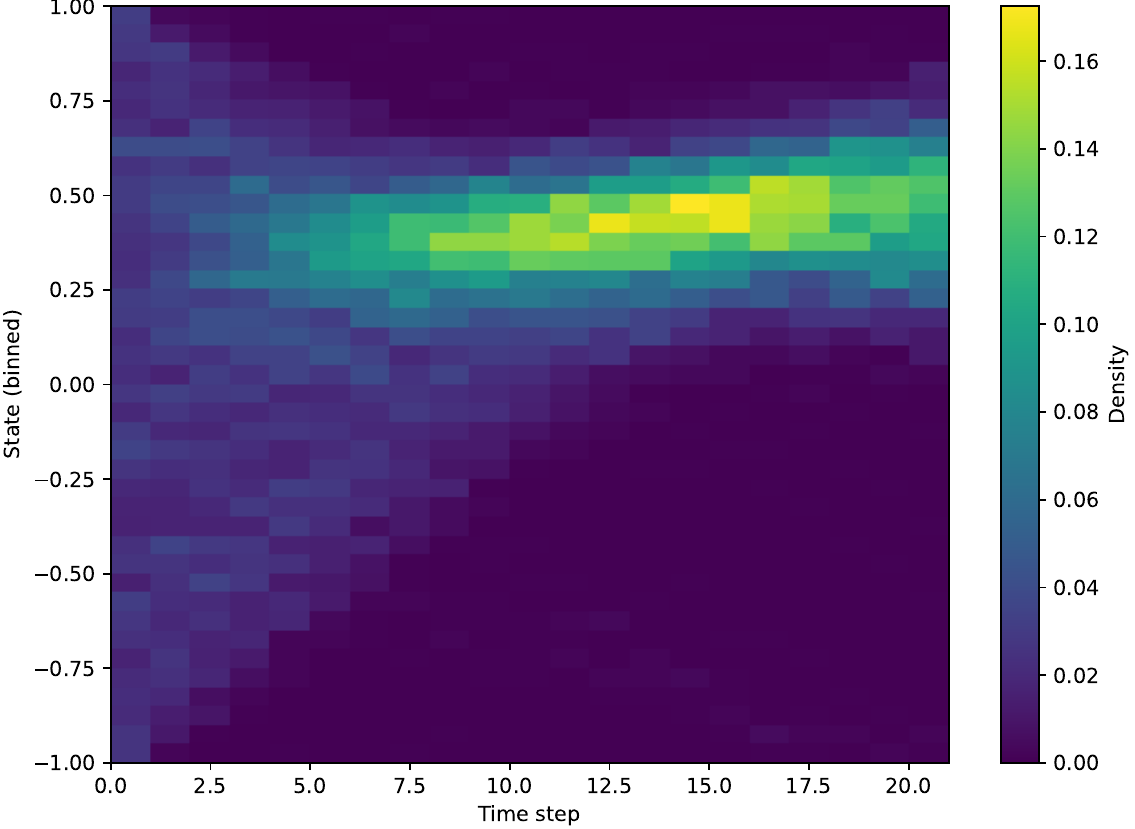}
    \includegraphics[width=0.3\linewidth]{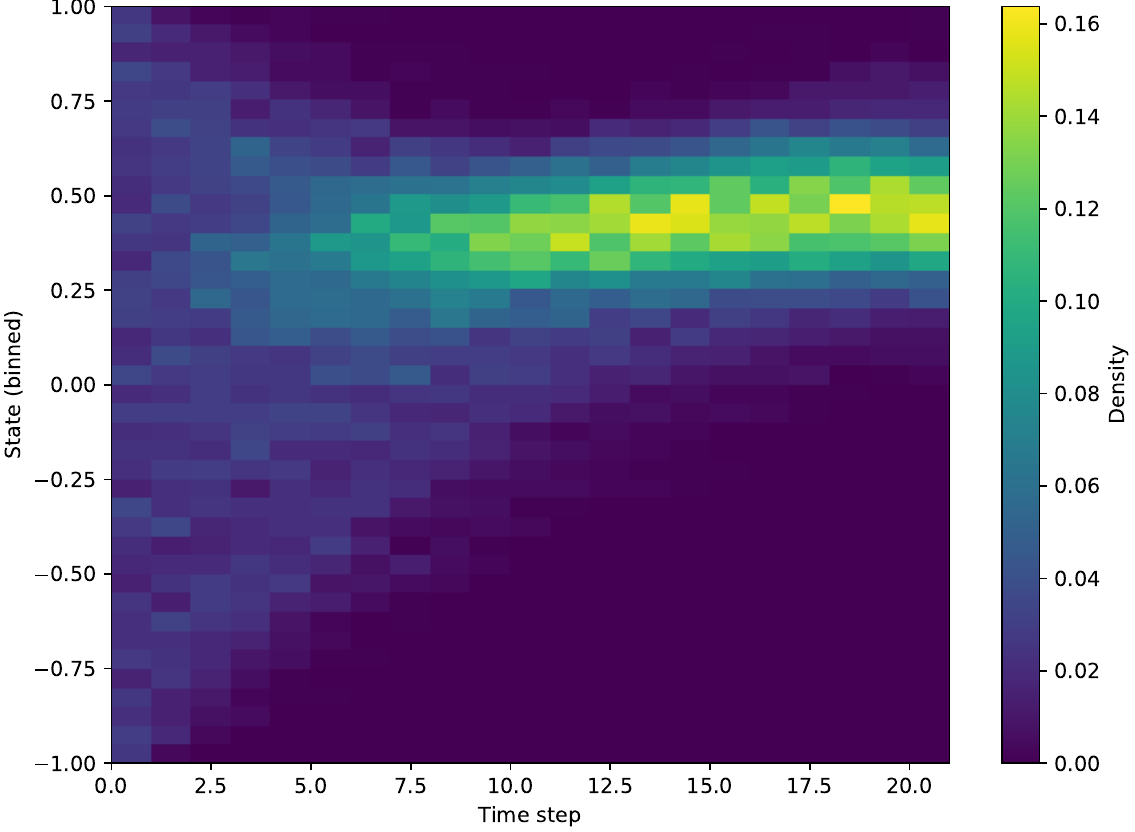}
    \includegraphics[width=0.3\linewidth]{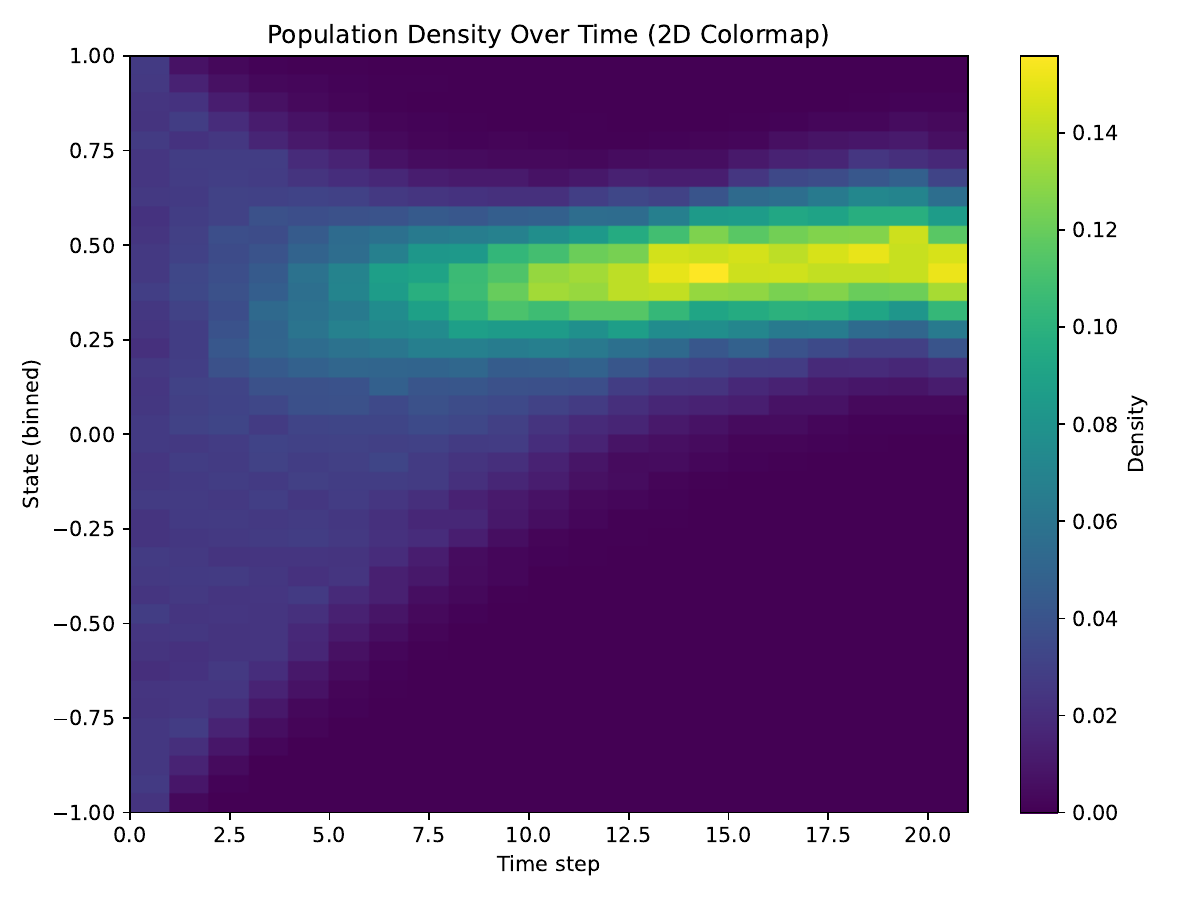}

\caption{\textbf{Comparison of Nash equilibrium distribution heatmaps in the LQ Problem}. From left to right: Algo~\ref{algo:1}, Algo~\ref{algo:2}, and DEDA-FP (\ref{alg:main-algo}). }
\label{fig: lq_heat}
\end{figure}
\vspace{0.5cm}

\begin{figure}[h!]
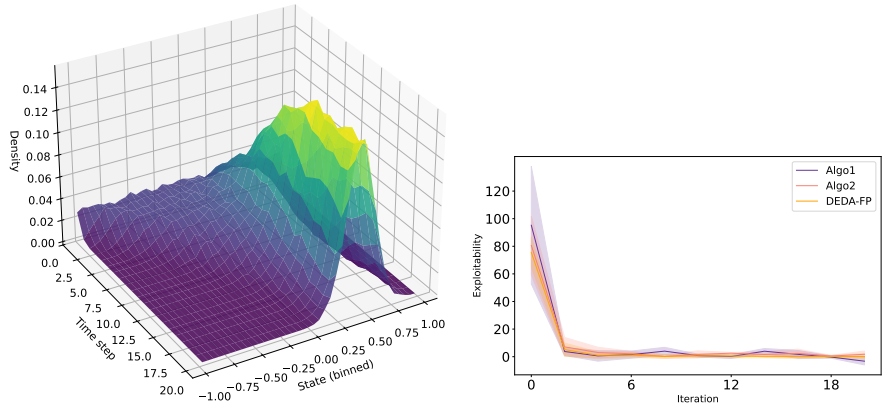

    \centering
    \includegraphics[width=0.42\linewidth]{figures/lqr/algo3/distribution.pdf}
    \includegraphics[width=0.45\linewidth]{figures/lqr/exploitability_decay_1.pdf}

\caption{\textbf{LQ Problem Results for DEDA-FP}. Left: Nash equilibrium distribution. Right: Exploitability decay comparison across algorithms.}
\label{fig:lq_algo3_results}
\end{figure}
\subsection{4-rooms exploration}
Further numerical results for the 4-rooms exploration problem are shown in Figures \ref{fig: 4r_flow_algo2} and \ref{fig: 4r_flow_algo3}.

\subsection{Market Model}\label{app: market model}
Here we present one more example on an environment of a different type, with a financial application. It is a discrete time version of the price impact model in MFG literature, introduced by Carmona and Lacker.\footnote{Ren\'e Carmona and Daniel Lacker. A probabilistic weak formulation of mean field games and applications. {\it Annals of applied probability: an official journal of the Institute of Mathematical Statistics}, 25(3):1189–1231,
2015.}%

\textbf{Environment:} We consider a market model where $\cX=[-5,5]$ represents the inventory for a stock. The action space $\cA=[-1,1]$ represents the rate of trading for the stock. Each agent controls the inventory for the stock.
The dynamics is: $x_{t+1} = x_{t} + a_{t} + \epsilon_t$, where $\epsilon_t\sim\mathcal{N}(0,1)$. The reward is $r(x,a,\bar{a})=-C_{\cX}x^2-C_{\cA}a^2+hx\bar{a}$, where $C_{\cX}$, $C_{\cA}$, and $h$ are positive constants, $\bar{a}$ is the mean of the action. At each time $t$, the representative agent wants to minimize the shares held. In this model, the agent interacts with the distribution of the action instead of the population distribution. The mean field term $hx\bar{a}$ reflects the impact of the action on the price.

\textbf{Numerical results:}
Results are shown in Figures \ref{fig:price_explo_policy}, \ref{fig:price_impact_2d} and \ref{fig:price_impact_3d}.
We observe that traders tend to liquidate their portfolios (given to the $x^2$ term in the reward function). However, a proportion of agents is incentivized to buy instead of sell due to the interaction term. Moreover, we observe that our model (DEDA-FP) consistently provides a superior representation of the distribution, which is ensured by its efficiency in sampling a large number of agent positions at every time step.

\vspace{3cm}
\begin{figure}[h!]
    \centering
    \includegraphics[width=0.44\linewidth]{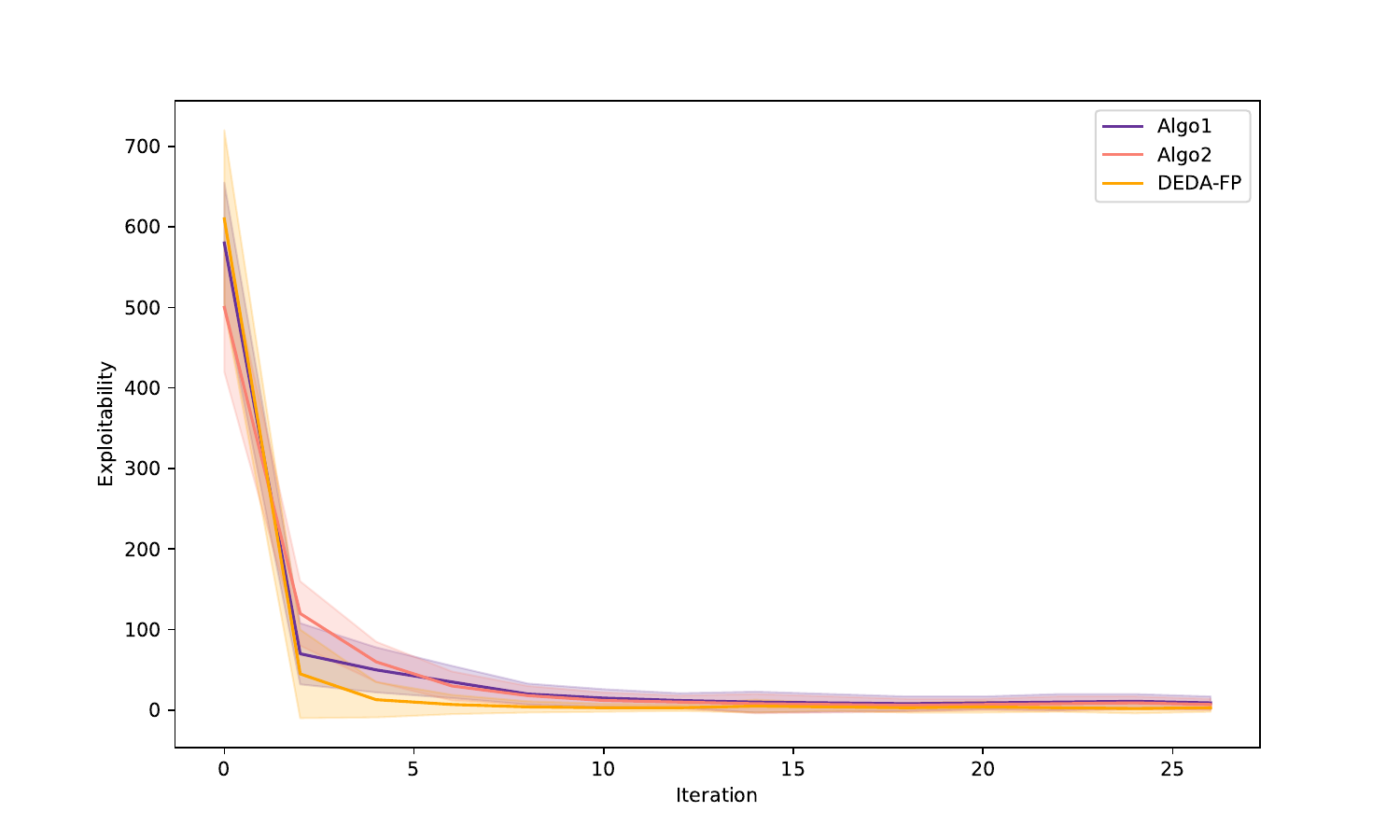}
     \includegraphics[width=0.39\linewidth]{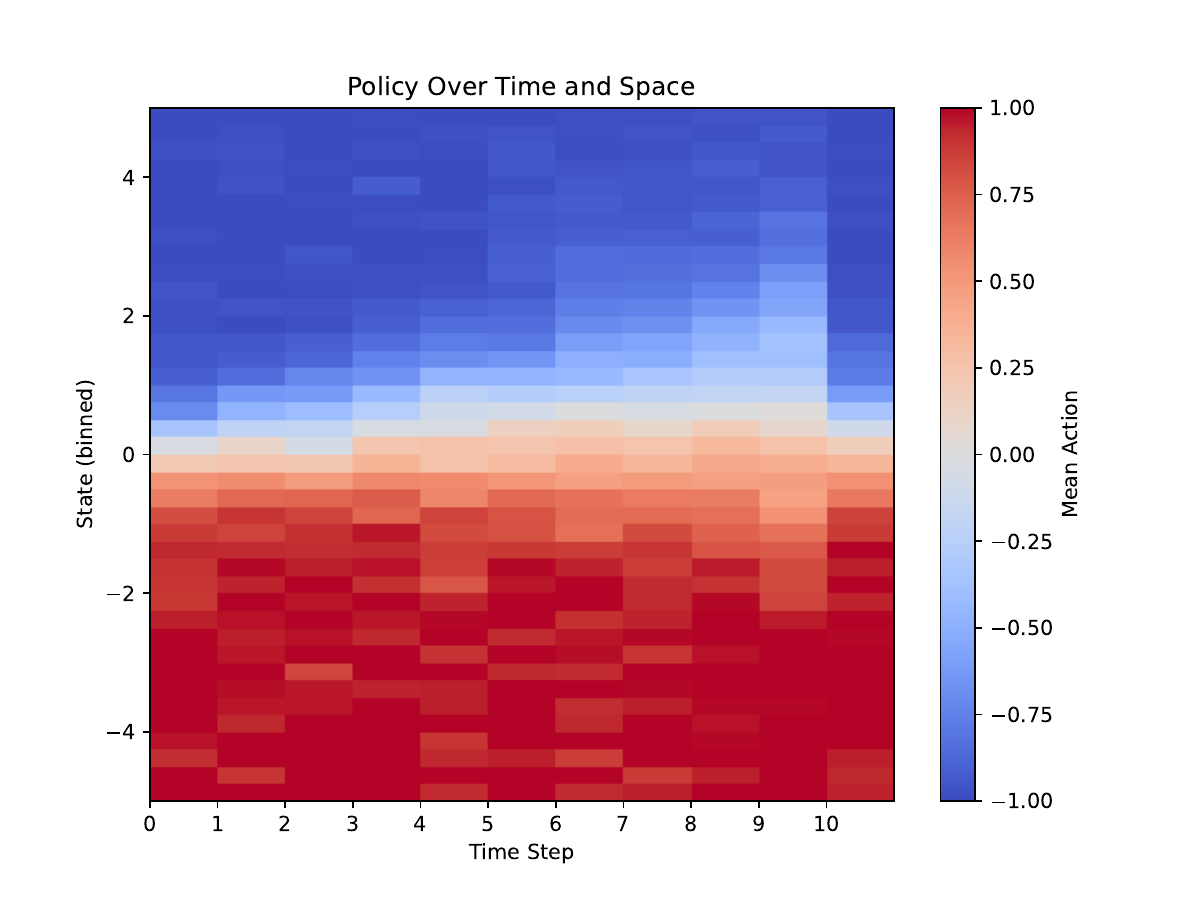}
    
    \caption{\textbf{Market Model Results:} Left: Exploitability decay comparison across algorithms; Right: Nash Equilibrium Policy learned by DEDA-FP}
    \label{fig:price_explo_policy}
\end{figure}
\vspace{2cm}
\begin{figure}[h!]
    \centering
    \includegraphics[width=0.32\linewidth]{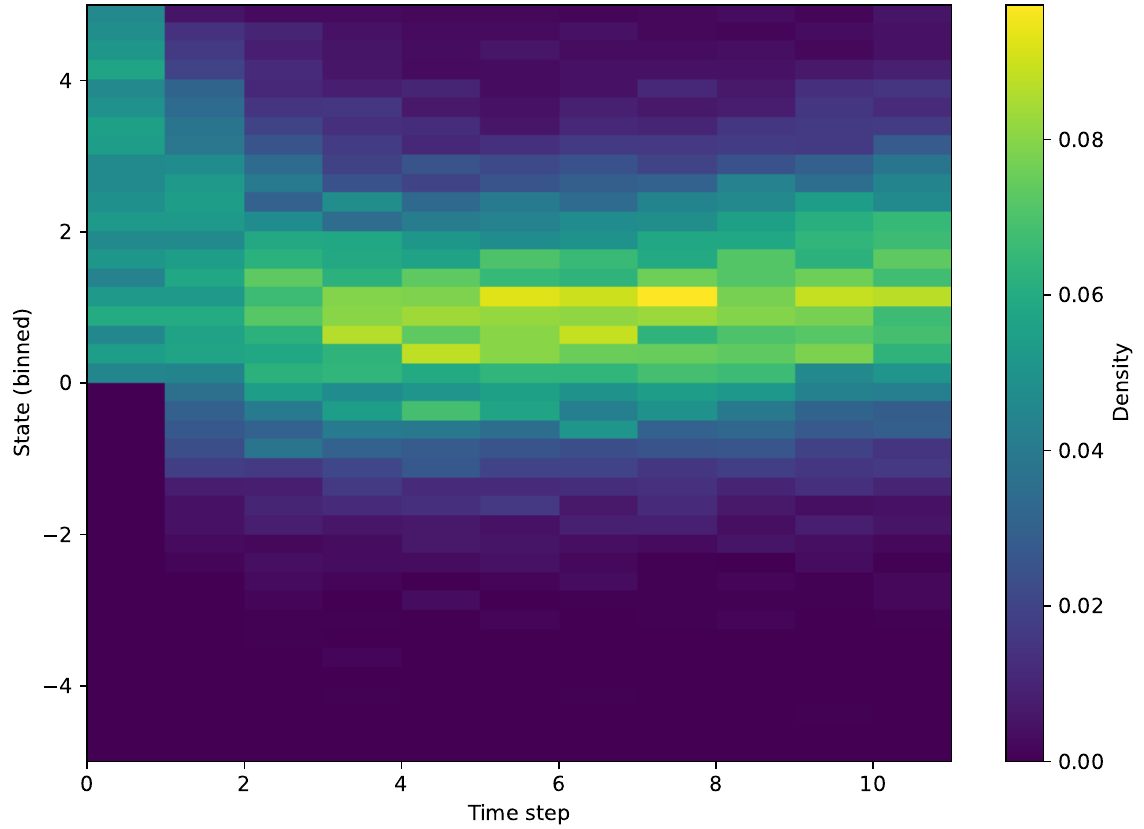}
    \includegraphics[width=0.32\linewidth]{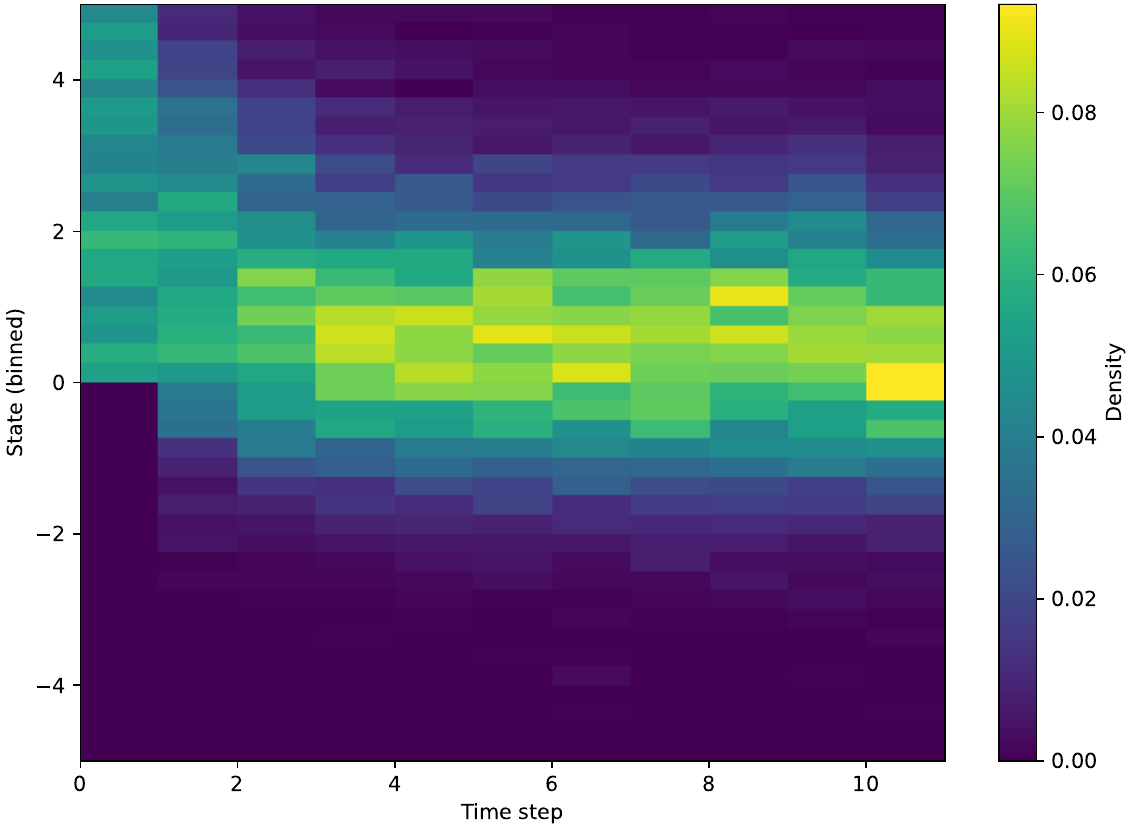}
    \includegraphics[width=0.32\linewidth]{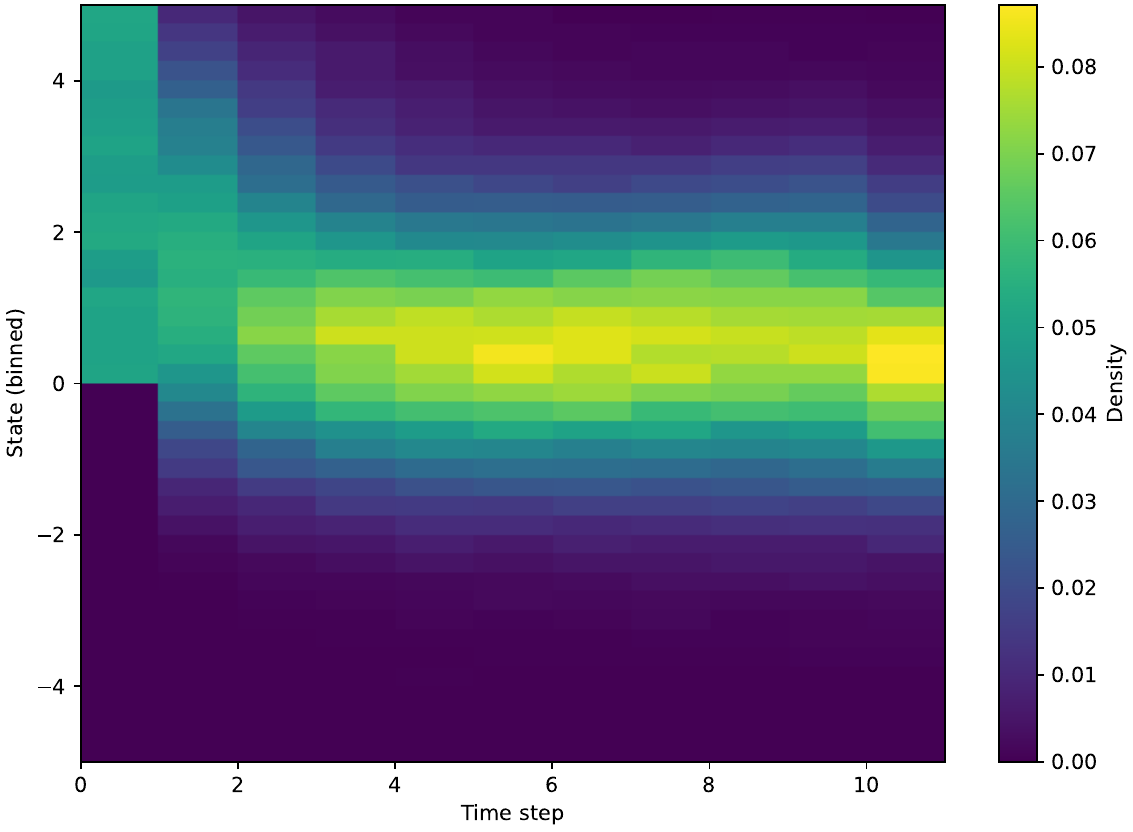}
    
    \caption{\textbf{Comparison of Nash equilibrium distribution heatmaps in the Market Model Problem}. From left to right: Algo~\ref{algo:1}, Algo~\ref{algo:2}, and DEDA-FP (\ref{alg:main-algo}). }
    \label{fig:price_impact_2d}
\end{figure}
\vspace{2cm}
\begin{figure}[h!]
    \centering
    \includegraphics[width=0.32\linewidth]{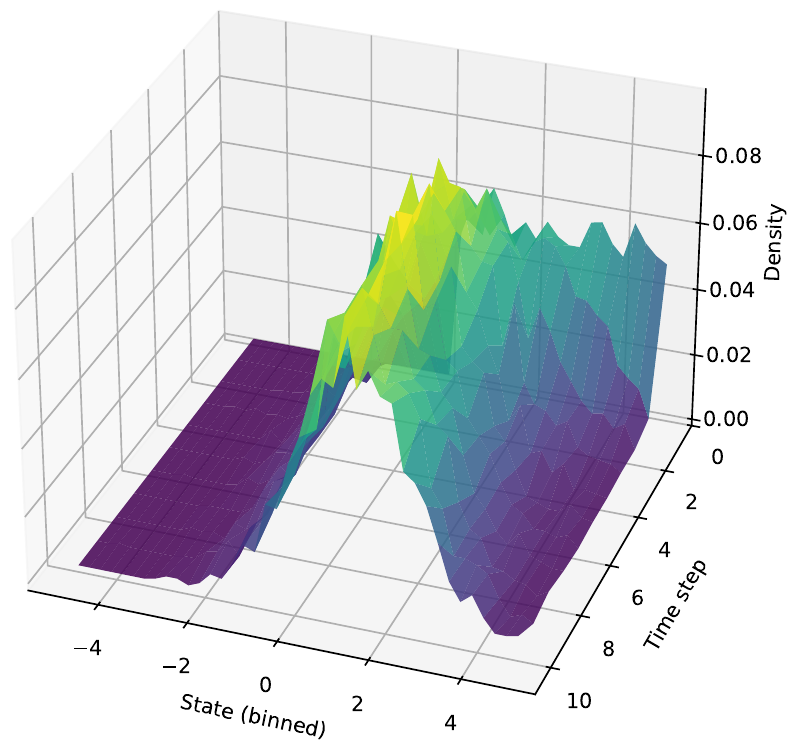}
    \includegraphics[width=0.32\linewidth]{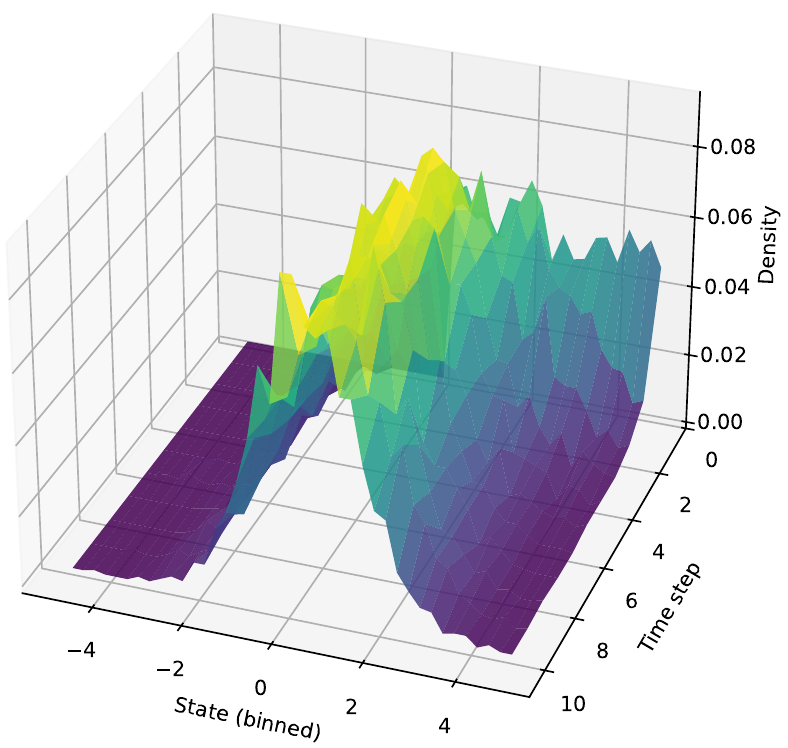}
    \includegraphics[width=0.32\linewidth]{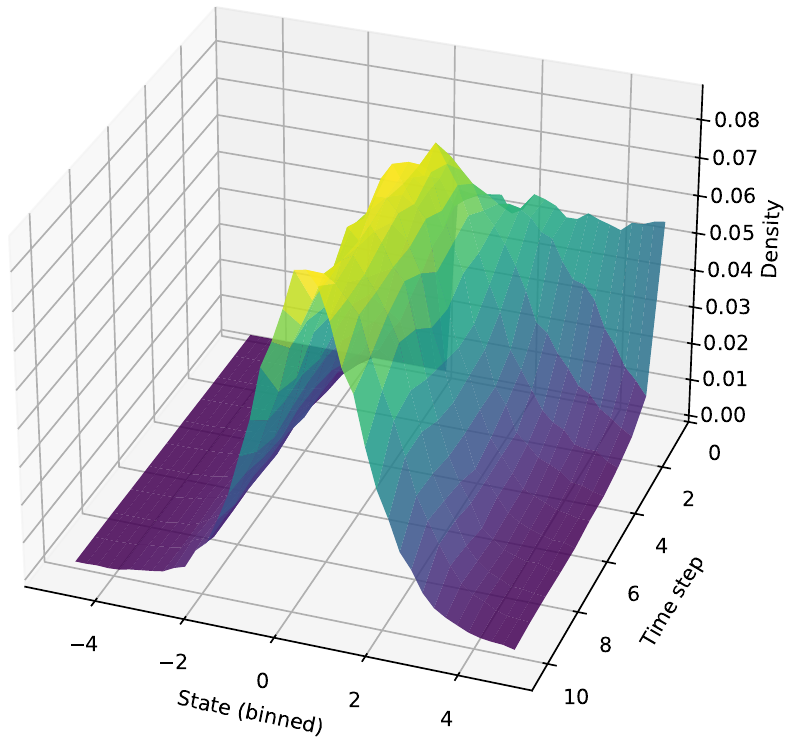}
    
    \caption{\textbf{Comparison of Nash equilibrium distribution in the Market Model Problem ((3D plots))}. From left to right: Algo~\ref{algo:1}, Algo~\ref{algo:2}, and DEDA-FP (\ref{alg:main-algo}).}
    \label{fig:price_impact_3d}
\end{figure}

\vspace{2cm}
\section{Hyperparameter Sweep}\label{appx: hyper_sweep}
We sweep the learning rate over the set $\{3 \times 10^{-2}, 3 \times 10^{-3}, 3 \times 10^{-4}, 3 \times 10^{-5}, 3 \times 10^{-6}\}$ for Deep RL in ~\ref{algo:1} in to the center environment shown in Figure~\ref{fig: hyper_sweep_algo1}. We observe that a learning rate of $3 \times 10^{-4}$ yields more stable training and faster convergence. Based on this observation, we adopt $3 \times 10^{-4}$ for the Deep RL component in ~\ref{algo:2} and~\ref{alg:main-algo} as well.

\vspace{0.3cm}
\begin{figure}[]
    \centering
    \includegraphics[width=0.6\linewidth]{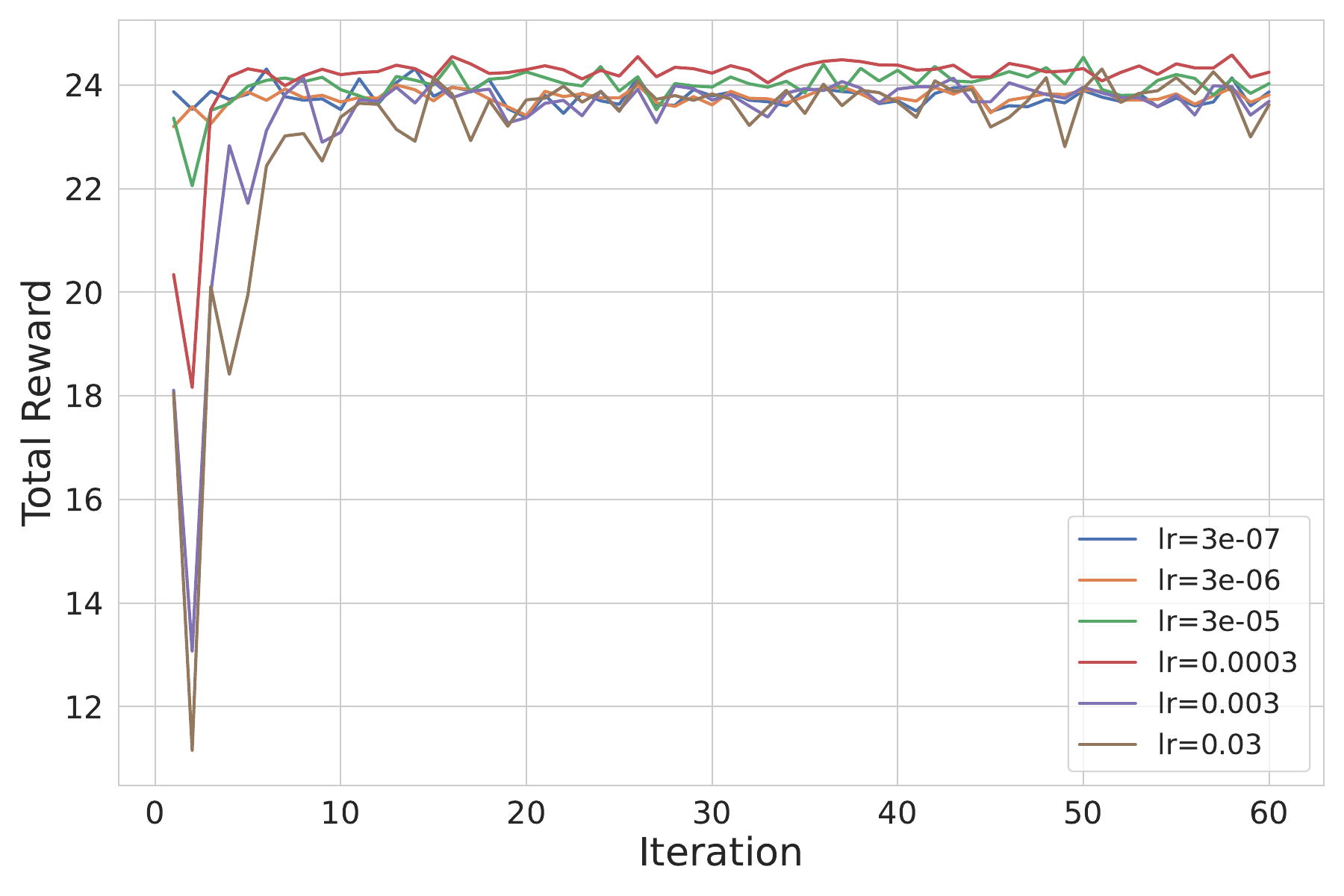}
    \caption{Total reward vs iterations for different learning rate of Deep RL in ~\ref{algo:1} in the center environment}
    \label{fig: hyper_sweep_algo1}
\end{figure}

\begin{figure}
    \centering
    \includegraphics[width=1\linewidth]{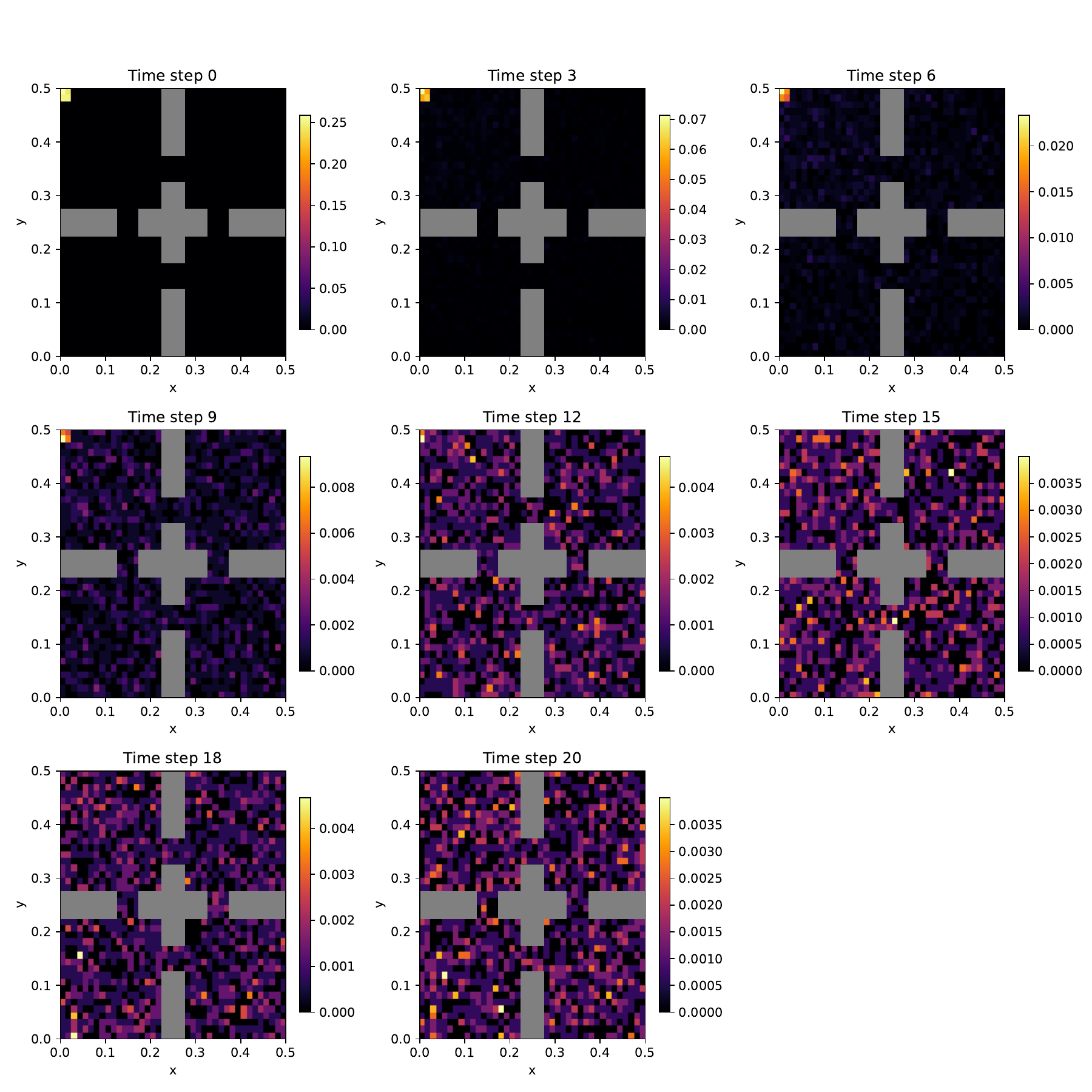}
    \caption{\textbf{4 rooms explorations}. Nash Equilibrium mean field flow obtained by \ref{algo:2} sampling $1500$ trajectories}
\label{fig: 4r_flow_algo2}
\end{figure}
\begin{figure}
    \centering
    \includegraphics[width=1\linewidth]{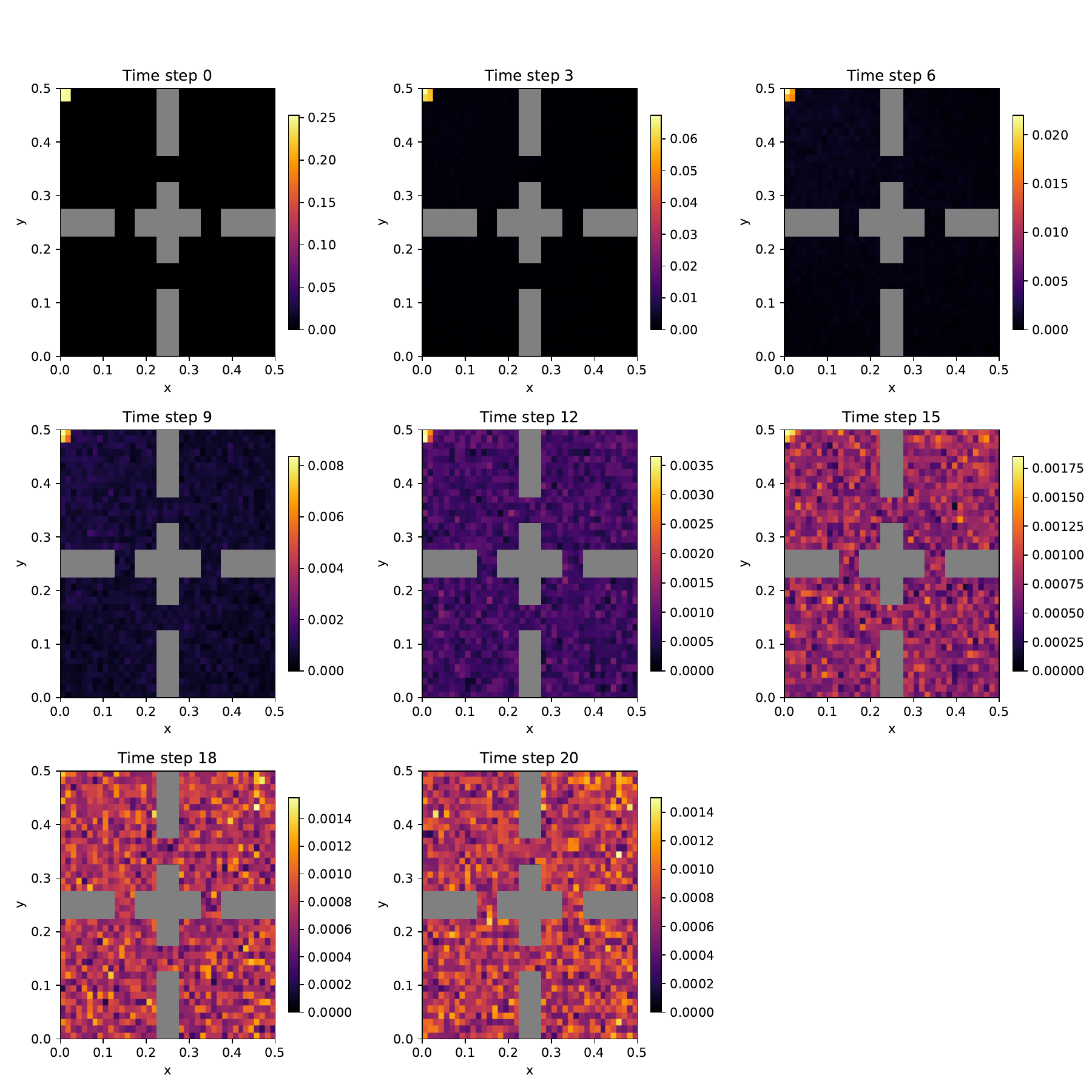}
    \caption{\textbf{4 rooms explorations}. Nash Equilibrium mean field flow obtained by \ref{alg:main-algo} sampling $8000$ trajectories $10x$ faster than \ref{algo:2} and \ref{algo:1}.}
\label{fig: 4r_flow_algo3}
\end{figure}
\end{document}